\documentclass[12pt, draftclsnofoot, onecolumn]{IEEEtran}
\usepackage{indentfirst}
\usepackage{hyperref}
\usepackage{algorithm}
\usepackage{algorithmicx}  
\usepackage{algpseudocode}
\usepackage{amsmath}
\usepackage{parskip}
\usepackage{listings}
\usepackage{indentfirst}
\usepackage{graphicx}
\usepackage{float}
\usepackage{extarrows}
\usepackage{amssymb}
\usepackage{amsthm}
\usepackage{epstopdf}
\usepackage{caption}
\usepackage{subcaption}
\usepackage{color}
\usepackage{todonotes}
\usepackage{tikz,pgfplots}
\usepackage{xcolor}
\usetikzlibrary{automata,positioning}
\usetikzlibrary{arrows,shapes,chains}
\usepgflibrary{patterns}
\newtheorem{lemma}{Lemma}

\newtheorem{theorem}{Theorem}
\newtheorem{definition}{Definition}

\newtheorem{remark}{Remark}
\newtheorem{assumption}{Assumption}
\newtheorem{example}{Example}

\begin{document}

\title{Containing a spread through sequential learning: to exploit or to explore?}
\date{}

\author{ Xingran Chen, \IEEEmembership{}
			  Hesam Nikpey, \IEEEmembership{}
			 Jungyeol Kim, \IEEEmembership{}
			 Saswati Sarkar, \IEEEmembership{}
			 Shirin Saeedi-Bidokhti \IEEEmembership{}
	
			\IEEEcompsocitemizethanks
			{\IEEEcompsocthanksitem Xingran Chen, Saswati Sarkar, and Shirin Saeedi-Bidokhti are with the Department of Electrical and System Engineering,  University of Pennsylvania.\quad Email: \{xingranc, swati, saeedi\}@seas.upenn.edu
			\IEEEcompsocthanksitem Hesam Nikpey is with the Deparment of Computer and Information Science, University of Pennsylvania. \quad Email: hesam@seas.upenn.edu
			\IEEEcompsocthanksitem Jungyeol Kim is with the JPMorgan Chase \& Co. \quad Email: jungyeol@alumni.upenn.edu}
			\thanks{This paper has been accepted by Transactions on Machine Learning Research (TMLR).}

} 

\maketitle
	
\begin{abstract} 
	The spread of an undesirable contact process, such as an infectious disease (e.g. COVID-19), is contained through testing and  isolation of infected nodes. The temporal and spatial evolution of the process (along with containment through isolation) render such detection as fundamentally different from active search detection strategies. In this work, through an active learning approach, we design testing and isolation strategies to contain the spread and minimize the cumulative infections under a given test budget. We prove that the objective can be optimized, with performance guarantees,  by greedily selecting the nodes to test.  We further design reward-based methodologies that effectively minimize an upper bound on the cumulative infections and are computationally more tractable in large networks. These policies, however, need knowledge about the nodes' infection probabilities which are dynamically changing and have to be learned by sequential testing. We develop a message-passing framework for this purpose and, building on that,  show novel tradeoffs between exploitation of knowledge through reward-based heuristics and exploration of the unknown through a carefully designed probabilistic testing. The tradeoffs are fundamentally distinct from the classical counterparts under active search or multi-armed bandit problems (MABs).
	We provably show the necessity of exploration in a stylized network and show through simulations that exploration can outperform exploitation in various synthetic and real-data networks depending on the parameters of the network and the spread.
\end{abstract}
	
	\section{Introduction}\label{sec: introduction}
	
	We consider learning and decision making in networked systems for processes that evolve both temporally and spatially. An important example in this class of processes is COVID-19 infection. It evolves in time (e.g. through different stages of the disease for an infected individual) and over a contact network and its spread  can be contained by testing and isolation. Public health systems need to judiciously decide who should be tested and isolated in presence of limitations on the number of individuals who can be tested and isolated on a given day.
	
	Most existing works on this  topic have  investigated the spread of COVID-19 through dynamic systems such SIR models and their variants \cite{BSLSZA1998, PTJC1996, LSBSZA2000, YTWMEB2000, JLA1988, LJSA1994}. These models are made more complex to fit the real data in \cite{AMMAB2021, AGG2020, ASRPSN2020, BMNLT2020, ZJS2020, CMRKBKM2020}. Estimation of the model parameters by learning-based methods are considered and verified by real data in \cite{EBP2020, BMNLTDS2020, IRAGPAFC2021, GHJG2021, RVLFRG2022, HBKDVG2021}. Other attributes such as lockdown policy \cite{SAAMMK2020}, multi-wave prediction \cite{GPDSPSL2020}, herd immunity threshold \cite{SCSRIC2021} are also considered by data-driven experiments. 
	These works mostly focus on  the estimation of model parameters thorough real data, and aim to make a more accurate prediction of the spread. None of
	them, however, consider testing and isolation policies.  {Our work complements these investigations by designing sequential testing and isolation policies in order to minimize the cumulative infections}.  For this purpose, we have assumed full statistical knowledge of the spread model and the underlying contact network and we are not concerned with prediction and estimation of model parameters.

	Designing optimal testing and control policies in dynamic networked systems often involves computational challenges. These challenges have been alleviated in control literature by capturing the spread through differential equations  \cite{WCES2021, AD2021, ALGB2020, tsay2020modeling,piguillem2020optimal}. The differential equations rely on classical mean-field approximations, considering  neighbors of each node as  “socially averaged hypothetical neighbors”.  Refinements of the mean-field approximations such as pair approximation \cite{kuga2021pair}, degree-based approximation \cite{kabir2020impact}, meta-population approximation \cite{kabir2019evolutionary} etc, all resort to some form of averaging  of neighborhoods or more generally groups of nodes. The averaging does not capture the heterogeneity of a real-world complex social network and in effect disregards the contact network topology. But, in practice, the contact network topologies are often  partially known, for example, from contact tracing apps that individuals launch on their phones.  Thus testing and control strategies must exploit the partial topological information to control the spread. The most widely deployed testing and control policy, the (forward and backward) contact tracing (and its variants)  \cite{LWNH2021, JKXCSSBSS, aleta2020modelling, hellewell2020feasibility, kucharski2020effectiveness, KSHDLM2021, Lacent2020, APMCJGM2020, OU2020}, relies on partial knowledge of the network topology (ie, the neighbors of infectious nodes who have been detected), and therefore does not lend itself to mean-field analysis. Our proposed framework considers both the SIR evolution of the disease for each node and the spread of the disease through a given network.
	
	The following challenges arise in the design of intelligent testing strategies if one seeks to exploit the spatio-temporal evolution of the disease process and comply with limited testing budget. Observing the state of a node at time $t$ will provide information about  the state of (i) the node in time $t+1$ and (ii) the neighbors of the node at time $t, t+1, \ldots$. This is due to the inherent correlation that exists between  states of neighboring nodes because an infectious disease spreads through contact. Thus, testing has a dual role. It has to both detect/isolate infected nodes and learn the spread in various localities of the network. The spread can often be silent: an undetected  node (that may not be particularly likely to be infected based on previous observations) can infect its neighbors. Thus, testing nodes that do not necessarily appear to be infected may lead to timely discovery of even larger clusters of infected nodes waiting to explode. In other words, there is \emph{an intrinsic tradeoff between  exploitation of knowledge~vs.~exploration of the unknown}.  
	Exploration~vs.~exploitation tradeoffs were originally studied in  classical multi-armed bandit (MAB) problems where there is the notion of a single optimal arm that can be found by repeating a set of fixed actions \cite{PANCBPF2002, SANG2011, SBNCB2012}. MAB testing strategies have also been designed for exploring partially observable networks \cite{KMTM2019}. Our problem differs from what is mainly studied in the MAB literature because (i) the number of arms (potential infected nodes) is time-variant and actions cannot be repeated; (ii) the exploration vs. exploitation tradeoff in our context arises due to lack of knowledge about the time-evolving  set of infected nodes, rather than lack of knowledge about the network or the process model and its parameters. 
	
	Note that contact tracing policies are in a sense exploitation policies: upon finding positive nodes, they exploit that knowledge and trace the contacts. While relatively practical, they have two main shortcomings, as implemented today: (i) They are not able to  prioritize nodes based on their likelihood of being infected (beyond the coarse notion of contact or lack thereof). For example, consider an infectious node that has two neighbors, with different degrees. Under current contact tracing strategies, both neighbors have the same status. But in order to contain the spread as soon as possible, the node with a large degree should be prioritized for testing. A similar drawback becomes apparent if the neighbors themselves  have a different number of infectious neighbors; one with a larger number of infectious neighbors should be prioritized for testing, but current contact tracing strategies accord both the same priority.  (ii) Contact tracing strategies do not incorporate any type of exploration. This may be a fundamental limitation of contact tracing.  \cite{OU2020} has shown that, with high cost, contact tracing policies perform better when they incorporate exploration (active case finding). In contrast, our work provides a probabilistic framework to not only allow for exploitation in a fundamental manner but also to incorporate exploration in order to minimize the number of infections.

	Finally, our problem is also related to  active search in graphs where the goal is to test/search for a set of (fixed) target nodes under a set of given (static) similarity values between pairs of nodes \cite{MBLMLG2010, XWRGJS2013, YMTKHJS2015, RGYKDWJS2011}. But the target nodes in these works are assumed  fixed, whereas the target is dynamic in our setting because the infection spreads over time and space (i.e, over the contact network). Thus,  a node may need to be tested multiple times. The importance of exploitation/exploration is also known,  implicitly and/or explicitly,  in various reinforcement learning literature \cite{DZDCXT2018, BY18, BM16}.
	
We now distinguish our work from testing strategies that combine exploitation and exploration in some form  \cite{RFNS2020,GH2020, EMHMSMGC2021}.  Through a theoretical approach, \cite{RFNS2020} models the testing problem as a  partially observable Markov decision process (POMDP). An optimal policy can, in principle, be formulated through POMDP, but such strategies are intractable in their general form (and heuristics are often far from optimal) \cite{KLM1998, MG82}. \cite{RFNS2020} devises tractable approximate algorithms with a significant caveat: In the design, analysis, and evaluation of the proposed algorithms, it is assumed that at each time the process can spread only on a single random edge of the network. This is a very special case that is hard to justify in practice and it is not clear how one could go beyond this assumption. On the other hand, \cite{GH2020} proposes a heuristic by implementing classical learning methods such as Linear support vector machine (SVM) and Polynomial SVM to rank nodes based on a notion of risk score (constructed by real-data) while reserving a portion of the test budget for random testing which can be understood as exploration. No spread model or contact network is assumed.  \cite{EMHMSMGC2021} and this work were done concurrently. In \cite{EMHMSMGC2021}, a tractable scheme to control dynamical processes on temporal graphs was proposed, through a POMDP solution with a combination of Graph Neural Networks (GNN) and Reinforcement Learning (RL) algorithm. Nodes are tested based on some scores obtained by the sequential learning framework, but no fundamental probabilities of the states of nodes were revealed. Different from \cite{GH2020, EMHMSMGC2021}, our approach is model-based and we observe novel exploration-exploitation tradeoffs that arise not due to a lack of knowledge about the model or network, but rather because the set of infected nodes is unknown and evolves with time. We can also utilize knowledge about both the model and the contact network to devise a probabilistic framework for decision making.
		
		We now summarize the contribution of some significant works that consider only exploitation and do not utilize any exploration \cite{Lacent2020, APMCJGM2020, OU2020}.  \cite{Lacent2020, APMCJGM2020} have considered a combination of isolation and contact tracing sequential policies, and \cite{Lacent2020} has shown that the sequential strategies would reduce transmission more than mass testing or self-isolation alone, while \cite{APMCJGM2020} has shown that the sequential strategies can reduce the amount of quarantine time served  and cost, hence individuals may increase participation in contact tracing, enabling less frequent and severe lockdown measures. \cite{OU2020} have proposed a novel approach to modeling multi-round network-based screening/contact tracing under uncertainty.

	\paragraph{Our Contributions}
	In this work, we study a spread process such as  Covid-19 and  design sequential testing and isolation policies to contain the spread. Our contributions are as follows.
	
	\begin{itemize}
		\item Formulating the spread process through a compartmental model and a given contact network, we show that the problem of minimizing the total cumulative infections under a given test budget reduces to minimizing  a supermodular function expressed in terms of nodes' probabilities  of infection and it thus admits a \emph{near-optimal greedy} policy. We further design reward-based algorithms that  minimize an upper bound on the  cumulative infections and are computationally more tractable in large networks.

		\item The greedy policy and its reward-based derivatives are applicable if nodes' probabilities  of infection were known. However, since the set of infected nodes are unknown, these probabilities are unknown and can only be learned through \emph{sequential testing}. We provide a {message-passing framework} for sequential estimation of nodes' \emph{posterior} probabilities of infection given the history of test observations.
		
		\item We argue that testing has a dual role: (i) discovering and isolating the infected nodes to contain the spread, and (ii) providing more accurate estimates for nodes' infection probabilities which are used for decision making. In this sense, \emph{exploitation} policies in which decision making only targets (i) can be suboptimal. We prove in a stylized network that when the belief about the probabilities is wrong, exploitation can be arbitrarily bad, while a policy that combines exploitation with random testing  can contain the spread. This points to novel \emph{exploitation-exploration tradeoffs} that stem from the lack of knowledge about the location of infected nodes, rather than the network or  spread process.
		
		\item Following these findings, we propose  \emph{exploration} policies that test each node  probabilistically according to its reward. The core idea is to balance exploitation of knowledge (about the nodes' infection probabilities and the resulting rewards) and exploration of the unknown (to get more accurate estimates of the infection probabilities). 
		Through simulations, we compare the performance of exploration and exploitation policies in several synthetic and real-data networks. In particular, we investigate the role of  three parameters on when exploration outperforms exploitation: (i)  the \emph{unregulated delay}, i.e., the time period when the disease spreads without intervention; (ii) the \emph{global clustering coefficient} of the  network, and (iii) the average \emph{shortest path length} of the  network. We show that when the above  parameters increase, exploration becomes more beneficial  as it provides better estimates of the nodes' probabilities of infection.
		
	\end{itemize}

	\section{Modeling}\label{sec: Modeling}
	To describe a spread process, we use a {\it discrete time compartmental  model} \cite{compartmental}. Over decades, compartmental models have been key in the study  of epidemics and opinion dynamics, albeit often disregarding the network topology. In this work, we capture the spread on a given contact network. For clarity of presentation, we focus on a model for the spread of COVID-19. The ideas can naturally be generalized to other applications. The main notations in the full paper are given in Table~\ref{tabla: notation}.
	
	\begin{table}[!htbp]
		\centering
		\begin{tabular}{|c|c|}
			\hline  
			Notations & Definitions\\
			\hline
			$\beta$ &  transmission probability\\
			\hline
			$1/\gamma$ &  mean duration in the latent state\\
			\hline  
			$1/\lambda$&  mean duration in the infectious state\\
			\hline
			$\sigma_i(t)$ &  state of node $i$ at time $t$, $\sigma_i(t)\in\{I, S, L, R\}$\\
			\hline
			$\mathcal{G}(t)$ & contact network at time $t$\\
			\hline
			$\mathcal{V}(t)$ &  set of nodes at time $t$\\
			\hline 
			$\mathcal{E}(t)$ &  set of edges at time $t$\\
			\hline 
			$N(t)$ &  cardinality of $\mathcal{V}(t)$\\
			\hline 
			$N$ & $N = N(0)$\\
			\hline 
			$\partial_i(t)$ &  neighbors of node $i$ at time $t$\\
			\hline
			$\partial^+_i(t)$ & $\{i\}\cup\partial_i(t)$\\
			\hline 
			$Y_i(t)$ &  testing result of node $i$ at time $t$\\
			\hline 
			$\mathcal{O}(t)$ &  set of nodes tested at time $t$\\
			\hline 
			$\underline{Y}(t)$ & $\{Y_i(t)\}_{\{i\in\mathcal{O}(t)\}}$\\
			\hline 
			$B(t)$ &  testing budget at time $t$\\
			\hline 
			$\pi$ & a testing and isolation policy\\
			\hline 
			$C^\pi(t)$ &  cumulative infections at time $t$\\
			\hline 
			$\mathcal{K}^\pi(t)$ &  set of nodes tested at time $t$ (under policy $\pi$)\\
			\hline 
			$K^\pi(t)$ & $K^\pi(t)=|\mathcal{K}^\pi(t)|$\\
			\hline 
			$T$ &  time horizon\\
			\hline 
			$\underline{v}_i(t)$ &  true probability vector of node $i$\\
			\hline 
			$\underline{u}_i(t)$ &  prior probability vector of node $i$\\
			\hline 
			$\underline{w}_i(t)$ &  posterior probability vector of node $i$\\
			\hline 
			$\underline{e}_i(t)$ &  updated posterior probability vector of node $i$\\
			\hline 
			$r_i(t)$ &  rewards of selecting node $i$ at time $t$\\
			\hline 
			$\hat{r}_i(t)$ &  estimated rewards of node $i$ at time $t$\\
			\hline 
			$\Psi_i(t)$ & $\Psi_i(t) = \mathcal{O}(t)\cap\partial^+_i(t-1)$\\
			\hline 
			$\Phi_i(t)$ & $\Phi_i(t) = \{j|j\in\partial^+_k(t-1), k\in\Psi_i(t)\}\backslash\{i\}$\\
			\hline 
			$\theta_i(t)$ & $\theta_i(t) = \sigma_i(t)|_{\{\underline{Y}(\tau)\}_{\tau=1}^{t-1}}$, $\theta_i(t)\in\{I, S, L, R\}$\\
			\hline 
			$\zeta_i(t)$ & $\zeta_i(t) = \sigma_i(t)|_{\{\underline{Y}(\tau)\}_{\tau=1}^{t}}$, $\zeta_i(t)\in\{I, S, L, R\}$\\
			\hline 
		\end{tabular}
		\caption{Summary of main notations}\label{tabla: notation}
	\end{table}

	We model the progression of Covid-19 per individual, in time, through four stages or states: {\it Susceptible ($S$)}, {\it Latent ($L$)},  {\it Infectious ($I$)}, and {\it Recovered ($R$)}.  Per contact, an infectious individual infects a susceptible individual with  transmission probability  $\beta$.
	An infected individual is initially in the latent  state $L$, subsequently he becomes infectious (state $I$), finally he recovers (state $R$). Fig.~\ref{parameters}~(left) depicts the evolution. The durations in the latent and infectious states are geometrically distributed, with means $1/\lambda, 1/\gamma$ respectively. 
	We represent the state of node $i$ at time $t$ by random variable $\sigma_i(t)$ and its support set $\mathcal{X}= \{S,  L, I, R\}$. We assume that the parameters $\beta$, $\lambda$ and $\gamma$ are known to the public health authority. This is a practical assumption because the parameters can be estimated by the public health authority based on the pandenmic data collected \cite{MSZY2020, ABSJ2020,  contact}.
	\begin{figure}[t!]
		\centering
		\includegraphics[width=0.4\textwidth]{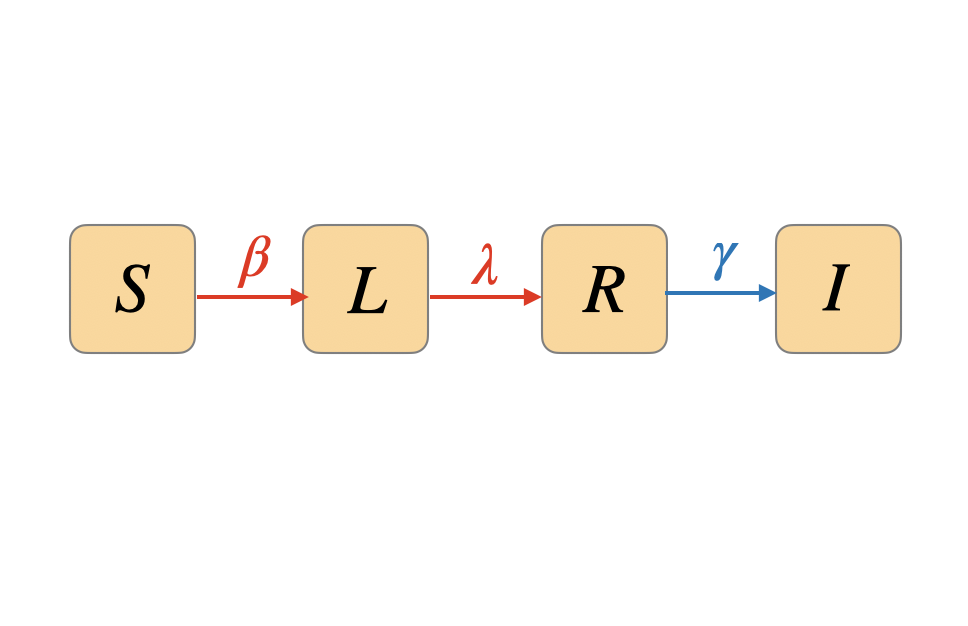}\quad\quad\quad
		\includegraphics[width=0.35\textwidth]{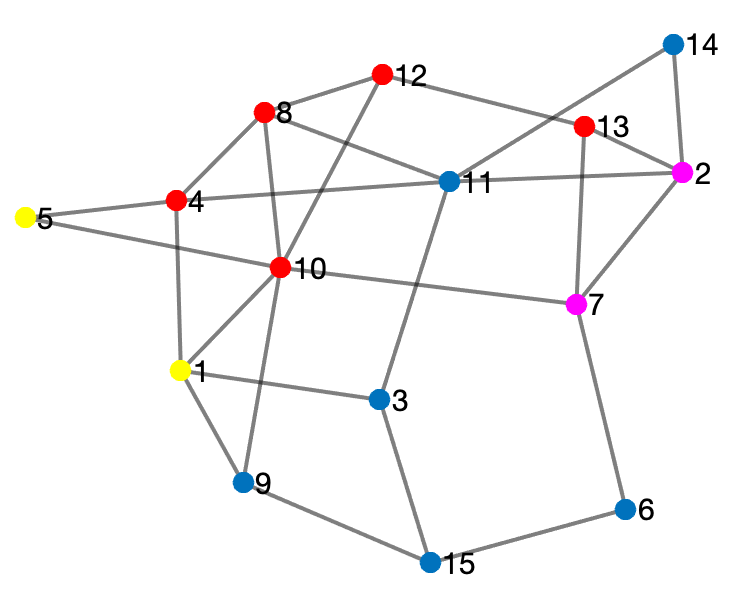}   
		\caption{Left: Time evolution of the process per individual nodes. Right: A contact network with nodes in states susceptible (blue), latency  (pink), infectious  (red),  recovered (yellow).}
		\label{parameters}
	\end{figure}
	
	Let $\mathcal{G}(t)=(\mathcal{V}(t),\mathcal{E}(t))$ denote the contact network at time $t$, where $\mathcal{V}(t)$ is the set of nodes/individuals, of cardinality $N(t)$, and $\mathcal{E}(t)$ is the set of edges between the nodes, describing interactions/contacts on day $t$. Let $\mathcal{V}=\mathcal{V}(0)$, $\mathcal{E} = \mathcal{E}(0)$, $\mathcal{G} = \mathcal{G}(0)$, and $N = N(0)$. The network is time-dependent not only because interactions change on a daily basis, but also because  nodes may be tested and isolated. If a node is tested positive on any day $t$, it will be isolated immediately. If a node is isolated on any day $t$, we assume that it remains in isolation until he recovers. We assume that a recovered node can not be reinfected again. Thus a node that is isolated on any day $t$ has no impact on the network from then onwards. Such nodes can be regarded as ``removed''. Therefore, it is removed from the contact network for all subsequent times $t, t+1,\ldots$.  Fig.~\ref{parameters}~(right) depicts a contact network at a given time~$t.$ We assume that a public health authority knows the entire contact network and decides who to test based on this information. This assumption has been made in several other works in this genre eg in \cite{OU2020}.
	
	Denote the set of neighbors of node $i$, in  day $t$, by  $\partial_i(t)$.
	The state of each node at time $t+1$ depends on the state of its neighbors $\partial_i(t)$, as well as its own state in day $t$, as given by the following conditional probability:  
	$$\Pr\big(\sigma_i(t+1)|\{\sigma_j(t)\}_{j\in\partial^+_i(t)}\big)$$
	where $\partial^+_i(t) = \partial_i(t)\cup\{i\}$.

	Node $i$ is tested positive on day $t$ if it is in the infectious state ($I$)\footnote{We assume that a node in the latent state $L$ is infected, but not infectious. We further assume that latent nodes test negative.}. Let  $Y_i(t)$ denote the test result:
	\begin{align}
		Y_i(t)=\left\{\begin{array}{cc}1&\sigma_i(t)=I\\0&\sigma_i(t)\in\{S,L,R\}. \end{array}\right.
	\end{align}	
	We do not assume any type of error in testing and  $Y_i(t)$ is hence a deterministic function of $\sigma_i(t)$.
	Let  $\mathcal{O}(t)$ be the set of nodes that have been tested (observed) in day $t$ and denote  the {\it network observations} at time~$t$ by $\underline{Y}(t) = \{Y_i(t)\}_{i\in\mathcal{O}(t)}$.

	Our goal in this paper is to design testing and isolation strategies in order to contain the spread and minimize the cumulative infections.
	Naturally, testing resources (and hence observations) are often limited and such constraints make decision making challenging.
	Let $B(t)$ be the maximum number of tests that could be performed on day $t$, called the {\it testing budget}.  $B(t)$ can evolve based on the system necessities, e.g., in contact tracing that is widely deployed  for COVID-19, the number of tests is chosen based on the history of observations\footnote{In practical implementations, scheduling constraints do play a role but we disregard that in this work.}.
	Also, governments often upgrade testing infrastructure as the number of cases increase. Our framework captures both fixed and time-dependent budget $B(t)$, but we focus on time-dependent $B(t)$ for  simulations.
	
	Define
	the cumulative infections on day $t$, denoted by $C^\pi(t)$, as the number of nodes who have been infected before and including day $t$,
	where $\pi$ is the testing and isolation policy.
	Let $\mathcal{K}^{\pi}(t)$ denote the set of tests $\pi$ performs on day $t$. Given a large time horizon $T$, our objective is: 
	\begin{equation}\label{eq: goal1}
		\begin{aligned}
			&\min_{\pi}&\quad&\mathbb{E}[C^\pi(T)]\\
			&s.t.&\quad& |\mathcal{K}^{\pi}(t)|\leq B(t),\, 0\leq t\leq T-1.
		\end{aligned}
	\end{equation}
	
	Recall that $\sigma_i(t)$, the state of node $i$ on day $t$, is a random variable and unknown. For each node $i$, define a probability vector  $\underline{v}_i(t)$ of size $|\mathcal{X}|$, where each coordinate is the probability of the node being in a particular state at the end of time $t$. The coordinates of $\underline{v}_i(t)$ follow the order $(I, L, R, S)$ and we have
	\begin{align}\label{eq: rowvectorv}
		\underline{v}_i(t) =&  \big[v_x^{(i)}(t)\big]_{ x\in\mathcal{X}},\qquad v_{x}^{(i)}(t)= \Pr\big(\sigma_i(t) = x\big).
	\end{align}
	For example, $v_{S}^{(i)}(t)$ represents the probability of node $i$ being in state $S$ in time $t$.
	We now define $F_i(\mathcal{D};t)$ to be the {\it conditional} probability of node~$i$ being infected by nodes in $\mathcal{D}$ (for the first time) at day $t$, as a function of the nodes' states $\{\sigma_i(t)\}_{i\in\mathcal{V}(t)}$. We have
	\begin{equation}\label{eq: fiK(t)}
		\begin{aligned}
			F_i(\mathcal{D}; t)= 1_{\{\sigma_i(t)=S\}}\cdot\prod_{j\in\partial_i(t)\backslash\mathcal{D}}\big(1-\beta 1_{\{\sigma_j(t)=I\}}\big)\cdot\big(1 - \prod_{j\in\mathcal{D}\cap\partial_i(t)}(1-\beta 1_{\{\sigma_j(t)=I\}})\big).
		\end{aligned}
	\end{equation}
	
	Equation (\ref{eq: fiK(t)}), captures the impact that the nodes in $\mathcal{D}$ have on infecting node $i$ at day $t$. In this equation  we assume that the infections from different nodes are independent. The same assumption has also been made in several other papers in this genre, eg in
		\cite{kuga2021pair, kabir2020impact, kabir2019evolutionary}.
	Then, we find the expectation (with respect to $\{\sigma_i(t)\}_{i\in\mathcal{V}(t)}$) of (\ref{eq: fiK(t)}) as follows:
	\begin{equation}\label{eq: fEF}
		\begin{aligned}
			\mathbb{E}_{\{\sigma_i(t)\}_{i\in\mathcal{V}(t)}}\left[F_i(\mathcal{D}; t)\right]= v_S^{(i)}(t)\cdot\big\{\prod_{j\in\partial_i(t)\backslash\mathcal{D}}(1-\beta v_I^{(j)}(t))\big\}\cdot\big\{1 - \prod_{j\in\mathcal{D}\cap\partial_i(t)}(1-\beta v_I^{(j)}(t))\big\}.
		\end{aligned}
	\end{equation}
	It is worth noting that (\ref{eq: fiK(t)}) is a probability conditioned on $\{\sigma_i(t)\}_{i\in\mathcal{V}(t)}$, while (\ref{eq: fEF}) is an unconditional probability. To obtain (5), we have indeed assumed that the states of the nodes are independent. This assumption does not hold in general and we only utilize it here to obtain a simple expression in (5) in terms of the infection probabilities. We do not use this independence assumption in the rest of the paper.
	Define 
	\begin{align}\label{eq: supmodular}
		S\big(\mathcal{D}; t\big) = \sum_{i\in\mathcal{V}(t)} \mathbb{E}\left[F_i(\mathcal{D}; t)\right].
	\end{align}
	Here, $S\big(\mathcal{D}; t\big)$ represents the (expected) number  of newly infectious nodes  incurred by nodes in $\mathcal{D}$ at day $t$.
	Recall that $\mathcal{K}^\pi(t)$ be the set of nodes that are tested at time $t$. We show the following result in  Appendix~\ref{App: lemma telescopic sum}.
	\begin{lemma}\label{lem: telescopic sum}
		$\mathbb{E}\big[C^\pi(t+1) - C^\pi(t)\big] = S\big(\mathcal{V}(t)\backslash\mathcal{K}^\pi(t); t\big)$.
	\end{lemma}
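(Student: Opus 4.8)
The plan is to prove the per-step identity by linearity of expectation, reducing the increment to a sum of per-node one-step infection probabilities and then matching each term to $\mathbb{E}[F_i(\,\cdot\,;t)]$. First I would write the increment as a sum of indicators. Since a node contributes to $C^\pi$ exactly once, at the instant it leaves the susceptible state, we have $C^\pi(t+1)-C^\pi(t)=\sum_{i\in\mathcal{V}(t)} 1_{\{\sigma_i(t)=S,\ \sigma_i(t+1)\neq S\}}$, and a susceptible node can only transition to $L$ in one step, so the event $\{\sigma_i(t+1)\neq S\}$ is exactly ``$i$ is newly infected.'' Taking expectations and using linearity, it suffices to show, for each $i$, that $\Pr(\sigma_i(t)=S,\ \sigma_i(t+1)\neq S)=\mathbb{E}[F_i(\mathcal{V}(t)\setminus\mathcal{K}^\pi(t);t)]$; summing over $i$ then yields $S(\mathcal{V}(t)\setminus\mathcal{K}^\pi(t);t)$ by the definition (6).

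Next I would condition on the configuration of states $\{\sigma_j(t)\}_{j\in\mathcal{V}(t)}$ at time $t$ together with the history that determines the tested set $\mathcal{K}^\pi(t)$, so that the isolation outcome is fixed. Given this information, the only remaining randomness is the per-edge transmission, which by the independence assumed below (4) is an independent $\mathrm{Bernoulli}(\beta)$ trial along each edge emanating from an infectious node. The crucial step is to identify which neighbors of $i$ can still transmit: a neighbor tested positive is infectious and is therefore isolated and removed, while a neighbor tested negative is, by the error-free testing assumption, not in state $I$. Hence the only neighbors of $i$ that are simultaneously infectious and active are the untested ones, i.e., those in $\mathcal{D}:=\mathcal{V}(t)\setminus\mathcal{K}^\pi(t)$.

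Consequently, conditioned on the states, the probability that a susceptible $i$ becomes infected is $1_{\{\sigma_i(t)=S\}}\bigl(1-\prod_{j\in\mathcal{D}\cap\partial_i(t)}(1-\beta 1_{\{\sigma_j(t)=I\}})\bigr)$, and this is precisely $F_i(\mathcal{D};t)$ of (4) for this particular $\mathcal{D}$: the escape factor $\prod_{j\in\partial_i(t)\setminus\mathcal{D}}(1-\beta 1_{\{\sigma_j(t)=I\}})$ over the tested neighbors equals $1$, since each such active neighbor is non-infectious, so $1_{\{\sigma_j(t)=I\}}=0$. Taking expectation over the states $\{\sigma_j(t)\}$ and summing over $i\in\mathcal{V}(t)$ gives $\sum_i \mathbb{E}[F_i(\mathcal{D};t)] = S(\mathcal{D};t)$ by (6), which is the claim.

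The main obstacle is the careful bookkeeping in this middle step: one must make the timing of testing, isolation, and transmission within a single slot precise, and argue that removing the tested-positive neighbors is exactly what collapses the escape factor over $\partial_i(t)\setminus\mathcal{D}$ to $1$, so that the general attribution quantity $F_i(\mathcal{D};t)$ reduces to the \emph{true} one-step infection probability for the choice $\mathcal{D}=\mathcal{V}(t)\setminus\mathcal{K}^\pi(t)$. A secondary subtlety is that $\mathcal{K}^\pi(t)$ is itself determined by the (random) observation history; I would handle this by first establishing the identity conditionally on that history and then taking expectations, using that the step-$t$ transmission trials are independent of the past given the current states.
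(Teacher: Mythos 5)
Your proposal is correct and follows essentially the same route as the paper's proof: identify the one-step increment with the set of newly infected susceptible nodes, observe that tested-positive neighbors are isolated and tested-negative neighbors are non-infectious so only $\mathcal{D}=\mathcal{V}(t)\setminus\mathcal{K}^\pi(t)$ can transmit, and then take expectations to recover $S(\mathcal{D};t)$. Your version is somewhat more careful than the paper's terse argument — in particular the explicit check that the escape factor over $\partial_i(t)\setminus\mathcal{D}$ collapses to $1$ and the conditioning on the history that determines $\mathcal{K}^\pi(t)$ — but these are refinements of the same proof, not a different one.
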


	\subsection{Supermodularity}\label{sec: greedy max}
	
	It is complex to solve (\ref{eq: goal1}) globally, especially if one seeks to find solutions that are optimal looking into the future. 
	We thus simplify the optimization (\ref{eq: goal1}) for policies that are myopic in time as follows.
	First, note that $C^\pi(T)$ can be re-written as follows through a telescopic sum:
	\begin{align}
		\label{eq:CT}
		C^\pi(T)=\sum_{t=0}^{T-1} C^\pi(t+1)-C^\pi(t).
	\end{align}
	Then, we restrict attention to myopic policies that at each time minimize $\mathbb{E}\left[C^\pi(t+1)-C^\pi(t)\right]$. We  then 
	show how  $\mathbb{E}\left[C^\pi(t+1)-C^\pi(t)\right]$ can be expressed in terms of a supermodular function.

	Using (\ref{eq:CT}) along with Lemma~\ref{lem: telescopic sum}, we seek to solve the following optimization sequentially in time for $0\leq t\leq T-1$:
	\begin{equation}\label{eq: goal3}
		\begin{aligned}
			\min_{|\mathcal{K}^\pi(t)|\leq B(t)}\quad S\big(\mathcal{V}(t)\backslash\mathcal{K}^\pi(t); t\big).
		\end{aligned}
	\end{equation}
	
	We now prove some desired properties for the set function $S(\mathcal{K}^\pi(t); t)$ (see  Appendix~\ref{App: proof of supmodular}). 
	\begin{theorem}\label{thm: supmodular}
		$S\big(\mathcal{K}^\pi(t); t\big)$ defined in (\ref{eq: supmodular}) is a supermodular\footnote{Let $\mathcal{X}$ be a finite set. A function $f:\, 2^{\mathcal{X}}\to \mathbb{R} $ is supermodular if for any $\mathcal{A}\subset \mathcal{B}\subset \mathcal{X}$, and $x\in \mathcal{X}\backslash \mathcal{B}$,  $f(\mathcal{A}\cup \{x\})-f(\mathcal{A})\leq f(\mathcal{B}\cup \{x\})-f(\mathcal{B})$.} and increasing monotone function on $\mathcal{K}^\pi(t)$.
	\end{theorem}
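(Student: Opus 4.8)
The plan is to reduce the statement to a per-node computation and then exhibit the marginal gains of $S$ in closed form. Since $S(\mathcal{D};t)=\sum_{i\in\mathcal{V}(t)}\mathbb{E}[F_i(\mathcal{D};t)]$ is a sum over $i$ of the set functions $\mathcal{D}\mapsto\mathbb{E}[F_i(\mathcal{D};t)]$, and both supermodularity and monotonicity are preserved under summation, it suffices to prove that each summand is supermodular and monotone increasing in $\mathcal{D}$. Throughout I would abbreviate $a_j^{(i)}:=1-\beta v_I^{(j)}(t)$ and first record that $a_j^{(i)}\in[0,1]$, which holds because $\beta\in[0,1]$ and $v_I^{(j)}(t)$ is a probability. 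This is essentially the only place the modeling assumptions enter, and it is exactly what forces the product factors to behave monotonically.

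The key algebraic step is to rewrite (\ref{eq: fEF}). Factoring the last bracket gives
\begin{equation*}
\mathbb{E}[F_i(\mathcal{D};t)] = v_S^{(i)}(t)\Big[\prod_{j\in\partial_i(t)\setminus\mathcal{D}}a_j^{(i)}-\prod_{j\in\partial_i(t)}a_j^{(i)}\Big],
\end{equation*}
where the second product is a constant independent of $\mathcal{D}$. Since an additive constant affects neither supermodularity nor monotonicity, it then suffices to analyze $\phi_i(\mathcal{D}):=v_S^{(i)}(t)\prod_{j\in\partial_i(t)\setminus\mathcal{D}}a_j^{(i)}$. Adding an element $x$ leaves $\phi_i$ unchanged when $x\notin\partial_i(t)$, while for $x\in\partial_i(t)\setminus\mathcal{D}$ a short computation yields the marginal gain
\begin{equation*}
\phi_i(\mathcal{D}\cup\{x\})-\phi_i(\mathcal{D}) = v_S^{(i)}(t)\,\big(1-a_x^{(i)}\big)\prod_{j\in\partial_i(t)\setminus(\mathcal{D}\cup\{x\})}a_j^{(i)},
\end{equation*}
which I would derive simply by observing that passing from $\mathcal{D}$ to $\mathcal{D}\cup\{x\}$ deletes the single factor $a_x^{(i)}$ from the product. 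I prefer this factored form over the equivalent $\phi_i(\mathcal{D})\big(1/a_x^{(i)}-1\big)$ precisely to keep the argument division-free when $a_x^{(i)}$ may vanish.

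With this formula both claims are immediate. Monotonicity follows because every factor on the right-hand side is nonnegative, so each marginal gain is $\ge 0$, and summing over $i$ preserves the sign. Supermodularity follows from the defining inequality in the footnote: for $\mathcal{A}\subseteq\mathcal{B}$ and $x\notin\mathcal{B}$, the index set $\partial_i(t)\setminus(\mathcal{B}\cup\{x\})$ is contained in $\partial_i(t)\setminus(\mathcal{A}\cup\{x\})$, so the corresponding product has fewer factors at $\mathcal{B}$, and since each omitted factor lies in $[0,1]$ the product can only increase. Hence the marginal gain at $\mathcal{B}$ dominates that at $\mathcal{A}$ for every node $i$, and summing over $i\in\mathcal{V}(t)$ gives the increasing-differences property for $S$. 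I expect the only real obstacle to be bookkeeping: correctly separating the three regimes $x\notin\partial_i(t)$, $x\in\partial_i(t)\cap\mathcal{D}$, and $x\in\partial_i(t)\setminus\mathcal{D}$, tracking which factors belong to $\partial_i(t)\setminus\mathcal{D}$ versus $\mathcal{D}\cap\partial_i(t)$, and keeping the derivation division-free; once the marginal-gain identity is in hand, monotonicity and supermodularity both drop out of $a_j^{(i)}\in[0,1]$.
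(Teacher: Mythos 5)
Your proof is correct and follows essentially the same route as the paper's: decompose $S$ node by node, compute the marginal gain of adding $x$ in closed form (your $v_S^{(i)}(t)\,(1-a_x^{(i)})\prod_{j\in\partial_i(t)\setminus(\mathcal{D}\cup\{x\})}a_j^{(i)}$ is identical to the paper's $v_S^{(i)}(t)\,\beta v_I^{(x)}(t)\prod_{j}(1-\beta v_I^{(j)}(t))$), and compare the gains at $\mathcal{A}$ and $\mathcal{B}$ by noting the extra factors lie in $[0,1]$. Your difference-of-products rewriting and division-free comparison are a mild tidying of the paper's argument, which instead divides the two marginal gains and so implicitly skips the degenerate case where that ratio is $0/0$, but the substance is the same.
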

	On day $t$, and given the network, the probability vectors of all nodes, and $\mathcal{K}_1^\pi(t)\subset\mathcal{K}_2^\pi(t)$, for any node $i\notin\mathcal{K}_2^\pi(t)$, node $i$ will incur larger increment of newly infectious nodes under $\mathcal{K}_2^\pi(t)$ than that under $\mathcal{K}_1^\pi(t)$. This is because node $i$ may have common neighbors with nodes in $\mathcal{K}_2^\pi(t)$. So, supermodularity holds in Theorem~\ref{thm: supmodular}.

	The optimization (\ref{eq: goal3}) is NP-hard \cite{submodularbook}. However, using the supermodularity of $S\big(\mathcal{V}(t)\backslash\mathcal{K}^\pi(t); t\big)$, we propose Algorithm~\ref{alg: Greedy Algorithm} based on \cite[Algorithm~A]{V2001} to greedily optimize (\ref{eq: goal3}) in every day $t$. Denote the optimum solution of (\ref{eq: goal3}) as $\text{OPT}$. As proved in \cite{V2001}, on every day $t$, Algorithm~\ref{alg: Greedy Algorithm} attains a solution, denoted by $\tilde{\mathcal{K}}^\pi(t)$, such that $\big(\mathcal{V}(t)\backslash\tilde{\mathcal{K}}^\pi(t); t\big)\leq\big(1+\epsilon(t)\big)\cdot\text{OPT}$, i.e., the solution $\tilde{\mathcal{K}}^\pi(t)$ is an $\epsilon(t)$-approximation of the optimum solution. Here, on day $t$, the constant $\epsilon(t)$, which is the steepness of the set function $S(\cdot; t)$ as described in \cite{V2001}, can be calculated as follows, $\epsilon(t)=\frac{\epsilon'}{4(1-\epsilon')}$ and $\epsilon'=\max_{a\in\mathcal{V}(t)}\frac{S(\mathcal{V}(t); t)-S(\mathcal{V}(t)\backslash\{a\}; t)-S(\{a\}; t)}{S(\mathcal{V}(t); t)-S(\mathcal{V}(t)\backslash\{a\}; t)}$.

	In Algorithm~\ref{alg: Greedy Algorithm}, on every day $t$, in every step, we choose the node who provides the minimum increment on $S(\cdot; t)$ based on the results in the previous step, and then remove the node from the current node set. Algorithm~\ref{alg: Greedy Algorithm} is stopped when $K^\pi(t)$ nodes have been chosen.
	The complexity of this algorithm is discussed in Appendix~\ref{App: complexity of greedy}.
	\begin{algorithm}
		\caption{Greedy Algorithm}\label{alg: Greedy Algorithm}
		\begin{algorithmic}
			\State {\bf Step~0}: On day $t$, input $\{\underline{v}_i(t)\}_{i\in\mathcal{V}(t)}$, set $\mathcal{A}_0 = \mathcal{V}(t)$.
			\Repeat 
			\State {\bf Step i}: Let $\mathcal{A}_i=\mathcal{A}_{i-1}\backslash\{a_i\}$, where $$a_i=\arg\min_{a\in\mathcal{A}_{i-1}}S\big(\{a\}\cup\{a_1,\cdots,a_{i-1}\}; t\big).$$ 
			\Until $i=N(t)-|\mathcal{K}^\pi(t)|$, and return $\mathcal{K}^\pi(t)=\mathcal{A}_{i}$.
		\end{algorithmic}
	\end{algorithm}

	\section{Exploitation and Exploration}\label{sec: The Proposed Policies}
	
	In Section~\ref{sec: greedy max}, we  proposed a near-optimal greedy algorithm to sequentially (in time) select the nodes to test. However, Algorithm~\ref{alg: Greedy Algorithm} has two shortcomings. (i) The computation is costly when $N$ and/or $T$ are large (see Appendix~\ref{App: complexity of greedy}). (ii) The objective function $S\big(\mathcal{V}(t)\backslash\mathcal{K}^\pi(t)\big)$ is dependent on $\{\underline{v}_i(t)\}_{i\in\mathcal{V}(t)}$ which is unknown, even though the network and the process are stochastically fully given (see Section~\ref{sec: Modeling}). This is because the set of infected nodes are unknown and time-evolving. 
	
	To overcome the first shortcoming, we propose a simpler reward maximization policy by minimizing an upper bound on the objective function in (\ref{eq: goal3}). To overcome the second shortcoming, we estimate $\{\underline{v}_i(t)\}_{i\in\mathcal{V}(t)}$ using the history of test observations $\{\underline{Y}(\tau)\}_{\tau=0}^{t}$ (as presented in Section~\ref{sec: Posterior Probabilities}). we refer to the estimates as $\{\underline{u}_i(t)\}_{i\in\mathcal{V}(t)}$. Both the greedy policy and its reward-based variant that we will propose in this section thus need to perform decision making based on the estimates $\{\underline{u}_i(t)\}_{i\in\mathcal{V}(t)}$ and we refer to them as ``exploitation'' policies.
	
	It now becomes clear that testing has two roles: to find the infected in order to isolate them and contain the spread, and to provide better estimates of $\{\underline{v}_i(t)\}_{i\in\mathcal{V}(t)}$. This leads to interesting tradeoffs between exploitation and exploration as we will discuss next. Under exploitation policies, we test nodes deterministically based on a function of $\{\underline{v}_i(t)\}_{i\in\mathcal{V}(t)}$, (which is called ``reward'', and will be defined later); while under exploration policies, nodes are tested according to a probabilistic framework (based on rewards of all nodes).

	To simplify the decision making into reward maximization, we first derive an upper bound on $S\big(\mathcal{V}(t)\backslash\mathcal{K}^\pi(t); t\big)$. 
	Define 
	\begin{align}\label{eq: reward r}
		r_i(t) =  S\big(\{i\}; t\big).
	\end{align}
	Using the supermodularity of the function $S(\cdot)$, we prove the following lemma in Appendix~\ref{App: supermodularity}.
	\begin{lemma}\label{lem: upperbound}
		$S\big(\mathcal{V}(t)\backslash\mathcal{K}^\pi(t); t\big)\leq S\big(\mathcal{V}(t); t\big)-\sum_{i\in\mathcal{K}^\pi(t)}r_i(t)$.
	\end{lemma}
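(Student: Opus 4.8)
The plan is to rearrange the claimed inequality into a statement about the marginal gain of the set $\mathcal{K}^\pi(t)$ and then bound that gain from below by a telescoping argument driven by the supermodularity of Theorem~\ref{thm: supmodular}. Writing $\mathcal{V}=\mathcal{V}(t)$ and $\mathcal{K}=\mathcal{K}^\pi(t)$ for brevity, the asserted bound is equivalent to
\[
S(\mathcal{V};t) - S(\mathcal{V}\backslash\mathcal{K};t) \;\geq\; \sum_{i\in\mathcal{K}} r_i(t),
\]
so it suffices to show that the marginal gain of restoring the tested nodes $\mathcal{K}$ to the reduced set $\mathcal{V}\backslash\mathcal{K}$ dominates the sum of their individual (singleton) rewards $r_i(t)=S(\{i\};t)$.

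First I would enumerate $\mathcal{K}=\{i_1,\ldots,i_m\}$ and expand the left-hand side as a telescoping sum,
\[
S(\mathcal{V};t) - S(\mathcal{V}\backslash\mathcal{K};t) = \sum_{k=1}^{m}\Big[S\big(\mathcal{B}_k\cup\{i_k\};t\big) - S\big(\mathcal{B}_k;t\big)\Big],
\]
where $\mathcal{B}_k := (\mathcal{V}\backslash\mathcal{K})\cup\{i_1,\ldots,i_{k-1}\}$, so that the $k$-th summand is precisely the marginal gain of inserting $i_k$ into $\mathcal{B}_k$. The key step is to invoke supermodularity with $\mathcal{A}=\emptyset\subset\mathcal{B}_k$ and $x=i_k$ (note $i_k\notin\mathcal{B}_k$ by construction), which yields
\[
S\big(\mathcal{B}_k\cup\{i_k\};t\big) - S\big(\mathcal{B}_k;t\big) \;\geq\; S\big(\{i_k\};t\big) - S(\emptyset;t).
\]
To close the argument I would verify the normalization $S(\emptyset;t)=0$: setting $\mathcal{D}=\emptyset$ in (\ref{eq: fiK(t)}) makes the final factor $1-\prod_{j\in\emptyset\cap\partial_i(t)}(\cdots)=1-1=0$ (the empty product equals one), hence $F_i(\emptyset;t)=0$ for every $i$ and therefore $S(\emptyset;t)=0$. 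Each summand is thus at least $S(\{i_k\};t)=r_{i_k}(t)$, and summing over $k$ gives $S(\mathcal{V};t)-S(\mathcal{V}\backslash\mathcal{K};t)\geq\sum_{i\in\mathcal{K}}r_i(t)$, which is the claim.

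The argument is essentially routine once the telescoping is set up; there is no serious obstacle. The only points requiring care are the direction of the supermodular inequality (marginal gains \emph{grow} with the base set, so the gain at $\mathcal{B}_k$ dominates the gain at $\emptyset$) together with the normalization $S(\emptyset;t)=0$, which is exactly what makes the singleton reward $r_{i_k}(t)$ the correct lower bound for each term. A useful sanity check is that the final inequality is independent of the ordering chosen for $i_1,\ldots,i_m$, as it must be.
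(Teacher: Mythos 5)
Your proof is correct, but it organizes the supermodularity argument differently from the paper. The paper works with the lattice form of supermodularity, $S(X\cup Y;t)+S(X\cap Y;t)\geq S(X;t)+S(Y;t)$, and proceeds in two stages: first it splits $\mathcal{V}(t)$ into $\mathcal{V}(t)\backslash\mathcal{K}^\pi(t)$ and $\mathcal{K}^\pi(t)$ to get $S\big(\mathcal{V}(t)\backslash\mathcal{K}^\pi(t);t\big)\leq S\big(\mathcal{V}(t);t\big)-S\big(\mathcal{K}^\pi(t);t\big)$, and then it applies the same inequality repeatedly inside $\mathcal{K}^\pi(t)$ to show $S\big(\mathcal{K}^\pi(t);t\big)\geq\sum_{i\in\mathcal{K}^\pi(t)}r_i(t)$. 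You instead bypass the intermediate quantity $S\big(\mathcal{K}^\pi(t);t\big)$ entirely: you telescope the marginal gains of inserting the elements of $\mathcal{K}^\pi(t)$ one at a time into $\mathcal{V}(t)\backslash\mathcal{K}^\pi(t)$ and lower-bound each gain by the corresponding gain at the empty set, using the increasing-differences form of supermodularity exactly as stated in the paper's footnote. The two routes are logically equivalent (both reduce to the same property of $S(\cdot;t)$ and both need the normalization $S(\emptyset;t)=0$), but yours has two small merits: it is a single one-pass argument rather than a two-stage one, and you explicitly verify $S(\emptyset;t)=0$ from the empty product in (\ref{eq: fiK(t)}), a fact the paper uses implicitly without comment. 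Your observation that the bound is independent of the ordering of $i_1,\ldots,i_m$ is a good sanity check and holds because each summand is bounded below by an order-independent quantity.
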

	\begin{remark}
		Recall that $S(\cdot; t)$ is a supermodular function, then the amount of newly infectious nodes incurred by the set $\mathcal{K}^\pi(t)$, $S(\mathcal{K}^\pi(t); t)$, is larger than the sum of the amount of newly infectious nodes by every individual node in $\mathcal{K}^\pi(t)$, i.e., $\sum_{i\in\mathcal{K}^\pi(t)}r_i(t)$. Thus, $S\big(\mathcal{V}(t)\backslash\mathcal{K}^\pi(t); t\big)$ is upper bounded by $S\big(\mathcal{V}(t); t\big)-\sum_{i\in\mathcal{K}^\pi(t)}r_i(t)$.
	\end{remark}

	We propose to minimize the upper bound in Lemma~\ref{lem: upperbound} instead of  $S\big(\mathcal{V}(t)\backslash\mathcal{K}^\pi(t);t\big)$. Since $\mathcal{V}(t)$ is known and  $S\big(\mathcal{V}(t);t\big)$ is hence a constant, the problem reduces to solving: \begin{equation}\label{eq: goal4} 
		\begin{aligned} \max_{|\mathcal{K}^\pi(t)|\leq B(t)}\quad \sum_{i\in\mathcal{K}^\pi(t)}r_i(t). 
		\end{aligned} 
	\end{equation}
	Given  probabilities $\{\underline{v}_i(t)\}_{i\in\mathcal{V}(t)}$, the solution to (\ref{eq: goal4}) is  to pick the nodes associated with the $B(t)$ largest  values $r_i(t)$. We thus refer to $r_i(t)$ as the \emph{reward} of selecting node $i$. 
	
	Let $\{\underline{u}_i(t)\}_{i\in\mathcal{V}(t)}$ be an estimate for $\{\underline{v}_i(t)\}_{i\in\mathcal{V}(t)}$ found by estimating the conditional probability of the state of node $i$ given the history of observations $\{\underline{Y}(\tau)\}_{\tau=0}^{t-1}$. 
	Our proposed reward-based Exploitation (RbEx) policy follows the same idea of selecting the nodes with the highest rewards. Note that $\{\underline{v}_i(t)\}_{i\in\mathcal{V}(t)}$ is unknown to all nodes. Instead of using the true probabilities $\{\underline{v}_i(t)\}_{i\in\mathcal{V}(t)}$, we consider the estimates of it which we sequentially update by computing the prior probabilities $\{\underline{u}_i(t)\}_{i\in\mathcal{V}(t)}$ and the posterior probabilities $\{\underline{w}_i(t)\}_{i\in\mathcal{V}(t)}$.  In particular, $\{\underline{u}_i(0)\}_{i\in\mathcal{V}(0)}$ and $\{\underline{w}_i(0)\}_{i\in\mathcal{V}(0)}$  are the prior probabilities and the posterior probabilities on the initial day, respectively. Hence, we calculate the estimate of rewards, denoted by $\hat{r}_i(t)$, by replacing $\{\underline{v}_i(t)\}_{i\in\mathcal{V}(t)}$ with $\{\underline{u}_i(t)\}_{i\in\mathcal{V}(t)}$ in (\ref{eq: supmodular}) and (\ref{eq: reward r}).
	\begin{algorithm}
		\caption{Reward-based Exploitation (RbEx) Policy}\label{alg: reward-based exploitation}
		\begin{algorithmic}
			\State Input $\{\underline{w}_i(0)\}_{i\in\mathcal{V}(t)}$, $\{\underline{u}_i(0)\}_{i\in\mathcal{V}(0)}$, $\underline{Y}(0)$, and $t=0$.
			
			\noindent{\bf Repeat} for $t=1,2,\cdots,T-1$.
			\State {\bf Step 1}: Calculate $\{\hat{r}_i(t)\}_{i\in\mathcal{V}(t)}$ based on $\{\underline{u}_i(t)\}_{i\in\mathcal{V}(t)}$ and \eqref{eq: reward r}.
			\State {\bf Step 2}: Re-arrange the sequence $\{\hat{r}_i(t)\}_{i\in\mathcal{V}(t)}$ in descending order, and test the first $B(t)$ nodes. Get the new observations $\underline{Y}(t)$.
			\State {\bf Step 3}: Based on $\underline{Y}(t)$, update $\{\underline{u}_i(t+1)\}_{i\in\mathcal{V}(t+1)}$ by  Algorithm~\ref{alg: Backward Update} (Step 0 $\sim$ Step 2) in Section~\ref{sec: Posterior Probabilities}.
		\end{algorithmic}
	\end{algorithm}
	
	The shortcoming of Algorithm~\ref{alg: reward-based exploitation} is that it targets maximizing the estimated sum rewards, even though the estimates may be inaccurate. In this case, testing is heavily biased towards the history of testing and it does not provide opportunities for getting better estimates of the rewards. For example, consider a network with several clusters. If one positive node is known by Algorithm~\ref{alg: reward-based exploitation}, then it may get stuck in that cluster and fail to locate more positives in other clusters.
	
	In Section~\ref{sec: Exploiration and Exploration}, we will prove, in a line network, that the exploitation policy described in Algorithm~\ref{alg: reward-based exploitation} can be improved by a constant factor (in terms of the resulting cumulative infections) if a simple form of  exploration is incorporated.
	
	We next propose an exploration policy.
	Our proposed policy is probabilistic in the sense that the nodes are randomly tested with probabilities that are proportional to their corresponding estimated rewards.
	This approach has similarities and differences to Thompson sampling and more generally posterior sampling. The similarity lies in the probabilistic nature of testing using posterior probabilities. The difference is that in our setting decision making depends on the distributions of decision variables, but not  samples  of the decision variables.

	More specifically, at  time $t$, node $i$ is tested with probability $\min\{1, \frac{B(t)\hat{r}_i(t)}{\sum_{j\in\mathcal{V}(t)}\hat{r}_j(t)}\}$, which depends on the budget $B(t)$.
	Note that each node is tested with probability at most $1$; so if $\frac{B(t)\hat{r}_i(t)}{\sum_{j\in\mathcal{V}(t)}\hat{r}_j(t)}>1$ for some node $i$,  then we would not fully utilize the budget. The unused budget is thus
	\begin{align}\label{eq: compensatoryfactor}
		c(t) = \sum_{i\in\mathcal{V}(t)}\big(\frac{B(t)\hat{r}_i(t)}{\sum_{j\in\mathcal{V}(t)}\hat{r}_j(t)}-1\big)^+
	\end{align}
	and can be used for further testing\footnote{\label{ft}Note that $c(t)$ is not always an integer. Instead of $c(t)$, we use ${\tt Int}\big(c(t)\big)$ with probability $|{\tt Int}\big(c(t)\big) - c(t)|$ where ${\tt Int}(\cdot)\in\{\lfloor\cdot\rfloor, \lceil\cdot\rceil\}$.}.
	Algorithm~\ref{alg: reward-based exploration} outlines our proposed  Reward-based Exploitation-Exploration (REEr) policy. 
	
	\begin{algorithm}
		\caption{Reward-based Exploitation-Exploration (REEr) Policy}\label{alg: reward-based exploration}
		\begin{algorithmic}
			\State Input $\{\underline{w}_i(0)\}_{i\in\mathcal{V}(t)}$, $\{\underline{u}_i(0)\}_{i\in\mathcal{V}(0)}$, $\underline{Y}(0)$, and $t=0$.
			
			\noindent{\bf Repeat} for $t=1,2,\cdots,T-1$
			\State {\bf Step 1}: Calculate $\{\hat{r}_i(t)\}_{i\in\mathcal{V}(t)}$ based on $\{\underline{u}_i(t)\}_{i\in\mathcal{V}(t)}$ and \eqref{eq: reward r}.
			\State {\bf Step 2}:  Test node $i$ with probability $\min\{1, \frac{B(t)\hat{r}_i(t)}{\sum_{j\in\mathcal{V}(t)}\hat{r}_j(t)}\}$. After that, randomly select $c(t)$ \big(defined in (\ref{eq: compensatoryfactor})\big) further nodes to test (see Footnote~\ref{ft}). Get the new observations $\underline{Y}(t)$.
			\State {\bf Step 3}: Based on $\underline{Y}(t)$, update $\{\underline{u}_i(t+1)\}_{i\in\mathcal{V}(t+1)}$ by Algorithm~\ref{alg: Backward Update} (Step 0 $\sim$ Step 2) in Section~\ref{sec: Posterior Probabilities}.
		\end{algorithmic}
	\end{algorithm}

	\section{Message-Passing Framework}\label{sec: Posterior Probabilities}
	
	As discussed in Section~\ref{sec: The Proposed Policies}, the probabilities $\{\underline{v}_i(t)\}_i$ are unknown.  In this section, we develop a message passing framework to sequentially estimate  $\{\underline{v}_i(t)\}_i$  based on the network observations and the dynamics of the spread process.  We refer to these estimates as $\{\underline{u}_i(t)\}_i$.

	When node $i$ is tested on day $t$, an observation $Y_i(t)$ is provided about its state.  Knowing the state of node $i$ provides two types of information: (i) it provides information about the  state of the neighboring nodes in future time slots $t+1,t+2, \ldots$ (because of the evolution of the spread in time and on the network), and (ii) it also provides information about the past of the spread, meaning that we can infer about the state of the (unobserved) nodes at previous time slots. For example, if node $i$ is tested positive in time $t$, we would know that  (i) its neighbors are more likely to be infected in time $t+1$ and (ii) some of its neighbors must have been infected in a previous time for node $i$ to be infected now. This forms the basis for our backward-forward message passing framework.

	Given the spread model of Section~\ref{sec: Modeling}, we first describe the forward propagation of belief.
	Suppose that at time $t$, the probability vector $\underline{v}_i(t)$ is given for all $i$. The probability vector $\underline{v}_i(t+1)$ can be computed as follows (see Appendix~\ref{App: Local Transition Equations}):
	\begin{align}\label{eq: matrix form}
		\underline{v}_i(t+1) = \underline{v}_i(t)\times{\tt P}_i\big(\{\underline{v}_j(t)\}_{j\in\partial^+_i(t)}\big)
	\end{align}
	where  ${\tt P}_i\big(\{\underline{v}_j(t)\}_{j\in\partial^+_i(t)}\big)$ is
	a local transition probability matrix given in Appendix~\ref{App: Local Transition Equations}. %

	Recall that $\underline{Y}(t)$ denotes the collection of network observations on day $t$. The history of observations is then denoted by  $\{\underline{Y}(\tau)\}_{\tau=1}^{t-1}$. 
	Based on these observations, 
	we wish to find an estimate of the probability vector $\underline{v}_i(t)$ for each $i\in\mathcal{V}(t)$. We denote  this estimate by 
	$\underline{u}_i(t) = (u_x^{(i)}(t), x\in\mathcal{X})$ and refer to it, in this section, as the {\it prior probability} of node $i$ in time $t$. We further define 
	the  {\it posterior probability}  $\underline{w}_i(t) = (w_x^{(i)}(t), x\in\mathcal{X})$ of node $i$ in time $t$ (after obtaining new observations $\underline{Y}(t)$). In particular, 
	\begin{align*}
		&u_x^{(i)}(t) = \Pr\big(\sigma_i(t)=x|{\{\underline{Y}(\tau)\}_{\tau=1}^{t-1}}\big)\\
		&w_x^{(i)}(t) = \Pr\big(\sigma_i(t)=x|{\{\underline{Y}(\tau)\}_{\tau=1}^{t}}\big).
	\end{align*}
	Here, the prior probability is defined {\it at the beginning of} every day, and the posterior probability is defined {\it at the end of} every day. Conditioning all probabilities in (\ref{eq: matrix form}) on $\{\underline{Y}(\tau)\}_{\tau=1}^{t}$, we obtain the following  {\it forward-update rule} (see Appendix~\ref{App: Proof two equations})
	\begin{align}\label{eq: estimate matrix form1}
		\underline{u}_i(t+1) = \underline{w}_i(t)\times{\tt P}_i\big(\{\underline{w}_j(t)\}_{j\in\partial^+_i(t)}\big).
	\end{align}
	\begin{remark}\label{remark1}
		Following (\ref{eq: estimate matrix form1}), we need to utilize the observations $\underline{Y}(t)$ and the underlying dependency among nodes' states to update the posterior probabilities $\{\underline{w}_i(t)\}_i$, and consequently update $\{u_i(t+1)\}_i$ based on the forward-update rule (\ref{eq: estimate matrix form1}). This is however non-trivial. A Naive approach would be to locally incorporate node $i$'s observation $Y_i(t)$  into $w_i(t)$ and obtain $\underline{u}_i(t+1)$ using \eqref{eq: estimate matrix form1}. 
		This approach, however, does not fully exploit the observations and it disregards the dependency among nodes' states, as caused by the nature of the spread (An example is provided in Appendix~\ref{App: simpleexample}).
	\end{remark}

	\paragraph{Backward Propagation of Belief}\label{sec: posterior probabilities}
	
	To  capture the dependency of nodes' states and thus best utilize the observations, we proceed as follows. First, denote 
	\begin{align*}
		&\underline{e}_i(t-1) = (e_x^{(i)}(t-1), x\in\mathcal{X})\\ 
		&e_x^{(i)}(t-1) = \Pr\big(\sigma_i(t-1)=x|{\{\underline{Y}(\tau)\}_{\tau=1}^{t}}\big).
	\end{align*}
	Vector $\underline{e}_i(t-1)$ is the posterior probability of node $i$ at time $t-1$, after obtaining the history of observations up to and including time $t$. By computing $\underline{e}_i(t-1)$, we are effectively correcting our belief on the state of the nodes in the previous time slot by \emph{inference} based on the observations acquired at time $t$. This constitutes the {\it backward  step} of our framework and we will expand on it shortly. The backward step can be repeated to correct our belief also in times $t-2$, $t-3$, etc. For clarity of presentation and tractability of our analysis and experiments, we truncate the backward step at time $t-1$ and present assumptions under which this truncation is theoretically justifiable. Considering larger truncation windows is straightforward but out of the scope of this paper.

	Once our belief about nodes' states is updated in prior time slots (e.g., $\underline{e}_i(t-1)$ is obtained), it is propagated forward in time for \emph{prediction} and to provide a more accurate estimate of the nodes' posterior and prior probabilities. More specifically, consider (\ref{eq: matrix form}) written for time $t$ (rather than $t+1$) and condition all probabilities  on $\{\underline{Y}(\tau)\}_{\tau=1}^{t}$. We obtain the following update rule (see Appendix~\ref{App: Proof two equations}):
	\begin{align}\label{eq: estimate matrix form2}
		\underline{w}_i(t) = \underline{e}_i(t-1)\times\tilde{{\tt P}}_i\big(\{\underline{e}_j(t-1)\}_{j\in\partial^+_i(t-1)}\big)
	\end{align}
	where $\tilde{{\tt P}}_i\big(\{\underline{e}_j(t-1)\}_{j\in\partial^+_i(t-1)}\big)$ is given in Appendix~\ref{App: Proof two equations}.
	{\it Note that the local transition matrix in (\ref{eq: estimate matrix form2}) is not the same as  (\ref{eq: estimate matrix form1}). This is because  ``future" observations were available  in $\tilde{{\tt P}}_i\big(\{\underline{e}_j(t-1)\}_{j\in\partial^+_i(t-1)}\big)$.} The probability vectors $\{\underline{e}_i(t-1)\}_i$ provide better estimates for $\{\underline{w}_i(t)\}_i$ through (\ref{eq: estimate matrix form2}) and the prior probabilities $\{\underline{u}_i(t+1)\}_i$ are  then computed using (\ref{eq: estimate matrix form1}) to be used for decision making in time $t+1$. The block diagram in Fig.~\ref{fig: process} depicts the high-level idea of our framework.  It is worth noting that ${\tt P}_i\big(\{\underline{w}_j(t)\}_{j\in\partial^+_i(t)}\big)$ in \eqref{eq: estimate matrix form1} and $\tilde{{\tt P}}_i\big(\{\underline{e}_j(t-1)\}_{j\in\partial^+_i(t-1)}\big)$ in \eqref{eq: estimate matrix form2} both depend on the observations, $\{\underline{Y}(\tau)\}_{\tau=1}^{t}$.

	We next discuss how $\underline{e}_j(t-1)$ can be computed, starting with some notations. Denote by  
	\begin{align}\label{eq: zetathetat}
		\zeta_i(t) = \sigma_i(t)|_{\{\underline{Y}(\tau)\}_{\tau=1}^{t}},\quad \theta_j(t)=\sigma_i(t)|_{\{\underline{Y}(\tau)\}_{\tau=1}^{t-1}},
	\end{align}
	the state of the nodes in the posterior probability spaces conditioned on the observations $\{\underline{Y}(\tau)\}_{\tau=1}^{t}$ and $\{\underline{Y}(\tau)\}_{\tau=1}^{t-1}$, respectively. We further define $\Psi_i(t)$ to be the set of those neighbors of node $i$ at time $t-1$, including node $i$, who are observed/tested at time $t$. This set consists of all nodes whose posterior probabilities will be updated at time $t-1$ (given a new observation $Y_i(t)$).
	The set of all neighbors (except node $i$) of the nodes in $\Psi_i(t)$ then defines $\Phi_i(t)$. The set $\Phi_i(t)$ consists of all nodes whose posterior probabilities at time $t$ is updated by the observation $Y_i(t)$. More precisely, we have 
	\begin{align*}
		&\Psi_i(t) = \mathcal{O}(t)\cap\partial^+_i(t-1),\\
		&\Phi_i(t) = \{j|j\in\partial^+_k(t-1), k\in\Psi_i(t)\}\backslash\{i\},\\
		&\Theta_i(t) = \{j| j\in\partial^+_k(t-1), k\in\mathcal{O}(t)\}\backslash \{i\}
	\end{align*}
	where $\mathcal{O}(t)$ is the set of observed nodes at time $t$ (see Figure~\ref{fig: phipsi}). 
	\begin{figure}[t!]
		\centering
		\includegraphics[width=.6\textwidth]{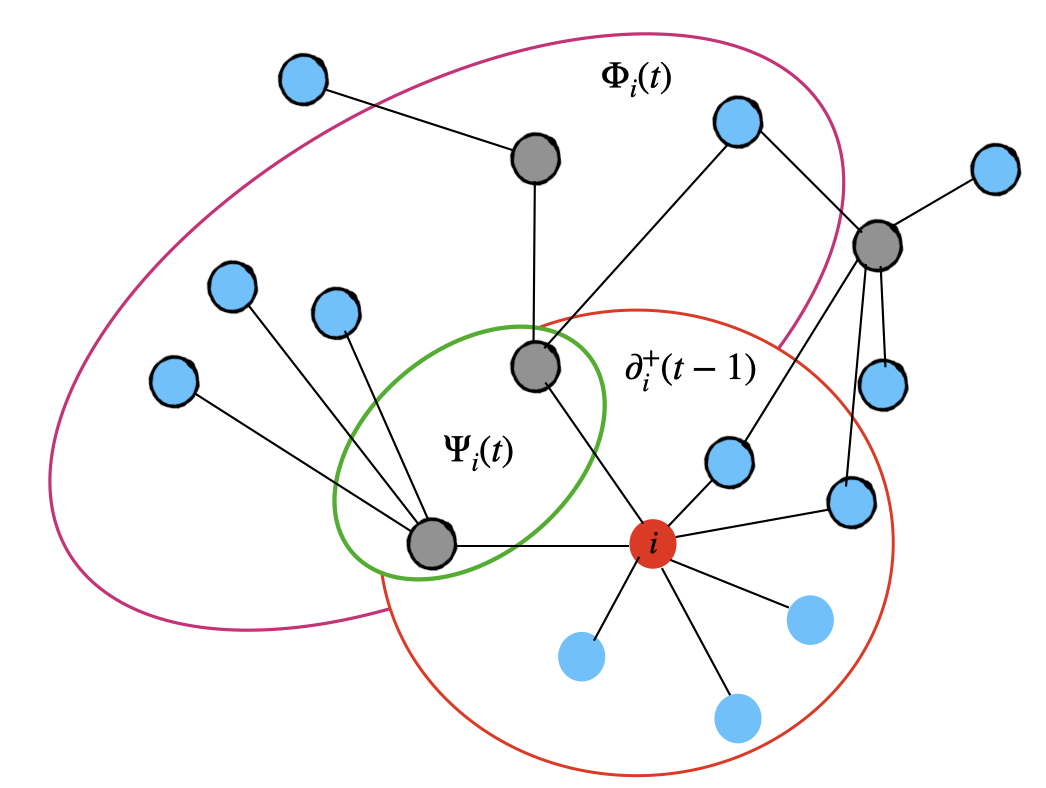} 
		\caption{An example of $\Psi_i(t)$, $\Phi_i(t)$ and $\Theta_i(t)$. Node~$i$ is marked in red, and its neighborhood  $\partial^+_i(t-1)$ is shown by the red contour. Suppose that the gray nodes are tested on day $t-1$, then $\Psi_i(t)$ is the set of nodes within the green contour, and  $\Phi_i(t)$ consists of the nodes in the purple contour. Finally, nodes in $\Theta_i(t)$ are marked with bold black border}.
		\label{fig: phipsi}
	\end{figure}
	In Appendix~\ref{App: proof for e}, we show 
	\begin{equation}\label{eq: chain rule of e2}
		\begin{aligned}
			e_x^{(i)}(t-1) \!=\!\frac{\Pr\big(\underline{Y}(t)|\zeta_i(t\!-\!1)\! =\! x\big)\ w^{(i)}_x(t-1)}{\Pr\big(\underline{Y}(t)\big)}.
		\end{aligned}
	\end{equation}
	It suffices to find $\Pr\big(\underline{Y}(t)|\zeta_i(t-1) = x\big)$. The denominator $\Pr\big(\underline{Y}(t)\big)$ is then found by normalization of the enumerator in (\ref{eq: chain rule of e2}). Let  $\{x_j\}_{j\in\mathcal{O}(t)}$ be a realization of $\{\theta_j(t)\}_{j\in\mathcal{O}(t)}$ and  $\{y_l\}_{l\in\Theta_i(t)}$ be a realization of $\{\zeta_l(t-1)\}_{l\in\Theta_i(t)}$.
	We prove the following in Appendix~\ref{App: proof for e} under a simplifying truncation assumption (see Assumption~\ref{assu: independence in terminate} in Appendix~\ref{App: proof for e}) where the backward step is truncated in time $t-1$:
	\begin{equation}\label{eq: fenzi34}
		\begin{aligned}
			&\Pr\big(\underline{Y}(t)|\zeta_i(t-1) = x\big)=\Pr\big(\{Y_j(t)\}_{j\in\Psi_i(t)}|\zeta_i(t-1) = x\big)\\
			&=\sum_{\{x_j\}_{j\in\Psi_i(t)}}\prod_{j\in \Psi_i(t)}\Pr\big(Y_j(t)|\theta_j(t)\big)\\
			&\times\sum_{\{y_l\}_{l\in\Phi_i(t)}}\prod_{j\in\Psi_i(t)}\Pr\big(x_j|\{y_l\}_{l\in\partial_j^+(t-1)\backslash\{i\}},x\big)\times\prod_{l\in\{\Phi_i(t)\}}w_{y_l}^{(i)}(t-1).
		\end{aligned}
	\end{equation}
	
	We finally present our {\it Backward-Forward Algorithm} to sequentially compute estimates  $\{\underline{u}_i(t)\}_i$  in Algorithm \ref{alg: Backward Update}. The process of Algorithm~\ref{alg: Backward Update} is given in Fig~\ref{fig: process}, and we also give a simple example to show the process of Algorithm~\ref{alg: Backward Update} in Appendix~\ref{App: simpleexample}. 
	\begin{figure}[t!]
		\centering 
		\includegraphics[width=.6\textwidth]{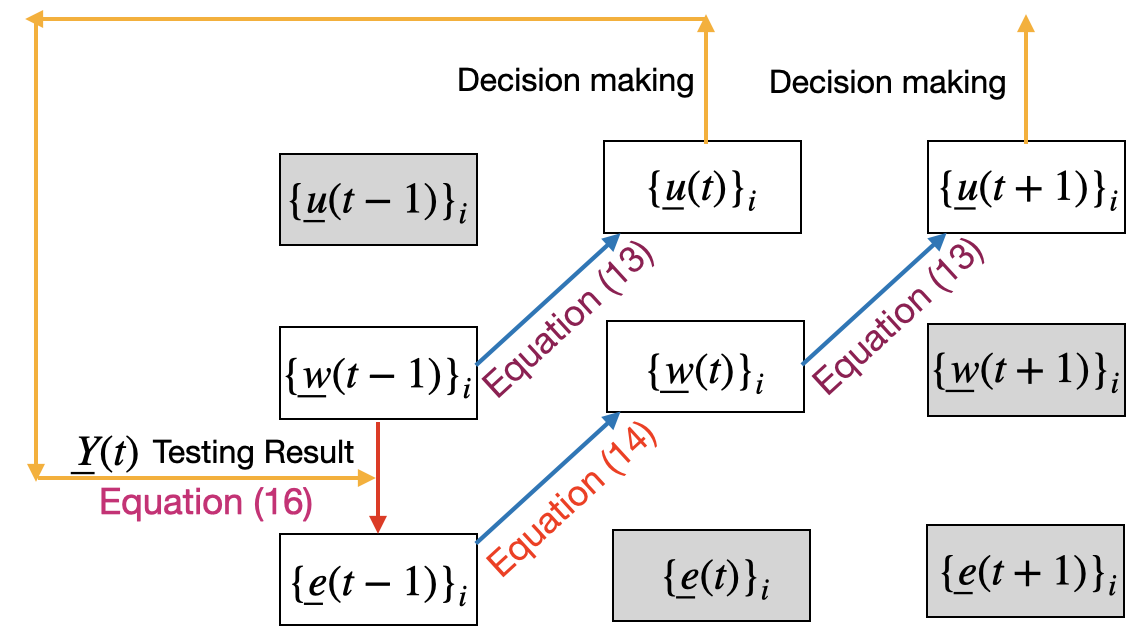} 
		\caption{The process of Algorithm~\ref{alg: Backward Update}. One example of a complete process is given in unshaded blocks. Recall that $\underline{u}_i(\tau)$, $\underline{w}_i(\tau)$, and $\underline{e}_i(\tau)$, where $\tau\in\{t-1, t, t+1\}$, are the prior probabilities, the posterior probabilities, and the updated posterior probabilities, respectively.}
		\label{fig: process}
	\end{figure}
	\begin{algorithm}
		\caption{Backward-Forward Algorithm}\label{alg: Backward Update}
		\begin{algorithmic}			
			\State Input $\underline{Y}(0)$,  $\{\underline{e}_i(0)\}_{i\in\mathcal{V}(0)}$, $\{\underline{w}_i(0)\}_{i\in\mathcal{V}(0)}$, $\{\underline{u}_i(0)\}_{i\in\mathcal{V}(0)}$.
			
			\noindent{\bf Repeat} for $t=1,2,\cdots,T-1$
			\State {\bf Step 0}: Based on \underline{Y}(t), get $\mathcal{V}(t)$ from $\mathcal{V}(t-1)$.
			\State {\bf Step 1}: Backward step. Update $\underline{e}_i(t-1)$ by (\ref{eq: chain rule of e2}), (\ref{eq: fenzi34}), and then compute $\underline{w}_i(t)$ by~ (\ref{eq: estimate matrix form2}). 
			\State {\bf Step 2}: Forward step. Compute $\underline{u}_i(t+1)$ by~(\ref{eq: estimate matrix form1}).
		\end{algorithmic}
	\end{algorithm}

	\subsection{Necessity of Backward Updating}\label{section:backward}
	Now we provide an example which illustrates the necessity of backward updating. 
	\begin{example}\label{ex: Necessity of Backward Updating}
		Consider a line network with the node set $\mathcal{V} = \{1,2,\ldots,N\}$ and the edge set $\mathcal{E}=\{(i, i+1), 1 \leq i \leq N-1$\} (see Figure~\ref{fig: exampleline}). On the initial day, we assume that each node is infected independently with probability $1/N$. Let $\beta=1$, $\lambda=0$,  $\gamma=0$\footnote{Here, $\lambda=0$ implies there is no latent state, and $\gamma=0$ implies that nodes never recover.}, and $B(t)=1$. We further assume that there is no isolation when a positive node is tested.
	\end{example}
	\begin{figure}[t!]
		\centering
		\includegraphics[width=.8\textwidth]{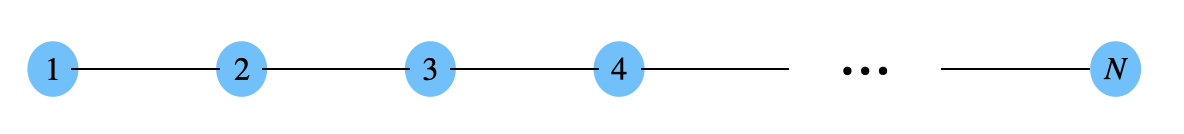} 
		\caption{The line network in Example~\ref{ex: Necessity of Backward Updating}.}
		\label{fig: exampleline}
	\end{figure}
	Based on Example~\ref{ex: Necessity of Backward Updating}, we show that  the naive approach of Remark~\ref{remark1} (i.e., forward-only updating) will cause the estimated probabilities to never converge to the true probabilities of infection. Nonetheless, if we use the Backward-Forward Algorithm~\ref{alg: Backward Update}, the estimated probabilities  converge to the true probabilities after a certain number of steps. Formally, we prove the following result in Appendix~\ref{example:backward}.
	
	\begin{theorem}\label{theorem:backward}
		For any testing policy that sequentially computes $\{\underline{u}_i(t)\}_{i}$  based on~(\ref{eq: estimate matrix form1}) (see Remark~\ref{remark1}), with probability (approximately) $\frac{1}{e}$, we have $\sum_{i=1}^{N}||\underline{v}_i(t) - \underline{u}_i(t) || \overset{t\to\infty}{\rightarrow} \Theta(N)$,\  for large $N$ \footnote{ Theorem~\ref{theorem:backward} holds for all kinds of noem due to the equivalence of norms. In addition, the convergence is topological convergence.}.  On the other hand, there exists a testing policy that sequentially updates $\{\underline{u}_i(t)\}_{i}$  based on Algorithm~\ref{alg: Backward Update} and attains $\sum_{i=1}^{N}||\underline{v}_i(t) - \underline{u}_i(t) || = 0,\  t \geq 2N.$  
	\end{theorem}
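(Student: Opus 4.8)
The plan is to exploit that, with $\beta=1$ and $\lambda=\gamma=0$, the spread on the line is \emph{deterministic} given the (random) initial infection set $\mathcal{A}\subseteq\mathcal{V}$, where each node lies in $\mathcal{A}$ independently with probability $1/N$. Writing $\mathrm{dist}(j,\mathcal{A})$ for the distance on the line from $j$ to the nearest node of $\mathcal{A}$, node $j$ is in state $I$ at time $t$ iff $\mathrm{dist}(j,\mathcal{A})\le t$ and in state $S$ otherwise, so the realized configuration (hence the ground truth $\underline{v}_i(t)$) is a one-hot vector determined by $\mathcal{A}$. Two facts drive everything: (i) $\Pr(\mathcal{A}=\emptyset)=(1-1/N)^N\to 1/e$, and (ii) since the line has diameter $N-1$, for $t\ge N-1$ the configuration is \emph{all-}$S$ when $\mathcal{A}=\emptyset$ and \emph{all-}$I$ when $\mathcal{A}\ne\emptyset$. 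Thus the long-run truth is a single bit, and the theorem is about whether each update scheme recovers it.

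For the negative part (forward-only updating, Remark~\ref{remark1}) I would condition on $\{\mathcal{A}=\emptyset\}$, which has probability $\approx 1/e$. Here the truth is all-$S$, so $u_S^{(i)}(t)$ should stay at $1$; I would show it cannot under \eqref{eq: estimate matrix form1}. With only the two states $S,I$ and $\beta=1$, the forward rule reads $u_S^{(i)}(t+1)=w_S^{(i)}(t)\prod_{j\in\partial_i(t)}w_S^{(j)}(t)$, a \emph{product} that is strongly contractive: a node left untested for $k$ days has its susceptibility belief crushed by an iterated product whose log-domain weight grows like $3^{k}$, so it collapses toward $0$ within $O(\log N)$ idle days. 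Because $B(t)=1$, any policy resets a node to $w_S=1$ at most once per $N$ days, and I would run a short potential/counting argument on $\sum_i u_S^{(i)}(t)$ to show that the multiplicative decay outruns the single daily reset, so $\sum_i u_S^{(i)}(t)=o(N)$ for \emph{every} policy. Hence $\sum_i u_I^{(i)}(t)=\Theta(N)$, and since the truth is all-$S$ we get $\sum_i\|\underline{v}_i(t)-\underline{u}_i(t)\|\ge\sum_i u_I^{(i)}(t)=\Theta(N)$. A sanity check pins the probability to exactly $1/e$: on $\{\mathcal{A}\ne\emptyset\}$ the same over-infection bias drives the forward-only belief to all-$I$, which is \emph{correct}, so the error vanishes there and is $\Theta(N)$ precisely on $\{\mathcal{A}=\emptyset\}$.

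For the positive part I would \emph{design} the policy and separate two claims. First, when $\beta=1$ the one-step backward inference of Algorithm~\ref{alg: Backward Update} is \emph{exact}: a node in state $S$ at $t$ forces both neighbors to have been $S$ at $t-1$, and a node that turns $I$ at $t$ forces at least one neighbor $I$ at $t-1$; I would verify the truncation hypothesis behind \eqref{eq: fenzi34} holds with equality in this degenerate model, so the recursions \eqref{eq: estimate matrix form1}--\eqref{eq: estimate matrix form2} produce the true conditionals up to spatial propagation. Second, I take a deterministic sweep (test nodes $1,\dots,N$ on days $1,\dots,N$, then keep testing). A single observation at a time $t\ge N-1$ already reveals the global bit $1_{\{\mathcal{A}\ne\emptyset\}}$: an $I$-observation makes all-$I$ an absorbing fixed point of the forward product (any known-$I$ neighbor forces $u_S=0$), while the observation that some node is $S$ certifies, by exact backward reasoning, an interval of the initial condition is $\mathcal{A}$-free, and all-$S$ is itself a fixed point of the product ($1\cdot 1\cdots 1=1$). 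I would then show the one-hot truth fills the whole diameter-$(N-1)$ line through the local message passing within a further $N$ steps, yielding $\underline{u}_i(t)=\underline{v}_i(t)$ for all $i$ once $t\ge 2N$; the ``$2N$'' is exactly ``$N$ to collect/settle the bit, $N$ to propagate it across the path.''

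The hard part will be the two belief-dynamics estimates, not the combinatorics of the spread. In the negative direction the delicate step is the uniform-over-all-policies bound $\sum_i u_I^{(i)}(t)=\Theta(N)$: I must rule out any \emph{adaptive} schedule that could sustain a linear amount of susceptibility belief, which requires quantifying how fast the product recursion \eqref{eq: estimate matrix form1} contracts against one reset per day. In the positive direction the subtle point is twofold: proving the one-step-truncated backward-forward update incurs \emph{no} error under $\beta=1$ (establishing that the truncation in \eqref{eq: fenzi34} is exact here is the crux), and showing the resulting degenerate belief propagates across the line in the claimed $2N$ steps without being corrupted by the forward mean-field product en route.
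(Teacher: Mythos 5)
Your plan follows essentially the same route as the paper's proof: you condition on the probability-$\approx 1/e$ event that no node is initially infected, show that the forward-only product update inflates infection beliefs faster than a single daily test can reset them (the paper's Facts~1--3 plus its $\mathcal{S}_{21}/\mathcal{S}_{22}$ partition and Lemma~\ref{lem: emptyS}), and for the positive part use a deterministic sweep together with the observation that the one-step backward inference is exact when $\beta=1$, yielding the same ``$N$ to detect, $N$ to propagate'' accounting for the $2N$ bound. The one point to repair when you flesh this out is your claim that ``any policy resets a node to $w_S=1$ at most once per $N$ days,'' which is false for adaptive schedules that retest the same node; you correctly flag the uniform-over-policies bound as the delicate step, and the paper closes it by showing no node can linger in the intermediate belief regime (its Lemma~\ref{lem: emptyS}) rather than by a direct potential argument on $\sum_i u_S^{(i)}(t)$.
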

	{\it Roadmap of proof: Consider a simple case where every node is susceptible. Since each node is infected with probability $1/N$, then the case occurs with probability $\simeq 1/e$.
		
		Under the case above, consider any testing  policy based on  the algorithm in Remark~\ref{remark1} . If a node is tested on day $t$, then the policy ``clears'' the tested node. Since the updating rule of the algorithm can not go back to the information on day $t-1$, then it can not ``clear'' any neighbors of the tested node and its probability of infection updates to a non-zero value in the next day.  Furthermore, we show that almost all nodes have an significantly large probability of infection when time horizon is sufficiently large, hence  $\sum_{i=1}^{N}||\underline{v}_i(t) - \underline{u}_i(t) || \overset{t\to\infty}{\rightarrow} \Theta(N)$.		
		
		On the other hand, we can propose a specific testing policy. Note that there is no infection, if Algorithm~\ref{alg: Backward Update} is used to update probabilities, then it can reveal the states of all nodes under the specific testing policy after at most $2N$ days. So we have $\sum_{i=1}^{N}||\underline{v}_i(t) - \underline{u}_i(t) || = 0,\  t \geq 2N.$ 
	}

	In Theorem~\ref{theorem:backward}, we illustrate the necessity of backward updating when  testing is limited. In essence, we want to ``clear'' the graph and confirm that there are no infections. If the number of tests is limited, we have mathematically shown that no algorithm can correctly estimate the nodes' infection probabilities if it does not use the backward (inference) step. On the contrary, there is an algorithm that uses the backward step along with the forward step and the estimates that it provides for the nodes' infection probabilities converge  to the true probabilities of the nodes after some finite steps. 
	Even though the considered graph is simple but the phenomena it captures is general. 
	
	As discussed in Theorem~\ref{theorem:backward}, the backward updating is necessary. However, bacward updating can be computationally expensive in large dense graphs. To trade off the impact of backward updating and the reduction of computation complexity, we propose an $\alpha$-linking backward updating algorithm in Appendix~\ref{App: Reduction of Complexity}, where Algorithm~\ref{alg: Backward Update} is applied on a random subgraph with fewer edges.

	\subsection{Necessity of Exploration}\label{sec: Exploiration and Exploration} 
	Note that in reality we have no information for $\{\underline{v}_i(t)\}_i$, and only have the estimates $\{\underline{u}_i(t)\}_i$. One may wonder if exploitation based on wrong initial estimated probability vectors, i.e., $\{\underline{u}_i(0)\}_i$, misleads decision making by providing poorer and poorer estimates of the probabilities of infection. If so, exploration may be necessary. 
	
	\begin{example}\label{ex: effect of prob} Consider $\mathcal{V} = \{1,2,\ldots,N\}$ and edges  $\mathcal{E}=\{(i, i+1), 1 \leq i \leq N-1\}$ (see Figure~\ref{fig: exampleline}). Let $N\gg 10$, $\beta=1$, $\lambda=0$, $\gamma = 0$, and $B(t)=10$. Suppose that on the initial day, node $1$ is infected and all other nodes are susceptible. Consider a wrong initial estimate: $w_{I}^{(i)}(0) = u_{I}^{(i)}(0) = 0$ if $i \leq \frac{9N}{10}$, and $w_{I}^{(i)}(0) = u_{I}^{(i)}(0) = \frac{10\epsilon}{N}$ otherwise, where $\epsilon > 0$. With this  initial belief,  we have $\sum_{i=1}^{N}||\underline{w}_i(0)-\underline{v}_i(0)|| =  O(1 + \epsilon).$ 
	\end{example} 
	Different from Example~\ref{ex: Necessity of Backward Updating}, here we consider the isolation of nodes that are tested positive. In Example~\ref{ex: effect of prob}, suppose that a specific exploration policy is applied: $1$ (out of $10$) tests is done randomly, and the other $9$ tests are done following exploitation. Now, in Appendix~\ref{example:explor}, we show that under the RbEx policy, the cumulative infection is at least $aN$ for a constant $a$, while under the exploration policy defined above,  the cumulative infection is at most $bN$ with very high probability, and the ratio $a/b$ can be any constant for a large enough $N$. More formally, we have the following theorem. Let  $p_0$ be a large probability and consider a large time horizon $T$.  Denote the cumulative infections under the RbEx policy by $C^{RbEx}(T)$ and under the specific exploration policy defined above by $C^{exp}(T)$. We prove the necessity of exploration in the following Theorem.
		\begin{theorem}\label{theorem:explor} With probability $p_0 \geq \frac{99}{100}$, $\frac{C^{RbEx}(T)}{C^{exp}(T)}\geq c(N, p_0)$, where $c(N, p_0)$ is a constant only depending on $N$ and $p_0$.
		\end{theorem}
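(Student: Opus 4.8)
The plan is to bound the two cumulative infection counts separately and then take their ratio. Note first that since $\beta = 1$ and $\lambda = \gamma = 0$, and the initial state is deterministic (node $1$ infectious, all others susceptible), the spread itself is deterministic: absent any isolation, the infectious set at time $t$ is the contiguous segment $\{1,\dots,t+1\}$ whose right endpoint (the \emph{front}) advances by one node per day. The only randomness in the experiment is the single random test of the exploration policy, so $C^{RbEx}(T)$ is a deterministic quantity and the probability $p_0$ is taken over the exploration randomness only.

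For the lower bound on $C^{RbEx}(T)$, I would argue that the wrong initial belief pins the reward of every node $i \le 9N/10$ to zero and keeps it there. Indeed, with $u_I^{(i)}(0)=0$ for $i\le 9N/10$, the reward $r_i = S(\{i\};t)$ vanishes on the left block; and because RbEx only ever tests nodes in the right block (where the belief is strictly positive), no observation is ever collected on the left, so the forward- and backward-update rules propagate $u_I \equiv 0$ there indefinitely and its reward stays zero. One then checks via the forward-update rule that the right block retains strictly positive reward for all $t < 9N/10$ (the true front has not yet reached it, so every test there returns negative but the belief never collapses to zero across the whole block). Hence for all $t<9N/10$ the policy strictly prefers the right block and never isolates a left node, so the front sweeps through $\{1,\dots,9N/10\}$ unimpeded and $C^{RbEx}(T)\ge \tfrac{9}{10}N =: aN$ once $T\ge 9N/10$.

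For the upper bound on $C^{exp}(T)$, I would split the exploration run into a discovery phase and a containment phase. In the discovery phase, before any infectious node is isolated the infectious set is the segment $\{1,\dots,t+1\}$, so the uniform random test finds an infectious node on day $t$ with probability at least $(t+1)/N$; the probability of failing to discover the infection through day $\tau$ is therefore at most $\prod_{t=0}^{\tau}\big(1-\tfrac{t+1}{N}\big)\le \exp\!\big(-\Theta(\tau^2/N)\big)$. Choosing $\tau=\Theta\big(\sqrt{N\ln\tfrac{1}{1-p_0}}\big)$ makes this at most $1-p_0$, so with probability at least $p_0$ an infectious node is discovered by day $\tau$, at which point the infection still occupies only $O(\tau)$ nodes. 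In the containment phase, the discovered positive triggers the backward-forward update of the Backward-Forward Algorithm, which raises the infection probability at the discovered node and propagates it toward the front, so that the largest estimated rewards $\hat r_i$ now lie at the boundary between infectious and susceptible nodes; the nine exploitation tests then chase the front. Since the front advances by one node per day while exploitation can probe nine nodes per day in the corrected high-reward region, the gap to the front closes at net rate of order one per day and the front is isolated within $O(\tau)$ further days, after which the broken line can no longer spread rightward. This caps the total at $C^{exp}(T)=O(\tau)=O\big(\sqrt{N\ln\tfrac{1}{1-p_0}}\big)$ with probability at least $p_0$.

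Combining the two bounds gives $\frac{C^{RbEx}(T)}{C^{exp}(T)}\ge \frac{aN}{O(\sqrt{N\ln\frac{1}{1-p_0}})}=\Omega\big(\sqrt{N/\ln\tfrac{1}{1-p_0}}\big)=:c(N,p_0)$, which depends only on $N$ and $p_0$ and diverges as $N\to\infty$, matching the claim that the ratio can be made an arbitrarily large constant. I expect the containment phase to be the main obstacle: one must show rigorously that a single-step backward correction from one discovered positive, fed through the forward rule, reliably steers the nine exploitation tests to the true front and keeps them there as it moves, rather than getting trapped on already-infected interior nodes (isolating an interior node only caps the left sub-segment and leaves the front free to advance). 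Making this ``belief chases the front'' argument precise, by tracking how $\underline{u}_i(t)$ concentrates near the front over the $O(\tau)$ days, is where the real work lies; the discovery-phase estimate and the RbEx lower bound are comparatively routine.
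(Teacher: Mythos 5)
Your high-level decomposition is the same as the paper's: lower-bound $C^{RbEx}$ by the (deterministic) time at which RbEx first tests a positive node, upper-bound $C^{exp}$ by a discovery phase plus a containment phase, and your discovery-phase estimate $\prod_{t}\bigl(1-\tfrac{t+1}{N}\bigr)$ is exactly the paper's equation for the probability that the single random test misses the infected segment. However, there are two genuine gaps. First, and most importantly, the containment phase --- which you yourself flag as ``where the real work lies'' --- is precisely the content of the paper's proof and cannot be waved through. The paper establishes it by showing (Facts~1 and~2 of Appendix~L) that after the first positive is found, every node in the infected region $\mathcal{Q}_1$ has estimated reward exactly $0$ or exactly $1$, while $\epsilon$ is chosen small enough (as a function of $b'$ and $N$) that every node in the right block $\mathcal{Q}_3$ has reward strictly below $1$ throughout the relevant window; this is what guarantees the nine exploitation tests are not drawn back to the right block and instead land on $j+2, j+4,\dots$ on successive days, isolating two infectious nodes per day against one new infection per day. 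Your sketch never addresses why exploitation prefers the true front over the right block post-discovery, nor the subtlety (built into the paper's discovery probability) that the discovered node must not be the newest infection, so that the next test at $j+2$ is guaranteed positive and the chain of positive tests actually starts.

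Second, your lower bound on $C^{RbEx}$ rests on the claim that the belief $u_I^{(i)}$ stays identically zero on the left block $i\le 9N/10$ forever. This is false: node $9N/10$ has a right neighbor with $u_I=10\epsilon/N>0$, so the forward update gives $u_I^{(9N/10)}(1)>0$, and the positive belief then diffuses leftward one node per day. Showing RbEx never tests a left node would require comparing these leaked (tiny) beliefs against the right-block beliefs after repeated negative tests, which is not routine. The paper avoids this entirely: it does not compute $a=9/10$, but simply defines $aN$ as the (deterministic, policy-independent-of-randomness) day on which RbEx first finds a positive, notes $C^{RbEx}\ge\min\{aN,N\}$, and then chooses $b'<a$ with discovery probability $\ge p_0$ for $N$ large. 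You would be better served adopting that implicit definition of $a$ than trying to prove the left block is never touched.
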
 

	{\it Roadmap of proof:
		Under the RbEx policy, we test nodes based on their predicted  probabilities. Since the nodes that are located towards the end of the line (right side in Fig. \ref{fig: exampleline}) have non-zero probabilities, they are tested first while the disease spreads on the other end of the network (left side in Fig. \ref{fig: exampleline}). Mathematically, suppose that for the first time, an infectious node is tested at day $t=aN$, then there are at least $\min\{aN, N\}$ infectious nodes before the spread can be contained. 
		
		Under the specific exploration policy described above, consider the event that, for the first time, an infectious node is explored on day $t = b'N$ ($b'<a$). 
		We argue that with probability $p_0$, the exploration policy catches at least two new infections at each step after $t = b'N$. After $2t$, the algorithm catches all the infections, and we have at most $2b'N$ infections. Let $b=2b'$. This is an improvement by a factor of at least $\frac{a}{b}$ in comparison to the RbEx strategy. Factor $\frac{a}{b}$ depends on the values of $N$ and $p_0$. 
	}

	In Theorem~\ref{theorem:explor}, we show the necessity of exploration when our initial belief is slightly wrong, i.e., it is slightly biased toward the other end of the network (In general, this could be due to a wrong belief, prior test results, etc). We have formally proved that when the testing capacity is limited, exploration can significantly improve the cumulative infections, i.e., contain the spread. This motivates the design of exploration policies. Even though the setting is simple, the phenomena it captures is much more general.

	\section{Simulations} \label{sec: experiments}
	\subsection{Overview}\label{sec: overview}
	In this section, we use simulations
	to study the performance of the proposed exploitation and exploration policies for various synthetic and real-data networks. Towards this end, we define some metrics that quantify how different metrics perform and key network parameters and attributes that determine the values of these metrics and thereby how exploitation and exploration compare. We also identify benchmark policies which represent the extreme ends of the tradeoff between exploration and exploitation to compare with the policies we propose and assess the performance enhancements brought about by judicious combinations of exploration and exploitation. 
	Through our experiments, we aim to answer two main questions for various synthetic and real-data networks: (i) Can exploration policies do better that exploitation policies and if so, when would that be the case? (ii) What parameters would affect the performance of exploration and exploitation policies? These are important questions to shed light on the role of exploration. These questions are particularly raised by  Theorem~\ref{theorem:explor} in which we prove that exploration can significantly outperform exploitation in some (stylized) networks. We design the experiments in order to shed light on the above questions and to understand  the extent of the necessity of exploration in different network models and scenarios.

	\paragraph{Network parameters}
	We consider the following parameters:   (i) The unregulated delay $\ell$ which is the  time from the initial start of the spread to the first time testing and intervention starts; (ii) The {\it (global) clustering coefficient} \cite[Chapter~3]{HANSEN202031}, denoted by $\gamma_c$, which is defined as a measure of the degree to which nodes in a graph tend to cluster together; (iii) The {\it path-length}, denoted by $L_p$, which measures the average shortest distance between every possible pair of nodes.  We consider attributes such as the initialization of the process, and the lack of knowledge about $\{\underline{v}_i(t)\}_{i}$.

	\paragraph{Performance metrics}
	
	We consider the expected number of infected nodes in a time horizon $[0, T]$ as the performance measure for various policies. Let $C^0(T)$ be the number of infected nodes if there is no testing and isolation, $C^{RbEx}(T), C^{REEr}(T)$ be the corresponding numbers respectively for the ${RbEx}$ policy (Algorithm~\ref{alg: reward-based exploitation}) and the ${REEr}$ policy (Algorithm~\ref{alg: reward-based exploration}). We consider a ratio between the expectations of these: 
	\begin{align}
		{\tt Ratio} =& \frac{\mathbb{E}[C^{RbEx}(T)] - \mathbb{E}[C^{REEr}(T)]}{\mathbb{E}[C^0(T)]}.\label{eq: ratio prob-based}
	\end{align}
	
	We define the {\it estimation error} ${\tt Err}_\pi(t)$ towards capturing the impact of the lack of knowledge about $\{\underline{v}_i(t)\}_{i}$. 
	\begin{align}\label{eq: estimation error}
	{\tt Err}_\pi(t) = \frac{1}{N(t)}\sum_{i\in\mathcal{G}(t)}||\underline{v}_i(t)-\underline{u}_i(t)||_2^2.
	\end{align}
	We consider the difference between the estimation errors of ${RbEx}$ and ${REEr}$ policies:   $\Delta_{\tt Err} = {\tt Err}_{RbEx}(T) - {\tt Err}_{REEr}(T)$.

	\paragraph{Benchmark policies} 
	We will compare the proposed policies with $4$ benchmark policies. (i) (Forward) Contact Tracing: we tested every day the nodes who have infectious neighbors (in a forward manner), denoted by {\it candidate nodes}. Only some candidate nodes are selected randomly due to testing resources being limited. Note that only exploitation is utilized under this benchmark.  (ii) Random Testing: Every day, we randomly select nodes to test. Typical testing policies that could come out of SIR optimal control formulations for our problem would naturally reduce to random testing as they treat all nodes to be statistically identical and ignore the impact of  network topology. One can interpret that  random testing implements exploration to its full extent.  (iii) Contact Tracing with Active Case Finding: A small portion of (for example, $5\%$) testing budget is utilized for active case finding \cite{OU2020}. This portion of the testing budget is used to test nodes by Random Testing. The remaining budget is utilized for forward contact tracing. (iv) Logistic Regression: We use ideas presented in \cite{GH2020}, where simple classifiers were proposed based on the features of real data. In our setting, we choose the classifier to be based on logistic regression, and we define the feature of node $i$ as $X_i(t) = [1, n_i(t)+\epsilon]^T$. Here, $n_i(t)$ is the number of quarantined neighbors node $i$ has contacted before and including day $t$, and $\epsilon\neq0$ is a superparameter aiming to avoid the case where $n_i(t)=0$. In simulations, we set $\epsilon=0.1$. Let the observation $Y_i(t)$ be the testing result of node $i$. In particular, if node $i$ is not tested on day $t$, then we do {\it not} collect the data $(X_i(t), Y_i(t))$. Thus, the probability of node $i$ being infectious is defined as the {\it Sigmoid} function
	\begin{align*}
		\frac{1}{1+\exp(-X_i(t)\cdot w^T)},
	\end{align*}
	where $w$ is the parameter which should be learned.

	\paragraph{Simulation Setting}\label{sec: Simulation setting}
	We consider a process as described in Section \ref{sec: Modeling} with $n_0$ randomly located initial infected nodes. The process evolved  without any testing/intervention for $\ell$ days and we refer to $\ell$ as the {\it unregulated delay}. After that, one of the (initial) infectious nodes, denoted by node~$i_0$, is (randomly) provided to the policies. Subsequently, the initial estimated probability vector is set to $\underline{u}_{i_0}(\ell) = (1,0,0,0)$, and $\underline{u}_i(\ell) = (0,0,0,1)$ when $i\neq i_0$.
	We consider the budget to be equal to the expected number of infected nodes at time $t$, i.e., $B(t) = \sum_{j=1}^{N(t)}v_I^{(j)}(t).$
	
	We choose model parameters considering the particular application of COVID-19 spread. In particular, 1) the mean latency period is $1/\lambda = 1\,\,\text{or}\,\,2$ days \cite{MSZY2020}; 2) the mean duration in the infectious state (I) is $1/\gamma=7\sim 14$ days \cite{MSZY2020, ABSJ2020,  contact}; 3) we choose the transmission rate $\beta$ in a specific network such that after a long time horizon, if no testing and isolation policies were applied, then around $60\sim90$ percent individuals are infected. We did not consider the case where $100$ percent individuals are infected because given the recovery rate (and the topology), the spread may not reach every node.
	
We consider both synthetic networks such as Watts-Strogatz (WS) networks \cite{DWSS1998}, Scale-free (SF) networks \cite{ADBAC2019},  Stochastic Block Models (SBM) \cite{CLDW2019} and a variant of it (V-SBM), as well as real-data networks. Descriptions and further results for the synthetic networks and real networks are presented in Appendix~\ref{sec: Further Results and Real-Data Networks}.
	
	\paragraph{Watts-Strogatz  Networks.}
	We consider a network WS$(N, d, \delta)$ with $N$ nodes, degree $d$, and rewiring probability $\delta$. The  transmission probability of the spread is set to  $\beta=0.4$ and the number of initial seed is $n_0=3$. 
	
		\paragraph{Scale-free Networks.} We consider a  network SF$(N, \alpha)$ with $N$ nodes, and the fraction of nodes with degree $k$ follows a power law $k^{-\alpha}$, where $\alpha=2.1, 2.3, 2.5, 2.7, 2.9$. The  transmission probability of the spread is set to  $\beta=0.5$ and the number of initial seeds is $n_0=3$.

		\paragraph{Stochastic Block Models.} 	The SBM is a generative model for random graphs. The graph is divided into several communities, and subsets of nodes are characterized by being connected with particular edge densities. The intra-connection probability is $p_1$, and the inter-connection probability is $p_2$. We denote the SBM as SBM$(N, M, p_1, p_2)$\footnote{Here, we assume that $M$ is an exact divisor of $N$.}. The  transmission probability of the spread is set to  $\beta=0.04$ and the number of initial seed is $n_0=3$.  The construction of SBM is given in Appendix~\ref{App: simulations SBM V-SBM}.
		
		\paragraph{A Variant of Stochastic Block Models.}
		Different from SBM, we only allow nodes in cluster~$i$ to connect to nodes in successive clusters (the neighbor clusters). Denote a variant of SBM as V-SBM$(N, M, p_1, p_2)$.    The  transmission probability of the spread is set to  $\beta=0.04$ and the number of initial seed is $n_0=3$.  The construction of V-SBM is given in Appendix~\ref{App: simulations SBM V-SBM}.
		
		\paragraph{Real-data Network I.} We consider a contact network of  university students in the Copenhagen Networks Study \cite{PSASDDLSL2019}. The network is built based on the proximity between participating students recorded by smartphones, at 5 minute resolution. According to the definition of { close contact} by \cite{contact}, we only used proximity events between individuals that lasted more than 15 minutes to construct the daily  contact network. The  contact network has $672$ individuals spanning $28$ days. To guarantee a long time-horizon, we replicate the contact network $4$ times so that the  time-horizon is $112$ days.  We set $\beta=0.05$ and $n_0=5$ to have a realistic simulation of the Covid-19 spread. Note that the network is relatively dense, so we choose a relatively small value of $\beta$  to avoid the unrealistic case in which the disease spreads very fast (see Figure~\ref{realdatanetwork12}~(left)).
		
		\paragraph{Real-data Network II.} We consider a publicly available dataset on human social interactions collected specifically for modeling infectious disease dynamics \cite{SKPK2020, PKSK2018, JAFJH2020}. The data set consists of pairwise distances between users of the BBC Pandemic Haslemere app over time. The  contact network has $469$ individuals spanning $576$ days.  Since the network is very sparse, then we compress contacts among individuals during $4$ successive days to one day. Then, we have $469$ individuals spanning $144$ days. We set $\beta=0.95$ and $n_0=30$ to have a realistic simulation of the Covid-19 spread. Note that the network is relatively sparse, so we choose a relatively large value of $\beta$  to avoid the unrealistic case in which the disease spreads very slow (see Figure~\ref{realdatanetwork12}~(left)).

\subsection{Simulation Results in Synthetic networks}

In this section, we compare the performances of our proposed policies and the benckmarks (defined in Section~\ref{sec: overview}) in synthetic networks. We start with some specific networks and parameters for this purpose (see  Figure~\ref{WS_5},  Figure~\ref{powerlaw25}, Figure~\ref{SBM_1}, and Figure~\ref{VSBM_1}).   The figures reveal that our proposed policies, i.e., the RbEx and REEr policies, outperform the benchmarks. In particular, in Figure~\ref{WS_5} and Figure~\ref{powerlaw25} (i.e., the WS and SF networks), the REEr policy outperforms the RbEx policy, and the REEr policy provides a more accurate estimation for $\{\underline{v}_i(t)\}_i$. In  Figure~\ref{SBM_1} and Figure~\ref{VSBM_1}  (i.e., the SBM and V-SBM networks), the RbEx policy outperforms the REEr policy, and the RbEx policy provides a more accurate estimation for $\{\underline{v}_i(t)\}_i$. In addition, in Figure~\ref{WS_5}, we show that Algorithm~\ref{alg: Greedy Algorithm}  outperforms the RbEx policy but performs worse than the REEr policy (recall that the compuation time of Algorithm~\ref{alg: Greedy Algorithm} is high, we therefore only plot the performance of Algorithm~\ref{alg: Greedy Algorithm} in Figure~\ref{WS_5} as an example). This implies that without exploration, the exploitation in a greedy manner can not perform well in WS networks.

From the discussions above, the advantages of exploration in distinct settings (different network topologies with variant parameters) are different. To investigate the advantages of exploration in distinct settings, it suffices to show how the main parameters affect the exploration. In this work, we consider three main parameters which are defined in  Section~\ref{sec: overview}, i.e., the unregulated delay $\ell$, the global clustering coefficient $\gamma_c$, and the path-length $L_p$. Detailed discussions are later given in Section~\ref{sec: Impact of Network Parameters}.

	\begin{figure}
		\centering
		\includegraphics[width=0.9\textwidth]{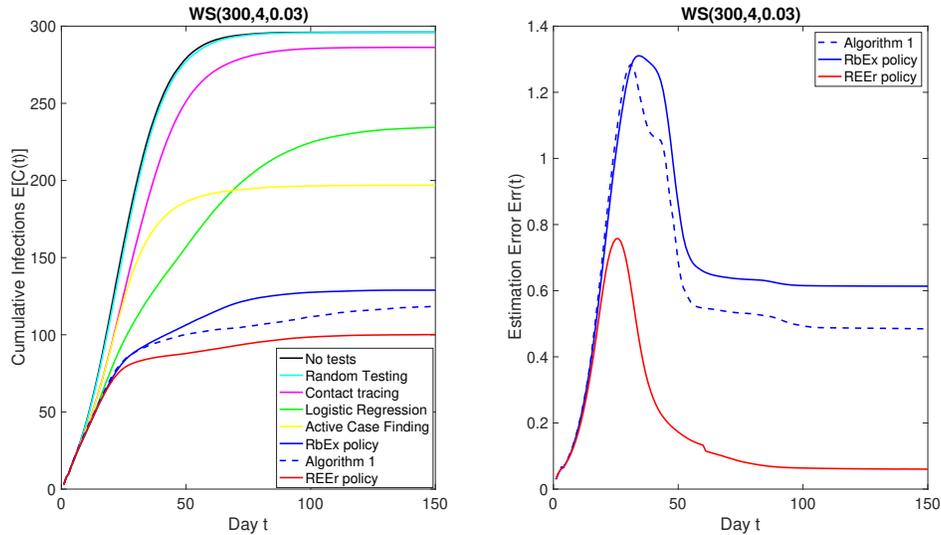}
		\caption{Performances and estimation errors of different policies in WS(300, 4, 0.03) when $\ell=3$.}
		\label{WS_5}
	\end{figure}

\begin{figure}
	\centering
	\includegraphics[width=0.9\textwidth]{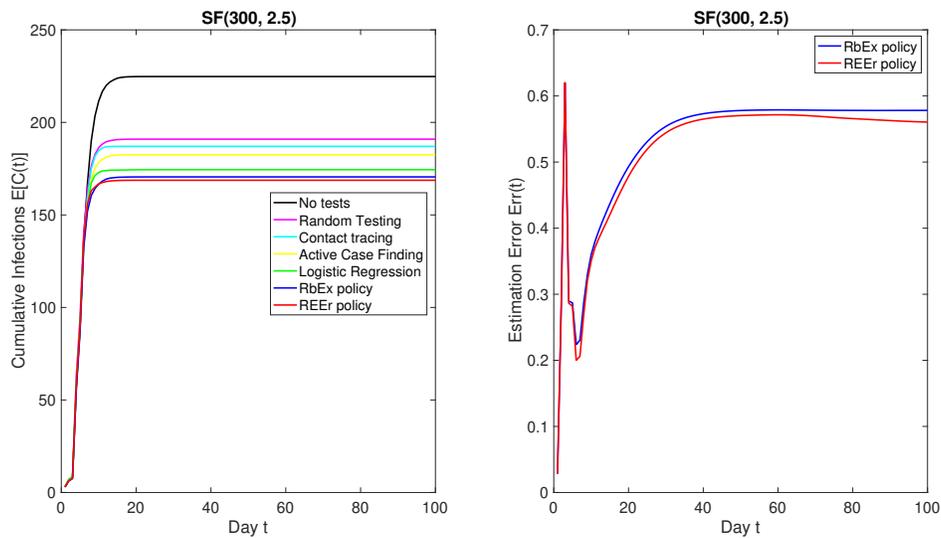}
	\caption{Performances and estimation errors of different policies in SF(300, 2.5) when $\ell=3$.}
	\label{powerlaw25}
\end{figure}

	\begin{figure}
	\centering
	\includegraphics[width=0.9\textwidth]{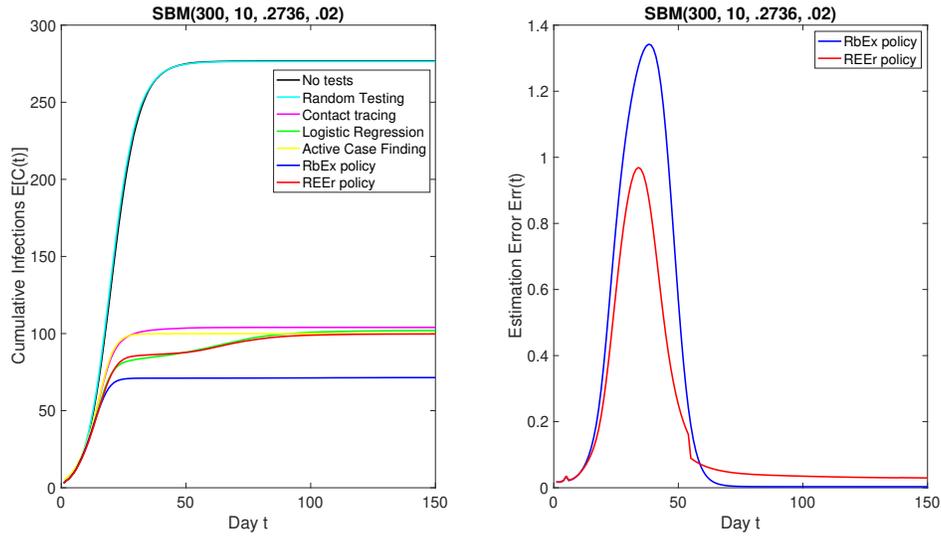}
	\caption{Performances and estimation errors of different policies in SBM(300, 10, .2736, .02) when $\ell=5$.}
	\label{SBM_1}
\end{figure}

	\begin{figure}
	\centering
	\includegraphics[width=0.9\textwidth]{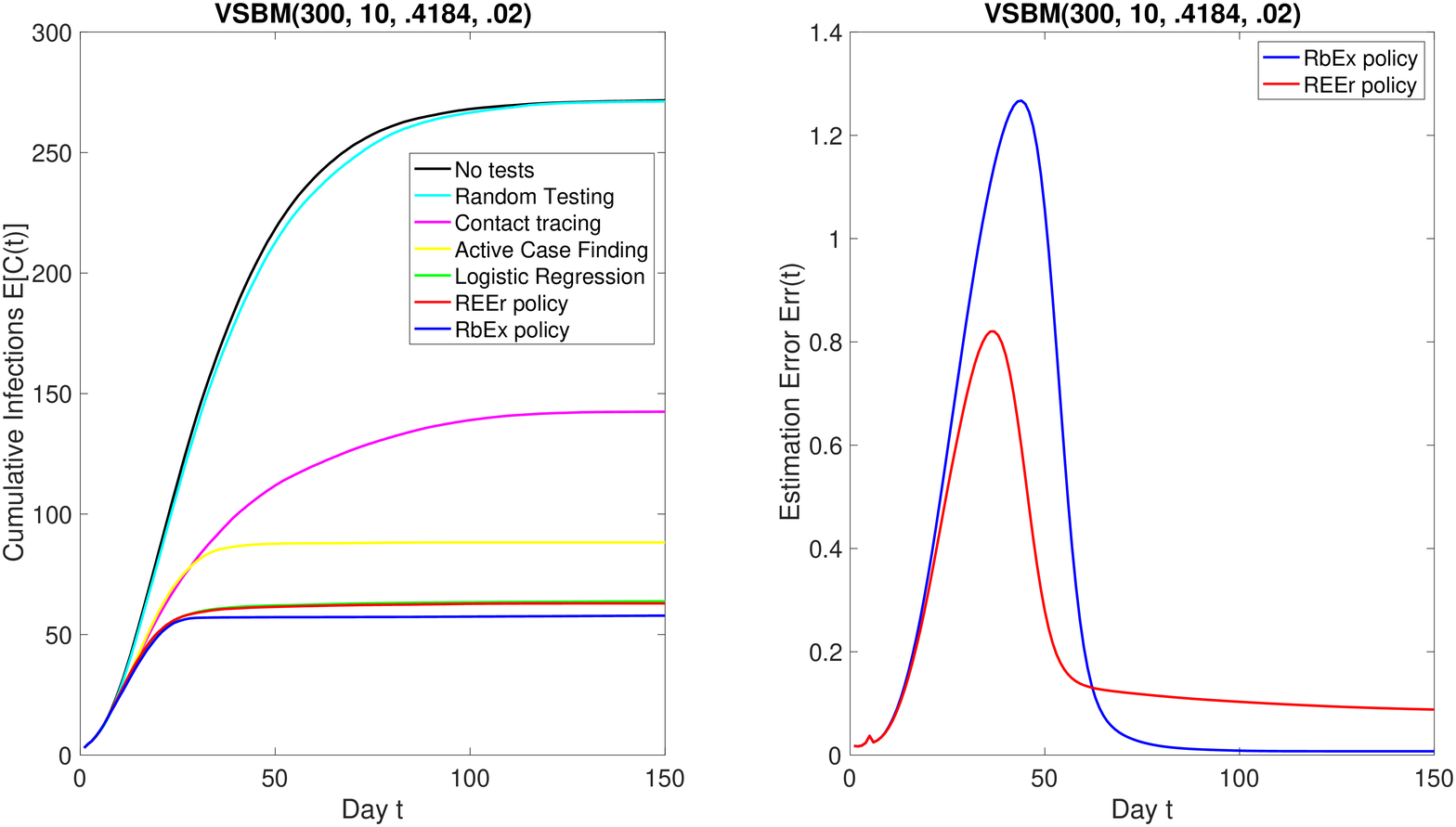}
	\caption{Performances and estimation errors of different policies in VSBM(300, 10, .4184, .02) when $\ell=5$.}
	\label{VSBM_1}
\end{figure}

\subsubsection{Impact of Network Parameters}\label{sec: Impact of Network Parameters}
	
In this subsection, we consider the impact of network parameters on the tradeoff between exploration and exploitation.

\paragraph{Impact of $\ell$.}  We first investigate the impact of the unregulated delay, $\ell$. Specifically, from Table~\ref{tabla: WS ell}, Table~\ref{tabla: SF ell}, Table~\ref{tabla: SBM ell}, and Table~\ref{tabla: VSBM ell}, as $\ell$ increases, so does ${\tt Ratio}$ and $\Delta_{\tt Err}$, implying that exploration becomes more effective.  With increase in $\ell$, the infection continues in the network for longer, there are greater number of infectious nodes in the network and they are scattered throughout the network, thus exploration is better suited to locate them. Thus, the REEr policy can contain the spread of the disease faster.

In particular, the REEr policy is always better in WS networks. This is because  exploitation may confine the tests in neighborhoods of some infected nodes. While in the SBM networks, the RbEx policy always outperforms the REEr policy. In both the SF and V-SBM networks, the RbEx policy is better when $\ell$ is small, and the REEr policy is better when $\ell$ is large. One interesting observation is that in the V-SBM networks, the REEr policy performs better when $\ell$ is large ($= 11, 13$), but the corresponding estimation errors are larger than those in the RbEx policy. In this specific network topology, it appears that smaller estimation error does not always correspond to better cumulative infections. One potential reason is that the REEr policy is sensitive to $\ell$ in this topology, i.e., we can achieve smaller cumulative infections under the REEr policy even if the estimation error is larger.

\begin{table}[!htbp]
	\centering
	\begin{tabular}{|c|c|c|c|c|c|}
		\hline  
		WS, $\ell$&  $3$ & $5$ & $7$ & $9$ & $11$ \\
		\hline
		${\tt Ratio}$ & $0.097$ & $0.128$ & $0.177$ & $0.207$ & $0.297$\\ \hline
		$\Delta_{\tt Err}$ & $0.553$ &  $0.814$ & $1.092$ & $1.197$ & $1.449$\\
		\hline
	\end{tabular}
	\caption{Role of the unregulated delay $\ell$ when $\delta=0.03$.}\label{tabla: WS ell}
\end{table}

\begin{table}[!htbp]
	\centering
	\begin{tabular}{|c|c|c|c|c|c|}
		\hline  
		SF, $\ell$&  $3$ & $5$ & $7$ & $9$ & $11$ \\
		\hline
		${\tt Ratio}$ & $-0.0009$ & $0.0026$ & $0.0033$ & $0.0042$ & $0.0059$\\ \hline
		$\Delta_{\tt Err}$ & $-0.0014$ &  $0.0237$ & $0.0334$ & $0.0434$ & $0.1212$\\
		\hline
	\end{tabular}
	\caption{Role of the unregulated delay $\ell$ when $\alpha=2.1$.}\label{tabla: SF ell}
\end{table}

\begin{table}[!htbp]
	\centering
	\begin{tabular}{|c|c|c|c|c|c|}
		\hline  
		SBM, $\ell$&  $5$ & $7$ & $9$ & $11$ & $13$ \\
		\hline
		${\tt Ratio}$ & $-0.092$ & $-0.079$ & $-0.042$ & $-0.035$ & $-0.025$\\		\hline
		$\Delta_{\tt Err}$ & $-0.026$ &  $-0.015$ & $-0.010$ & $-0.009$ & $-0.009$\\
		\hline
	\end{tabular}
	\caption{Role of the unregulated delay $\ell$ when $(p_1, p_2)=(.274, .02)$.}\label{tabla: SBM ell}
\end{table}

\begin{table}[!htbp]
	\centering
	\begin{tabular}{|c|c|c|c|c|c|}
		\hline  
		V-SBM, $\ell$&  $5$ & $7$ & $9$ & $11$ & $13$ \\
		\hline
		${\tt Ratio}$ & $-0.022$ & $-0.016$ & $-0.007$ & $0.011$ & $0.019$\\		\hline
		$\Delta_{\tt Err}$ & $-0.081$ &  $-0.066$ & $-0.046$ & $-0.033$ & $-0.025$\\
		\hline
	\end{tabular}
	\caption{Role of the unregulated delay $\ell$ when $(p_1, p_2)=(.418, .02)$.}\label{tabla: VSBM ell}
\end{table}

\paragraph{Impact of $\gamma_c$ and $L_p$.}

Then, we investigate the impact of the global clustering coefficient, i.e., $\gamma_c$, and the average shortest path-length, i.e., $L_p$.   In Table~\ref{tabla: WS clustering coefficient},  both $\gamma_c$ and $L_p$ decrease as $\delta$ increases. In  Table~\ref{tabla: SF clustering coefficient}, $\gamma_c$ decreases as $\alpha$ increases. For the SF networks, the graphs are often disconnected, so we only calculate $\gamma_c$ in Table~\ref{tabla: SF clustering coefficient}.  In Table~\ref{tabla: SBM clustering coefficient} and Table~\ref{tabla: VSBM clustering coefficient}, both $\gamma_c$ and $L_p$ decrease as $p_2$ increases.

From these tables,  as $L_p$ or $\gamma_c$ decreases, the benefits of exploration compared to exploitation decrease as well. This confirms the intuition that  exploration is particularly helpful in clustered networks with larger path lengths where undetected infection can spread without any intervention as exploitation largely confines the tests in neighborhoods of the  infections that were previously detected. This is also supported by the fact that exploration lowers estimation error in such scenarios, as  shown in Table~\ref{tabla: WS clustering coefficient}, Table~\ref{tabla: SF clustering coefficient}, Table~\ref{tabla: SBM clustering coefficient}, and Table~\ref{tabla: VSBM clustering coefficient}. Furthermore, we investigate the role of $\gamma_c$ and $L_p$ individually in Appendix~\ref{App: The impact of parameters individually}.

\vspace{-0.8cm}
\begin{table}
	\centering
	\begin{tabular}{|c|c|c|c|c|}
		\hline  
		WS, $\delta$& $\gamma_c$ &$L_p$ & ${\tt Ratio}$ & $\Delta_{\tt Err}$ \\
		\hline
		$0$ & $.5$&$62.876$ &$0.191$ & $1.153$ \\
		\hline
		$.0075$ & $.489$ & $21.264$ & $0.182$ & $1.423$\\
		\hline  
		 $.015$ & $.473$ & $14.253$  & $0.174$ &$0.991$ \\
		\hline
		$.0225$ & $.467$ &  $12.171$& $0.126$ & $0.779$\\
		\hline
		$.03$  & $.456$ &  $10.81$ & $0.097$ & $0.554$\\
		\hline
	\end{tabular}
	\caption{Role of clustering coefficient and path length when $\ell=3$.}\label{tabla: WS clustering coefficient}
\end{table}

\begin{table}
	\centering
	\begin{tabular}{|c|c|c|c|}
		\hline  
		SF, $\alpha$& $\gamma_c$ & ${\tt Ratio}$ & $\Delta_{\tt Err}$ \\
		\hline
		$2.1$  & $.5017$  & $0.0080$ & $0.0334$\\
		\hline  
		$2.3$ & $.3374$ & $0.0057$ & $0.0253$\\
		\hline
		$2.5$ & $.2348$ & $0.0032$ &$0.0177$ \\
		\hline
		$2.7$ & $.1496$ & $-0.0019$ & $0.0124$\\
		\hline
		$2.9$ & $.0219$&$-0.0064$ & $0.0081$ \\
		\hline
	\end{tabular}
	\caption{Role of clustering coefficient and path length $\ell=3$.}\label{tabla: SF clustering coefficient}
\end{table}

\begin{table}
	\centering
	\begin{tabular}{|c|c|c|c|c|}
		\hline  
		SBM, $(p_1,p_2)$& $\gamma_c$& $L_p$ & ${\tt Ratio}$ & $\Delta_{\tt Err}$  \\
		\hline
		$(0.274, 0.02)$& $0.111$ & $2.573$ & $-0.092$ & $-0.026$\\
		\hline
		$(0.214, 0.026)$ & $0.075$ & $2.518$ & $-0.103$ & $-0.023$\\
		\hline  		
		$(0.159, 0.032)$ & $0.056$ & $2.492$ & $-0.113$ & $-0.026$\\
		\hline
		$(0.102, 0.039)$& $0.048$ & $2.480$ & $-0.118$ & $-0.023$\\
		\hline
		$(0.045, 0.045)$ & $0.043$ & $2.455$ & $-0.124$ & $-0.027$\\
		\hline
	\end{tabular}
	\caption{Role of clustering coefficient and path length $\ell=5$.}\label{tabla: SBM clustering coefficient}
\end{table}

\begin{table}
	\centering
	\begin{tabular}{|c|c|c|c|c|}
		\hline  
		V-SBM, $(p_1,p_2)$& $\gamma_c$ &$L_p$ & ${\tt Ratio}$ & $\Delta_{\tt Err}$\\
		\hline
		$(0.418,0.020)$& $0.3557$ & $4.4264$ & $-0.022$ & $-0.081$\\
		\hline
		$(0.351, 0.052)$ & $0.2365$ & $3.6584$ &$-0.091$ & $-0.045$\\
		\hline  		
		$(0.284, 0.085)$ &$0.1769$ & $3.307$ & $-0.104$ & $-0.055$ \\
		\hline
		$(0.217, 0.085)$& $0.1385$ & $3.1562$ & $-0.112$ & $-0.041$\\
		\hline
		$(0.150,0.0150)$ & $0.1170$ & $3.0563$ & $-0.123$ & $-0.042$\\
		\hline
	\end{tabular}
	\caption{Role of clustering coefficient and path length $\ell=5$.}\label{tabla: VSBM clustering coefficient}
\end{table}

\subsection{Simulation Results in Real-data Networks}

\begin{figure}
\centering
\includegraphics[width=0.9\textwidth]{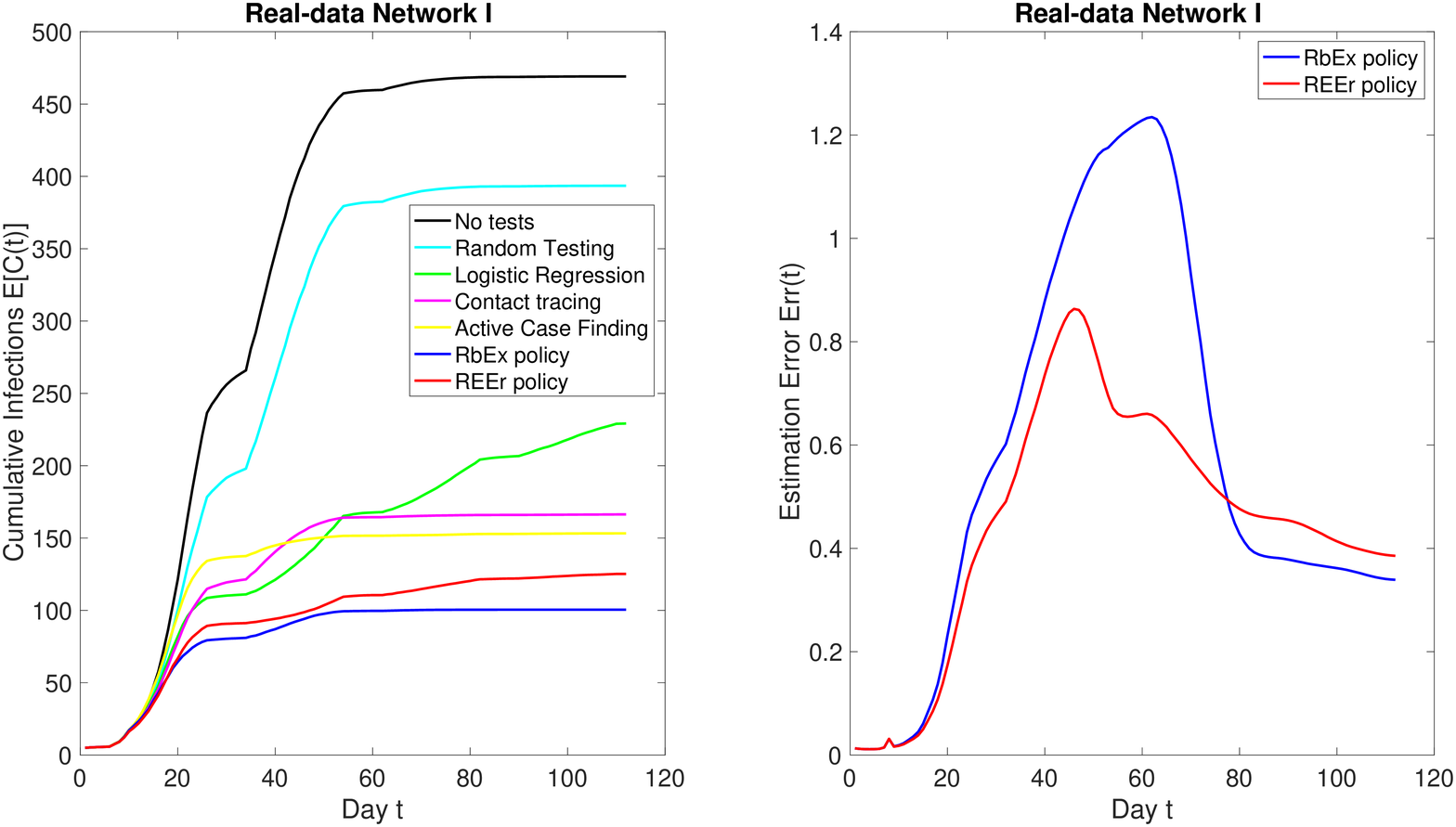}
\caption{Performances and estimation errors of different policies in the real-data network I when $\ell=8$.}
\label{realdata}
	\end{figure}
	
		\begin{table}
		\centering
		\begin{tabular}{|c|c|c|c|}
			\hline  
			Real-data Network I, $\ell$& $5$ & $8$ & $11$  \\
			\hline
			${\tt Ratio}$ & $-0.0559$ & $-0.0255$ & $0.009$\\
			\hline   
			$\Delta_{\tt Err}$ & $-0.061$ & $-0.030$ & $0.035$  \\
			\hline
		\end{tabular}
		\caption{Role of the unregulated delay $\ell$}\label{tabla: real data 1}
	\end{table}

\begin{figure}
	\centering
	\includegraphics[width=0.9\textwidth]{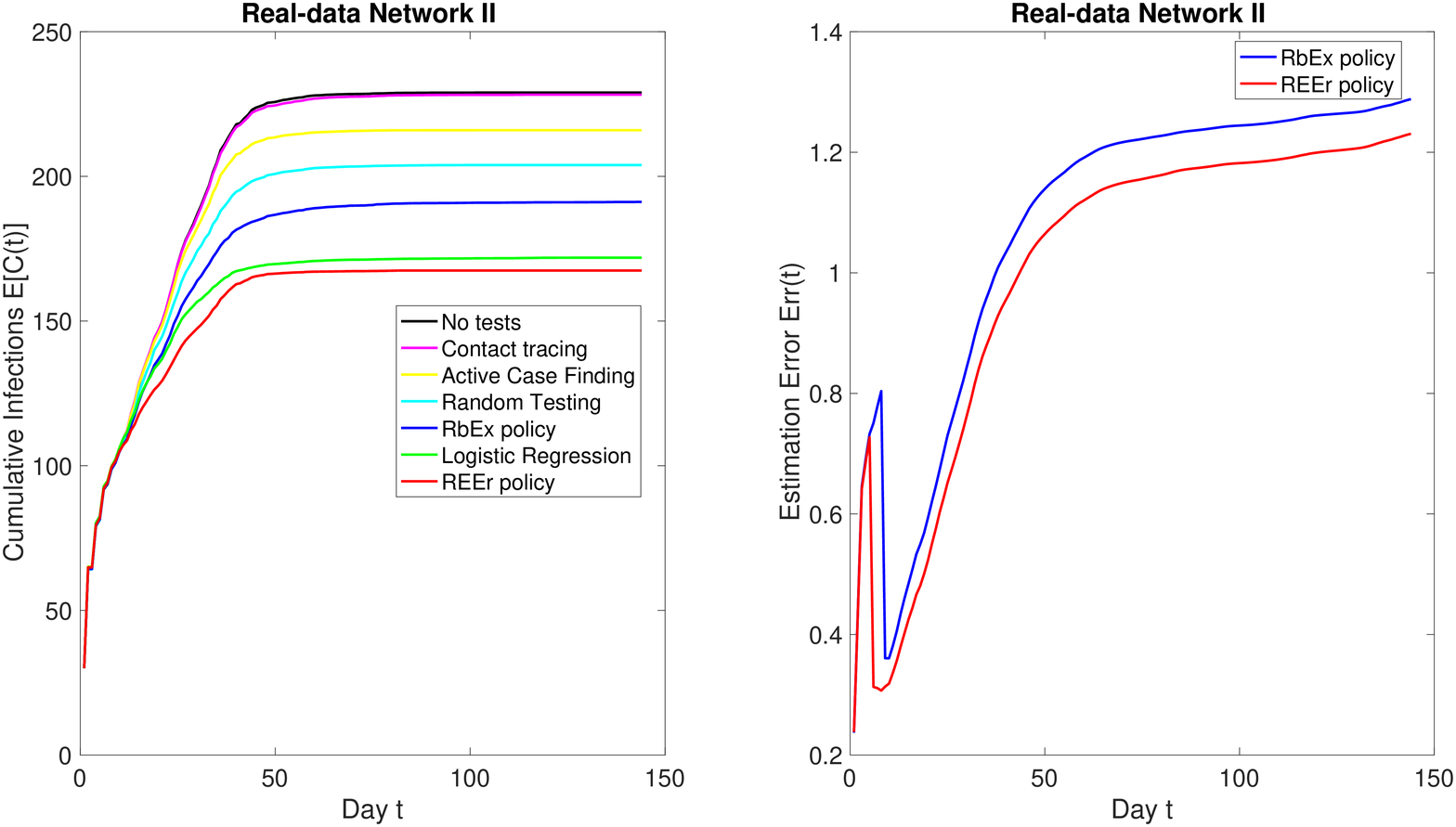}
	\caption{Performances and estimation errors of different policies in the real-data network~II when $\ell=8$.}
	\label{realdata2}
\end{figure}

	\begin{table}
		\centering
		\begin{tabular}{|c|c|c|c|}
			\hline  
			Real-data Network II, $\ell$& $5$ & $8$ & $11$  \\
			\hline
			${\tt Ratio}$ & $0.0808$ & $0.1039$ & $0.1208$\\
			\hline   
			$\Delta_{\tt Err}$ & $0.0317$ & $0.0535$ & $0.0615$  \\
			\hline
		\end{tabular}
		\caption{Role of the unregulated delay $\ell$}\label{tabla: real data 2}
	\end{table}
	
 In this section, we verify our proposed policies in real data networks (Real-data Network I and Real-data Network II). In Figure~\ref{realdata}, our proposed policies outperform the baselines, and the RbEx policy outperforms the RREr policy. In Figure~\ref{realdata2}, the REEr policy can contain the spread and outperform other baselines and RbEx, while the Logistic Regression policies outperforms RbEx. Comparing Figure~\ref{realdata} and Figure~\ref{realdata2}, we find that the RbEx policy performs well in Real-data Network I (better than the REEr policy), but performs not well in Real-data Network II (much worse than the REEr policy). In Figure~\ref{realdatanetwork12}~(left), we calculate the average edges per node on every day, and in Figure~\ref{realdatanetwork12}~(right), we calculate the number of components on every day. From Figure~\ref{realdatanetwork12}~(left), the Real-data Network I is denser than the Real-data Network II. However, from Figure~\ref{realdatanetwork12}~(right), the Real-data Network II often has more components (subgraphs) than the Real-data Network I. Thus, exploitation may become confined within some components (subgraphs), and fail to locate infectious nodes elsewhere, and exploration becomes more effective in presence of a large number of components. This explains the relative performances of REEr and RBEx in these. Contact tracing policy employs only exploitation, while active case finding policy uses most of its test budget for exploitation (and the small amount of the residual test budget for exploration). From Figure~\ref{realdata} and Figure~\ref{realdata2}, the contact tracing  and the active case finding policies perform relatively poorly  in the Real-data Network II compared to that in the Real-data Network I; this may again be attributed to the presence of a large number of components in the former.

\begin{figure}[t!]
	\centering
	\includegraphics[width=0.44\textwidth]{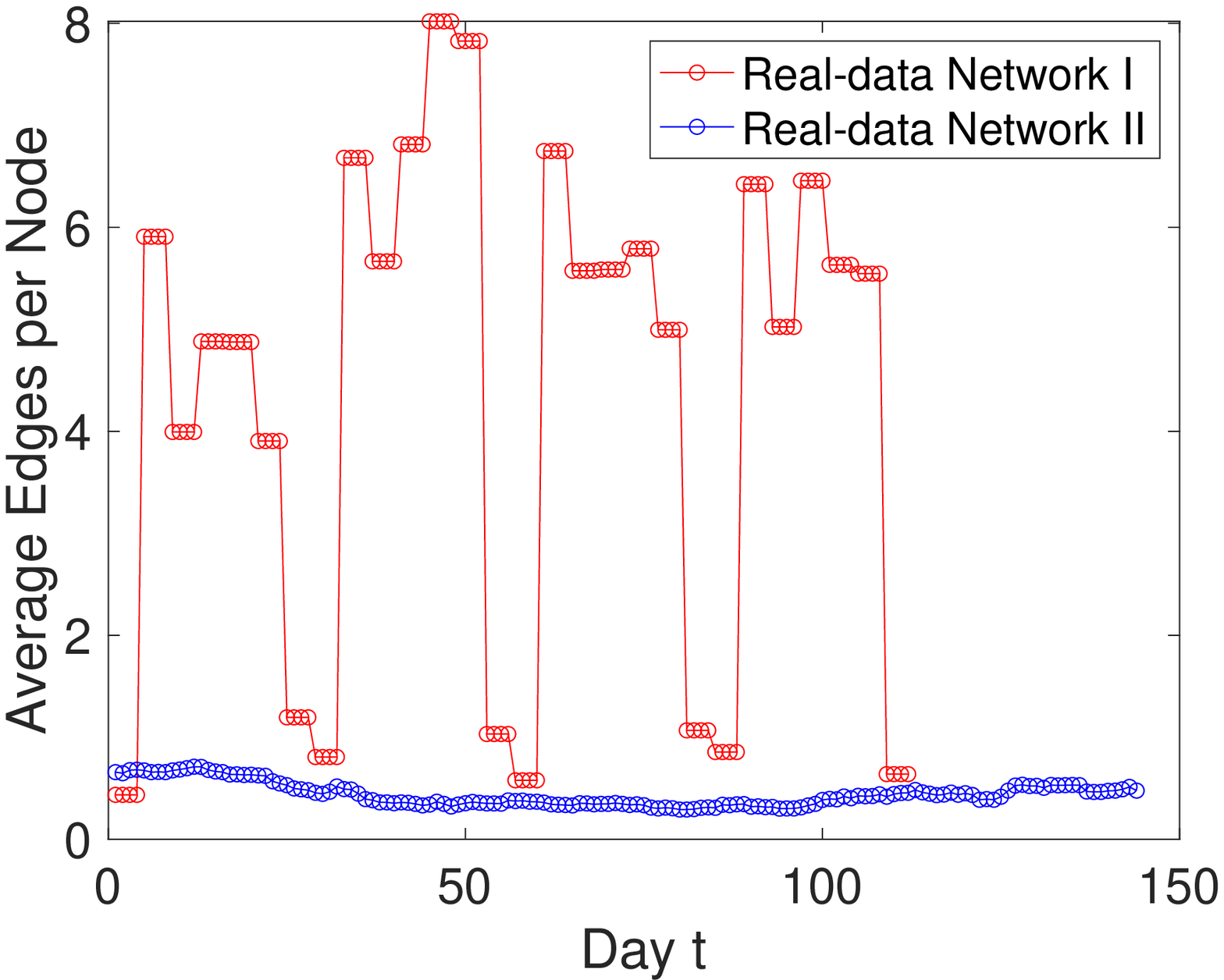}\quad
	\includegraphics[width=0.44\textwidth]{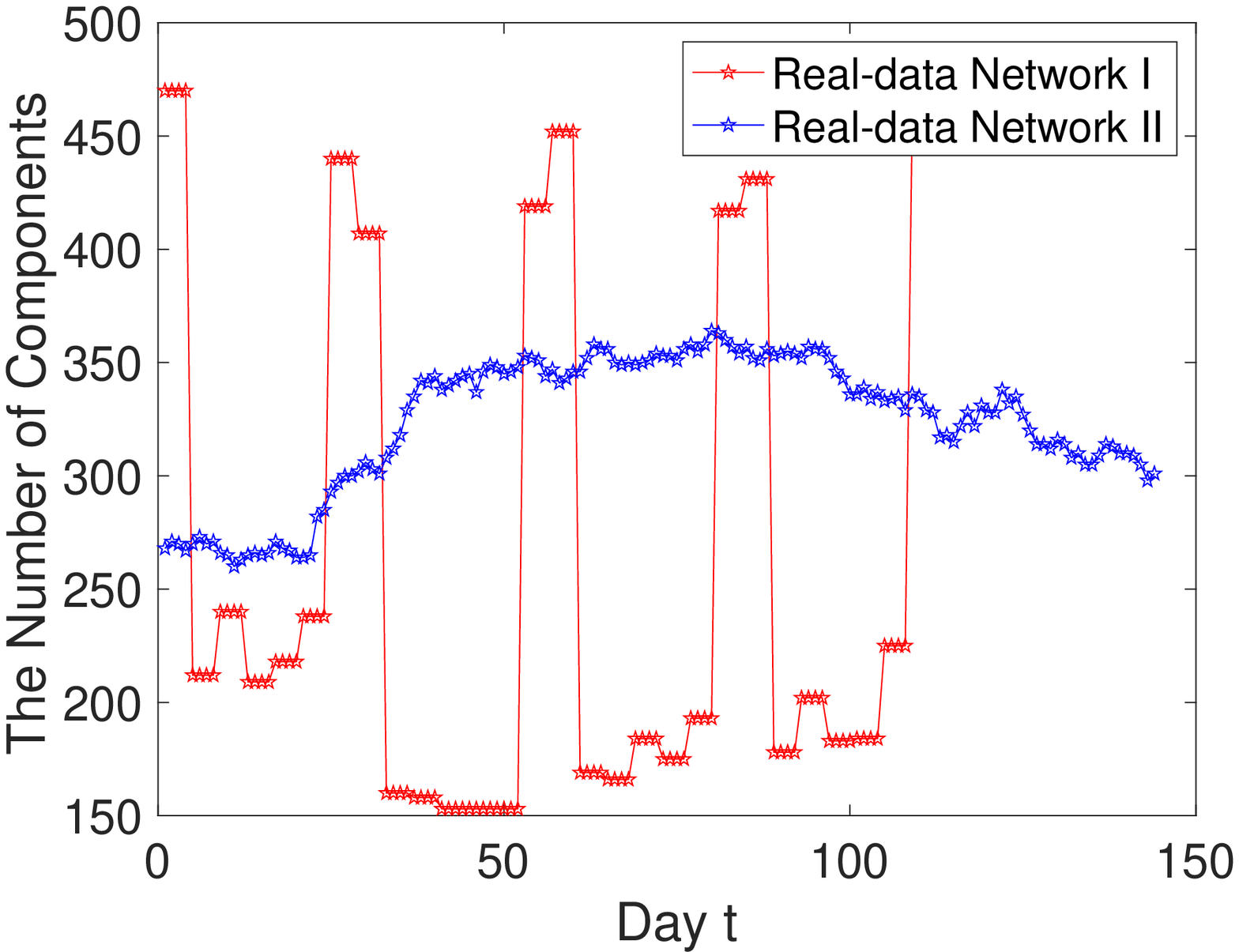}   
	\caption{Left: The average number of edges per node on each day. Right: The number of components on each day.}
	\label{realdatanetwork12}
\end{figure}

As $\ell$ increases, as we show in Table~\ref{tabla: real data 1} and Table~\ref{tabla: real data 2} that the benefit of exploitation decreases. In Table~\ref{tabla: real data 2}, because of a large number of components, exploration always outperforms exploitation. However, in Table~\ref{tabla: real data 1}, we observe that exploration outperforms exploitation only for larger  values of $\ell$. Our results are thus consistent with synthetic networks.

	\section{Conclusions and Future Work}\label{sec: conclusions}
	In this paper, we studied the problem of containing a spread process (e.g. an infectious disease such as COVID-19) through sequential testing and isolation. We modeled the spread process by a compartmental model that evolves in time and stochastically spreads over a given contact network.  Given a daily test budget, we aimed to minimize the cumulative infections. Under mild conditions, we proved that the problem can be cast as  minimizing  a supermodular function expressed in terms of nodes' probabilities of infection and proposed a greedy testing policy that attains a constant factor approximation ratio. We subsequently designed a computationally tractable reward-based policy that preferentially tests nodes that have  higher rewards, where the reward of a node is defined as the expected number of new infections it induces in the next time slot. We  showed that this policy effectively minimizes an upper bound on the cumulative infections. 
	
	These policies, however, need knowledge about  nodes' infection probabilities which are unknown and evolving. Thus, they  have to be actively learned by testing. We discussed how testing has a dual role in this problem: (i) identifying the infected nodes and isolating them in order to contain the spread, and (ii) providing better estimates for the nodes' infection probabilities. We proved that this dual role of testing makes decision making more challenging. In particular, we showed that  reward based policies that make decisions based on nodes' estimated infection probabilities can be arbitrarily sub-optimal while incorporating simple forms of exploration can boost their performance by a constant factor. Motivated by this finding, we devised exploration policies that probabilistically test nodes according to their rewards and  numerically showed that when (i) the unregulated delay, (ii) the global clustering coefficient, or  (iii) the average shortest path length increase, exploration becomes more beneficial  as it provides better estimates of the nodes' probabilities of infection.  
	
	Given the history of observations, computing nodes' estimated probabilities of infection  is itself a core challenge in our problem. We developed a message-passing framework to estimate these probabilities utilizing the observations in form of the test results. This framework passes messages back and forth in time to iteratively  predict the probabilities in future and correct the errors in the estimates in prior time instants. This framework can also be of independent interest.
	
	We showed novel tradeoffs between exploration and exploitation, different from the ones commonly observed in multi-armed bandit settings: (i) in our setting, the number of arms is time-variant and actions cannot be repeated; (ii) the tradeoffs in our setting are not due to lack of knowledge about the network or the process model, but rather due to lack of knowledge about the  time-evolving unknown set of infected nodes.
	
	We now describe directions for future research. 
	
	Our framework can be extended to incorporate delay and/or error in test results in a relatively straightforward manner (an outline of the extension incorporating a delay is given in  Appendix~\ref{App: Delay of Testing Results}), but generalizing the  performance guarantees for the proposed policies in these cases forms a direction of future research. This includes establishing fundamental lower bounds using genie-aided myopic policies. 
	
\subsection{Impact Statements}\label{sec: limitations}
		
We have made several assumptions for the purpose of analytical and computational tractability which do not hold in practice: (1) the infections from different nodes are independent (2) given the entire history of testing results the states of nodes on the truncation day are independent (Assumption~\ref{assu: independence in terminate}), (3) the symptoms need not be considered in deciding who should be tested and (4) the public health authority knows the entire network topology and uses it to determine who should be tested (5) independence of states of nodes (in one step). The first two assumptions were used to derive the message passing framework and to prove that the objective function is super-modular which in turn led to a myopic testing strategy which is also optimal. The first assumption is reasonable as specific actions of infected individuals, eg, coughing, touching,  spread the infection, which are undertaken independently.

		We now consider the second assumption, ie, Assumption~\ref{assu: independence in terminate}, in which  we assume that the nodes' states $\zeta(t-g)$ (in the posterior probability space on day $t-g$)  are independent. Note that $g$ is the truncation time for each backward step, that is, once we get the observations $\underline{Y}(t)$, we do the backward step and truncate at time $t-g$.  This assumption does not impose independence on the state of the nodes, but only in the posterior space at a specific time. That is, in the process of propagating information back to time $t-g$, we are assuming that there is no further correlation between time $t-g-1$ and time $t-g$ worthwhile to exploit given observations at time $t$.  Naturally, as $g$ gets larger and larger, our framework and calculations  become more precise, as the impact of the testing results  at time $t$ in inferring about the nodes' probabilities at time $t-g$ vanishes as $g$ gets large. But increase in $g$ significantly increases the computation time. Therefore, for computational tractability, of the backward update equations, we use $g=1$. In principle the derivations of the backward update equations can be generalized in a straightforward manner to $g > 1.$ But designing approximation strategies that ensure computational tractability  for larger $g$ constitutes a direction of future research. 
		
		Consider the third assumption. We have not considered symptoms in determining who to test. But for some infectious diseases, symptoms are a reliable manifestation of the disease (e.g., Ebola). In principle our testing framework can be generalized in a straightforward manner to consider symptoms by introducing additional states in the compartmental model for evolution of the disease. But introduction of additional states significantly increases the computation time, for example of the forward and backward updates of the probabilities that individuals have the disease, which renders implementation of our framework challenging. Considering symptoms while retaining computational tractability constitutes a direction of future research.

Next, consider the fourth assumption. In practice, public health authorities will not typically know contact networks in their entirety particularly when they are large, for example, as in large cities. However, small network topologies, for example, contact networks within a community, may be observed by the public health authority. As a specific example, the Government of China fully detected contact networks in many communities in Wuhan and tracked paths traversed by every individual \cite{YXLN2020}. This tracking may also generate concerns about privacy which is beyond the scope of this paper. Nonetheless, the technology for learning contact networks in their entirety for small communities exists and our framework can be utilized for those. Generalizing our framework to obtain approximation guarantees when contact networks can only be partially observed constitutes a direction of future research.

Finally consider the last assumption. Note that it is a strong assumption and clearly does not hold in general  but it has been resorted to for only one step in the entire framework. Specifically to obtain Equation (\ref{eq: fEF}) we have assumed that the state of the nodes are independent. This allows us to obtain a simple expression in (\ref{eq: fEF}) in terms of the infection probabilities. We do not use this independence assumption in the rest of the paper.

\subsection*{Acknowledgments} This work was supported by NSF CAREER Award 2047482,  NSF Award 1909186, NSF Award 1910594, and NSF Award 2008284.
	
	\bibliographystyle{unsrt}
	\bibliography{tmlr}
	
	\appendix

	\section{Proof of Lemma~\ref{lem: telescopic sum}}\label{App: lemma telescopic sum}
	Note that a node is counted in $C^\pi(t)$ once it has been infected. Then, on day $t+1$, $C^\pi(t+1)$ increases (comparing to $C^\pi(t)$) only because some susceptible nodes are infected by  infectious nodes and are in the latent state for the first time. 
	
	After testing, positive nodes in $\mathcal{K}^\pi(t)$ would not infect others because they are quarantined, and negative nodes would not infect others due to the model assumptions. Hence
	\begin{align*}
		C^\pi(t+1) = C^\pi(t) + \sum_{i\in\mathcal{V}(t)}F_i(\mathcal{V}(t)\backslash\mathcal{K}^\pi(t);t).
	\end{align*}
	Taking the expectation on both sides, we obtain the desired result.

	\section{Proof of Theorem~\ref{thm: supmodular}}\label{App: proof of supmodular}
	To show $S\big(\mathcal{K}^\pi(t); t\big)$ defined in (\ref{eq: supmodular}) is a supermodular function. It suffices to show that for any $\mathcal{A}\subset\mathcal{B}\subset\mathcal{V}(t)$, and for $x\in\mathcal{V}(t)\backslash\mathcal{B}$, we have
	\begin{align}\label{eq: sup1}
		S\big(\mathcal{A}\cup\{x\}; t\big) - S\big(\mathcal{A}; t\big) \leq S\big(\mathcal{B}\cup\{x\}; t\big) - S\big(\mathcal{B}; t\big).
	\end{align}
	Then, it suffices to show for any $i\in\mathcal{V}(t)$,
	\begin{align}\label{eq: sup2}
		\mathbb{E}[F_i\big(\mathcal{A}\cup\{x\};t\big)] - \mathbb{E}[F_i\big(\mathcal{A};t\big)] \leq \mathbb{E}[F_i\big(\mathcal{B}\cup\{x\};t\big)] - \mathbb{E}[F_i\big(\mathcal{B};t\big)].
	\end{align}
	Now, we consider  three cases.
	
	\noindent{\bf Case 1}. If $\mathcal{A}\cap\partial_i(t) = \mathcal{B}\cap\partial_i(t)$, then from (\ref{eq: fEF}), the LHS and RHS in (\ref{eq: sup2}) are exactly the same. Hence, (\ref{eq: sup2}) holds.
	
	\noindent{\bf Case 2}. If $\mathcal{A}\cap\partial_i(t) \subset \mathcal{B}\cap\partial_i(t)$, and $x\notin\partial_i$, then from (\ref{eq: fEF}), $f_i(\mathcal{A}\cup\{x\}) = f_i(\mathcal{A})$ and $f_i(\mathcal{B}\cup\{x\}) = f_i(\mathcal{B})$. Hence  (\ref{eq: sup2}) holds.

	\noindent{\bf Case 3}. If $\mathcal{A}\cap\partial_i(t) \subset \mathcal{B}\cap\partial_i(t)$, and $x\in\partial_i(t)$, let $\mathcal{Y} = \big(\mathcal{B}\cap\partial_i(t)\big)\backslash\big(\mathcal{A}\cap\partial_i(t)\big)$. Here $x\notin\mathcal{Y}$. From (\ref{eq: fEF}), we can compute
	\begin{align*}
		&\mathbb{E}[F_i(\mathcal{A}\cup\{x\};t)] - \mathbb{E}[F_i(\mathcal{A};t)]\\
		&=v_S^{(i)}(t)\prod_{j\in\partial_{i}(t)\backslash(\mathcal{A}\cup\{x\})}\big(1-\beta v_I^{(j)}(t)\big)\\
		&\times\Big(1 - \prod_{j\in\partial_i(t)\cap(\mathcal{A}\cup\{x\})}(1-\beta v_I^{(j)}(t))-(1-\beta v_I^{(x)}(t))\big(1 - \prod_{j\in\partial_i(t)\cap\mathcal{A}}(1-\beta v_I^{(j)}(t))\big)\Big),
	\end{align*}
	which implies
	\begin{align*}
		&\mathbb{E}[F_i(\mathcal{A}\cup\{x\};t)] - \mathbb{E}[F_i(\mathcal{A};t)]\\
		&=v_S^{(i)}(t)\prod_{j\in\partial_{i}(t)\backslash(\mathcal{A}\cup\{x\})}\big(1-\beta v_I^{(j)}(t)\big)\\
		&\times\Big(\beta v_I^{(x)}(t) - \prod_{j\in\partial_i(t)\cap(\mathcal{A}\cup\{x\})}(1-\beta v_I^{(j)}(t))+ \prod_{j\in\partial_i(t)\cap(\mathcal{A}\cup\{x\})}(1-\beta v_I^{(j)}(t))\big)\Big)\\
		=&v_S^{(i)}(t)\prod_{j\in\partial_{i}(t)\backslash(\mathcal{A}\cup\{x\})}\big(1-\beta v_I^{(j)}(t)\big)\beta v_I^{(x)}(t).
	\end{align*}
	Similarly, note that $\big(\mathcal{B}\cap\partial_i(t)\big)=\big(\mathcal{A}\cap\partial_i(t)\big)\cup\mathcal{Y}$. We have
	\begin{align*}
		&\mathbb{E}[F_i(\mathcal{B}\cup\{x\};t)] - \mathbb{E}[F_i(\mathcal{B};t)]\\
		=&v_S^{(i)}(t)\prod_{j\in\partial_{i}(t)\backslash(\mathcal{A}\cup(\{x\}\cup\mathcal{Y}))}\big(1-\beta v_I^{(j)}(t)\big)\beta v_I^{(x)}(t).
	\end{align*}
	Thus,
	\begin{align*}
		\frac{\mathbb{E}[F_i(\mathcal{A}\cup\{x\};t)] - \mathbb{E}[F_i(\mathcal{A};t)]}{\mathbb{E}[F_i(\mathcal{B}\cup\{x\};t)] - \mathbb{E}[F_i(\mathcal{B};t)]} = \prod_{y\in\mathcal{Y}}\big(1-\beta v_I^{(y)}(t)\big)\leq 1,
	\end{align*}
	which implies $S\big(\mathcal{TP}^\pi(t)\big)$ is supmodular. 
	
	To show $S\big(\mathcal{K}^\pi(t); t\big)$ is an increasing monotone function on $\mathcal{K}^\pi(t)$, it suffices to show $\mathbb{E}[F_i\big(\mathcal{K}^\pi(t);t\big)]$ is an increasing monotone function on $\mathcal{K}^\pi(t)$ for any $i$.
	
	For $\mathcal{A}\subset\mathcal{B}$, we have $\partial_{i}(t)\backslash\mathcal{B}\subset \partial_{i}(t)\backslash\mathcal{A}$, and $\mathcal{A}\cap\partial_{i}(t)\subset \mathcal{B}\cap\partial_{i}(t)$. Then
	\begin{align*}
		&\prod_{j\in\partial_i(t)\backslash\mathcal{B}}\big(1-\beta v_I^{(j)}(t)\big)\geq \prod_{j\in\partial_i(t)\backslash\mathcal{A}}\big(1-\beta v_I^{(j)}(t)\big)\\
		&\prod_{j\in\mathcal{B}\cap\partial_i(t)}(1-\beta v_I^{(j)}(t))\leq \prod_{j\in\mathcal{A}\cap\partial_i(t)}(1-\beta v_I^{(j)}(t)),
	\end{align*}
	and thus, from (\ref{eq: fEF}), we have $\mathbb{E}[F_i\big(\mathcal{A};t\big)]\leq \mathbb{E}[F_i\big(\mathcal{B};t\big)]$.

	\section{Complexity of Algorithm~\ref{alg: Greedy Algorithm}}\label{App: complexity of greedy}
	First of all, we consider the complexity of (\ref{eq: fEF}). Suppose $\{\underline{v}_i(t)\}_{i\in\mathcal{V}(t)}$ is given for every day~$t$. For any $\mathcal{K}^\pi(t)$, the complexity of computing (\ref{eq: fEF}) is 
	\begin{align*}
		1 + |\partial_i(t)\backslash\mathcal{K}^\pi(t)| - 1 + 1 + |\partial_i(t)\backslash\mathcal{K}^\pi(t)| + |\partial_i(t)\cap\mathcal{K}^\pi(t)|-1 + |\partial_i(t)\cap\mathcal{K}^\pi(t)| =2|\partial_{i}(t)|.
	\end{align*}
	Then, for any $\mathcal{K}^\pi(t)$, the complexity of computing $S\big(\mathcal{K}^\pi(t); t\big)$ is
	\begin{align*}
		2\sum_{j\in\mathcal{V}(t)}|\partial_j(t)|.
	\end{align*}
	From Algorithm~\ref{alg: Greedy Algorithm}, in step~$i$, the complexity is
	\begin{align*}
		\big(N(t)-i+1\big)\times 2\sum_{j\in\mathcal{V}(t)}|\partial_{j}(t)|.
	\end{align*}
	And in total we have $\big(N(t)-|\mathcal{K}^\pi(t)|\big)$ steps, therefore, 
	on day $t$ the complexity of Algorithm~\ref{alg: Greedy Algorithm} is
	\begin{align*}
		\sum_{i=0}^{N(t)-|\mathcal{K}^\pi(t)|}2\big(N(t)-i+1\big)\sum_{j\in\mathcal{V}(t)}|\partial_{j}(t)|.
	\end{align*}
	Recall that the time horizon is $T$, then the total complexity of Algorithm~\ref{alg: Greedy Algorithm} is
	\begin{align*}
		\sum_{t=0}^{T-1}\sum_{i=0}^{N(t)-|\mathcal{K}^\pi(t)|}2\big(N(t)-i+1\big)\sum_{j\in\mathcal{V}(t)}|\partial_{j}(t)|.
	\end{align*}
	Note that 
	\begin{align*}
		\sum_{i=1}^{N(t)-|\mathcal{K}^\pi(t)|}2\big(N(t)-i+1\big) \leq & O\big(N^2(t)\big)\\
		\sum_{i\in\mathcal{V}(t)}|\partial_{i}(t)| \leq & O\big(N^2(t)\big).
	\end{align*}
	Then, the total complexity is bounded by
	\begin{align*}
		O\Big(\sum_{t=0}^{T-1} N^4(t)\Big).
	\end{align*}

	\section{Proof of Lemma~\ref{lem: upperbound}}\label{App: supermodularity}
	As defined in  \cite{V2001} (an equivalent definition of footnote~3), consider a finite set $I$, $f: 2^I\to\mathbb{R}$ is a supermodular function if for all $X, Y\subset I$,
	\begin{align}\label{eq: XYsupermodular}
		f(X\cup Y) + f(X\cap Y) \geq f(X) + f(Y).
	\end{align}

	Following the supermodularity of function $S(\cdot)$ as shown in Theorem~\ref{thm: supmodular}, set $X = \mathcal{V}(t)\backslash\mathcal{K}^\pi(t)$ and $Y = \mathcal{K}^\pi(t)$ in (\ref{eq: XYsupermodular}), we have
	\begin{align}\label{eq: supmodular1}
		S\big(\mathcal{V}(t)\backslash\mathcal{K}^\pi(t); t\big) \leq S\big(\mathcal{V}(t); t\big) - S\big(\mathcal{K}^\pi(t); t\big).
	\end{align} 
	Again, set $X = \mathcal{K}^\pi(t)\backslash\{i\}$ and $Y = \{i\}$ in (\ref{eq: XYsupermodular}), and use (\ref{eq: XYsupermodular}) repeatedly to obtain:
	\begin{align}
		&S\big(\mathcal{K}^\pi(t); t\big)\geq\sum_{i\in\mathcal{K}^\pi(t)}S\big(\{i\}; t\big)=\sum_{i\in\mathcal{K}^\pi(t)}r_i(t).\label{eq: upperbound}
	\end{align} 
	
	Substituting  (\ref{eq: upperbound}) in (\ref{eq: supmodular1}), we obtain 
	\begin{align*} S\big(\mathcal{V}(t)\backslash\mathcal{K}^\pi(t); t\big)\leq S\big(\mathcal{V}(t); t\big)-\sum_{i\in\mathcal{K}^\pi(t)}r_i(t). 
	\end{align*}

	\section{Local Transition Equations}\label{App: Local Transition Equations}
	
	In this section, we will describe the local transition matrix ${\tt P}_i\big(\{\underline{v}_j(t)\}_{j\in\partial^+_i(t)}\big)$ used in (\ref{eq: matrix form}). The state of each node evolves as follows: (i) if node $i$ is susceptible on day $t$, then it might be infected by its neighbors in $\partial_i(t)$; (ii) an infectious  node remains in the latent state with probability $1-\lambda$, and changes state to the infectious state ($I$) with probability $\lambda$; (iv) if node $i$ is in state $I$, it will recover after a geometric distribution with parameter $\gamma$.   Let $\xi_i(t)=1-\prod_{m\in \partial_i(t)}\big(1\! -\! v_{I}^{(m)}(t)\beta\big)$. 
	In particular, define $\xi_i(t)=0$ if $\partial_i(t)=\varnothing$.
	Then, the probabilities of nodes being in different states evolve in time as follows:
	\begin{align}
		v_{I}^{(i)}(t+1) = & v_{I}^{(i)}(t)(1-\gamma) + v_{L}^{(i)}(t)\lambda\label{eq: uI_a}\\
		v_{L}^{(i)}(t+1) =&v_{L}^{(i)}(t)(1-\lambda) + v_{S}^{(i)}(t)\xi_i(t)\label{eq: uL}\\
		v_{R}^{(i)}(t+1) = & v_{R}^{(i)}(t) + v_{I}^{(i)}(t)\gamma\label{eq: uR}\\
		v_{S}^{(i)}(t+1) =& v_{S}^{(i)}(t)\big(1-\xi_i(t)\big)\label{eq: uS}.
	\end{align}
	Note that row vector $\underline{v}_i(t)$ is defined in (\ref{eq: rowvectorv}).	Collecting 	(\ref{eq: uI_a}) - (\ref{eq: uS}), we define the local transition probability matrix as given below:
	\begin{equation}\label{eq: transition probability matrix}
		\begin{aligned}
			&{\tt P}_i\big(\{\underline{v}_j(t)\}_{j\in\partial^+_i(t)}\big)\! =\! \left[\begin{matrix}
				(1\!-\!\gamma)&0&\gamma&0\\
				\lambda &1\!-\!\lambda&0&0\\
				0&0&1&0\\
				0&\xi_i(t)&0&1-\xi_i(t)
			\end{matrix}\right].
		\end{aligned} 
	\end{equation}
	and we obtain (\ref{eq: matrix form}).

	\section{Proofs of (\ref{eq: estimate matrix form1}) and (\ref{eq: estimate matrix form2})}\label{App: Proof two equations}
	First of all, we give the following definition.
	\begin{definition}\label{def: condition random variable}
		Let $X$ be a random variable and $\mathcal{B}$ be an event. Define $X|_{\mathcal{B}}$ as the random variable $X$ given $\mathcal{B}$;~i.e.,
		
		\begin{align}\label{eq: condition random variable}
			\Pr\big(X|_{\mathcal{B}}=x\big) = \Pr\big(X=x|\mathcal{B}\big).
		\end{align}
	\end{definition}
	
	For brevity, let us define 
	\begin{align*}
		\theta_i(t) = \sigma_i(t)|_{\{\underline{Y}(\tau)\}_{\tau=1}^{t-1}},\quad \zeta_i(t) = \sigma_i(t)|_{\{\underline{Y}(\tau)\}_{\tau=1}^{t}}.
	\end{align*}
	We thus have
	\begin{align*}
		u_x^{(i)}(t) = \Pr\big(\theta_i(t)=x\big),\quad w_x^{(i)}(t) = \Pr\big(\zeta_i(t)=x\big).
	\end{align*}
	
	Recall that 
	\begin{align*}
		\underline{v}_i(t) =&  \big[v_x^{(i)}(t)\big]_{ x\in\mathcal{X}},\,\, v_{x}^{(i)}(t)= \Pr\big(\sigma_i(t) = x\big).
	\end{align*}
	Then, \eqref{eq: matrix form} can be re-written as
	\begin{align}\label{eq: statechangeeq}
		\Pr\big(\sigma_i(t+1)=x'_i\big) = \Pr\big(\sigma_i(t)=x_i\big){\tt P}_i\Big(\{\sigma_j(t)\}_{j\in\partial^+_i(t)} = \{x_j\}_{j\in\partial^+_i(t)}\Big),
	\end{align}
	where $x_i', \{x_j\}_{j\in\partial^+_i(t)}\in\mathcal{X}$.
	Conditioning both sides of (\ref{eq: statechangeeq}) on $\{\underline{Y}(\tau)\}_{\tau=1}^{t-1}$, state variables 
	$\sigma_i(t)$ and $\sigma_i(t-1)$ in (\ref{eq: statechangeeq}) can be replaced by $\theta_i(t)$ and $\zeta_i(t-1)$, respectively, to obtain
	\begin{align}\label{eq: estimate matrix form}
		\underline{u}_i(t) = \underline{w}_i(t-1)\times{\tt P}_i\big(\{\underline{w}_j(t-1)\}_{j\in\partial^+_i(t-1)}\big),
	\end{align}
	which gives (\ref{eq: estimate matrix form1}).
	In addition, define 
	\begin{align*}
		\phi_i(t) =  \sigma_i(t)|_{\{\underline{Y}(\tau)\}_{\tau=1}^{t+1}},
	\end{align*}
	and
	\begin{equation}\label{eq: e def}
		\begin{aligned}
			&\underline{e}_i(t-1) = (e_x^{(i)}(t-1), x\in\mathcal{X}),\\
			&e_x^{(i)}(t-1) = \Pr\big(\phi_i(t-1) = x\big).
		\end{aligned}
	\end{equation}
	
	This notation implies 
	\begin{align}\label{eq: thetait}
		\phi_i(t-1) = \theta_i(t-1)|_{\underline{Y}(t)}.    
	\end{align}
	
	Similarly, conditioning both sides of  (\ref{eq: estimate matrix form}) on $\underline{Y}(t)$, we find
	\begin{align}
		\label{eq:we}
		\underline{w}_i(t) = \underline{e}_i(t-1)\times\tilde{{\tt P}}_i\big(\{\underline{e}_j(t-1)\}_{j\in\partial^+_i(t-1)}\big),
	\end{align}
	which gives (\ref{eq: estimate matrix form2}). $\tilde{P}_i(\{\underline{e}_j(t-1)\}_{j\in\partial^+_i(t-1)})$ is obtained in the following subsection.
	\subsection{Computing the transition probability matrix $\tilde{P}_i(\{\underline{e}_j(t-1)\}_{j\in\partial^+_i(t-1)})$}
	
	Note that $\tilde{P}_i(\{\underline{e}_j(t-1)\}_{j\in\partial^+_i(t-1)})$ is not the same as $P_i(\{\underline{w}_j(t)\}_{j\in\partial^+_i(t)})$. {\it This is because ``future'' observations were available in $\tilde{P}_i(\{\underline{e}_j(t-1)\}_{j\in\partial^+_i(t-1)})$}.
	To get $\tilde{{\tt P}}_i\big(\{\underline{e}_j(t-1)\}_{j\in\partial^+_i(t-1)}\big)$,
	we  split the nodes $\mathcal{V}(t)$ into two classes of nodes: (i)  nodes who do not get new observations and (ii)  nodes who get new observations. $\tilde{P}_i(\{\underline{e}_j(t-1)\}_{j\in\partial^+_i(t-1)})$ is obtained by the following rules. For the first class of nodes, the local transition matrix in (\ref{eq:we}), i.e., $\tilde{P}_i(\{\underline{e}_j(t-1)\}_{j\in\partial^+_i(t-1)})$, is the same as that in (\ref{eq: estimate matrix form}). However, for the second class of nodes, the local transition matrices are changed accordingly because of the new observations. 
	Let $[A]_{\{i,:\}}$ be the $i^{th}$ row of matrix $A$, and $q_i$ be a $1\times 4$ vector with the $i^{th}$ element being one and the rest zero. For brevity, denote the local transition matrices in (\ref{eq: estimate matrix form}) and (\ref{eq:we}) by ${\tt P}_i(t-1)$ and $\tilde{{\tt P}}_i(t-1)$, respectively. We have the following three cases:
	\begin{itemize}
		\item [(i)] If node $i$ is not observed, then node $i$ does not have new observation and we have 
		\begin{align}\label{eq: Ptilde1}
			\tilde{P}_i(t-1) = P_i(t-1).
		\end{align}
		\item [(ii)] If $Y_i(t)=0$,  then node $i$ is not infectious in day $t$ with probability $1$. The local transition matrix is changed to
		\begin{align}\label{eq: Ptilde2}
			[\tilde{{\tt P}}_i(t-1)]_{\{j,:\}} = \left\{
			\begin{aligned}
				&q_3&&j = 1\\
				&q_2&&j = 2\\
				&[{\tt P}_i(t-1)]_{\{j,:\}}&&\text{otherwise}
			\end{aligned}
			\right..
		\end{align}
		\item [(iii)] If $Y_i(t)=1$,  then node $i$ is infectious in day $t$ with probability $1$. The local transition matrix is changed to
		\begin{align}\label{eq: Ptilde3}
			[\tilde{{\tt P}}_i(t-1)]_{\{j,:\}} = \left\{
			\begin{aligned}
				&q_1&&j = 1\\
				&q_1&&j=2\\
				&[{\tt P}_i(t-1)]_{\{j,:\}}&&\text{otherwise}
			\end{aligned}
			\right..
		\end{align}
		
	\end{itemize}

	\section{Proofs of (\ref{eq: chain rule of e2}) and (\ref{eq: fenzi34})}\label{App: proof for e}\label{AppendixG}

	Using new observations, we aim to move backward in time and update  our belief (posterior probability) in previous time slots. Define a {\it truncation number} $g$ and suppose  
	that $\{\underline{Y}(t)\}$ affects the posterior probabilities from day $t$ to day $t-g$. 
	We call day $t-g$ the {\it truncation day} associated with day $t$. To get  accurate posterior probabilities in every day, we need to set $g=t$ on every day $t$ and track back to the initial time.
	However, the influence weakens as time elapses backwards, and for computation tractability, we continue under the following assumption where $g=1$. Recall that $\zeta_i(t)=\sigma_i(t)|_{\{\underline{Y}(\tau)\}_{\tau=1}^{t}}$.
	\begin{assumption}\label{assu: independence in terminate}
		On the truncation day $(t-g)$, $\{\zeta_i(t-g)\}_i$ are independent over $i$. In the following, the truncation number is assumed to be $g=1$.
	\end{assumption}
	\begin{remark}
		In Assumption~\ref{assu: independence in terminate}, we assume that the nodes' states $\zeta(t-g)$ (in the posterior probability space on day $t-g$)  are independent. This assumption is only used at time $t$ of our probability update in a moving window kind of way. 
		It provides us with a truncation time for each backward step. In particular, under Assumption~\ref{assu: independence in terminate}, once we get the observations $\underline{Y}(t)$, we do the backward step and truncate at time $t-g$. 
		For example, in the trivial case of  $g=t$, the assumption holds. This assumption does not impose independence on the state of the nodes, but only in the posterior space at a specific time. In a sense, in the process of propagating information back to time $t-g$, we are assuming that there is no further correlation between time $t-g-1$ and time $t-g$ worthwhile to exploit given observations at time $t$.  Naturally, as $g$ gets larger and larger, our framework and calculations  become more precise but this comes at a huge computational cost.
		The idea behind truncating the backward step lies in the observation that the impact of the testing results  at time $t$ in inferring about the nodes' probabilities at time $t-g$ vanishes as $g$ gets large. For simplicity of derivations and to have manageable complexity, we set $g=1$. The idea and the derivations can be generalized in a straightforward manner to larger $g$. 
	\end{remark}

	Note that the posterior probabilities on day $t-1$, $\underline{w}_i(t-1),\,i\in\mathcal{V}(t-1)$,  are assumed known (and are conditioned on the history of observations $\{\underline{Y}(\tau)\}_{\tau=1}^{t-1}$). The probability vector $\underline{e}_i(t-1)$ is the new posterior probability at time $t-1$ which is updated (from $\underline{w}_i(t-1)$) based on new observations $\underline{Y}(t)$. In other words, we infer about the previous state of the nodes given new observations at present time.

	To obtain $\{\underline{w}_i(t)\}_{i\in\mathcal{V}(t)}$, it suffices to obtain $\underline{e}_i(t-1)$ and the corresponding local transition matrix $\tilde{{\tt P}}_i\big(\{\underline{e}_j(t-1)\}_{j\in\partial^+_i(t-1)}\big)$, see (\ref{eq: estimate matrix form2}).
	Note that the posterior probabilities  $\underline{w}_i(t-1),\,i\in\mathcal{V}(t-1)$, which are calculated based on $\underline{Y}(t-1)$, are known. The vector $\underline{e}_i(t-1)$ is the new posterior probability which is updated based on $\underline{Y}(t)$ and $\underline{w}_i(t-1)$.

	Equation (\ref{eq: chain rule of e2}), which we aim to prove, simply follows from Definition~\ref{def: condition random variable}, (\ref{eq: e def})-(\ref{eq: thetait}), and Bayes rule:
	\begin{equation}\label{eq: chain rule of e4}
		\begin{aligned}
			e_x^{(i)}(t-1) = \Pr\big(\zeta_i(t-1) = x|\underline{Y}(t)\big)=\frac{\Pr\big(\underline{Y}(t)|\zeta_i(t-1) = x\big)w_x^{(i)}(t-1)}{\Pr\big(\underline{Y}(t)\big)}.
		\end{aligned}
	\end{equation}
	To find  $\Pr\big(\underline{Y}(t)|\zeta_i(t-1) = x\big)$, and establish (\ref{eq: fenzi34}), we now proceed as follows.
	We introduce $\{\theta_j(t)\}_{j\in\mathcal{O}(t)}$ into (\ref{eq: chain rule of e4}). In particular, we have
	\begin{equation*}
		\begin{aligned}
			&\Pr\big(\underline{Y}(t)|\zeta_i(t-1) = x\big)=\sum_{\theta_j(t),\ j\in\mathcal{O}(t)}\Pr\big(\{\theta_j(t)\}_{j\in\mathcal{O}(t)}, \underline{Y}(t)|\zeta_i(t-1) = x\big)
		\end{aligned}
	\end{equation*}
	By the chain rule of conditional probability,
	\begin{equation*}
		\begin{aligned}
			&\Pr\big(\underline{Y}(t)|\zeta_i(t-1) = x\big)\\
			=&\sum_{\theta_j(t),\ j\in\mathcal{O}(t)}\Pr\big(\underline{Y}(t)|\{\theta_j(t)\}_{j\in\mathcal{O}(t)},\zeta_i(t-1) = x\big)\times\Pr\big(\{\theta_j(t)\}_{j\in\mathcal{O}(t)}|\zeta_i(t-1) = x\big).
		\end{aligned}
	\end{equation*}
	
	From (\ref{eq: zetathetat}),  $\{\zeta_j(t)\}_{j\in\mathcal{V}(t)}$ and $\{\theta_j(t)\}_{j\in\mathcal{V}(t)}$ are variables defined by $\{\sigma_j(t)\}_{j\in\mathcal{V}(t)}$ in posterior spaces of $\{\underline{Y}(\tau)\}_{\tau=1}^{t}$ and $\{\underline{Y}(\tau)\}_{\tau=1}^{t-1}$, respectively. Since $\underline{Y}(t)$ is a deterministic function of  $\{\sigma_j(t)\}_{j\in\mathcal{O}(t)}$, and hence  $\{\theta_j(t)\}_{j\in\mathcal{O}(t)}$, then $\underline{Y}(t)$ is independent of $\zeta_i(t-1)$ given $\{\theta_j(t)\}_{j\in\mathcal{O}(t)}$. In addition, the testing result $Y_j(t)$ (on day $t$) of  node $j$ only depends on its state, i.e., given $\theta_j(t)$, the testing results are determined.  Therefore, we have
	\begin{align*}
	\Pr\big(\underline{Y}(t)|\{\theta_j(t)\}_{j\in\mathcal{O}(t)},\zeta_i(t-1) = x\big)=\Pr\big(\underline{Y}(t)|\{\theta_j(t)\}_{j\in\mathcal{O}(t)}\big)=\prod_{j\in \mathcal{O}(t)}\Pr\big(Y_j(t)|\theta_j(t)\big).
	\end{align*}
	The product above is an indicator which takes values on $\{0,1\}$. We can thus re-write it as follows: 
	\begin{align*}
		&\Pr\big(\underline{Y}(t)|\{\theta_j(t)\}_{j\in\mathcal{O}(t)},\zeta_i(t-1) = x\big)\triangleq\delta({\{Y_j(t), \theta_j(t)\}_{j\in\mathcal{O}(t)}}).
	\end{align*}
	where $$\delta({\{Y_j(t), \theta_j(t)\}_{j\in\mathcal{O}(t)}})=1$$ if the pairs $\{Y_j(t), \theta_j(t)\}_{j\in\mathcal{O}(t)}$ are consistent,  and  $$\delta({\{Y_j(t), \theta_j(t)\}_{j\in\mathcal{O}(t)}})=0$$ otherwise.
	
	Next, define 
	\begin{align*}
		\Theta_i(t) = \{j| j\in\partial^+_k(t-1), k\in\mathcal{O}(t)\}\backslash \{i\}
	\end{align*}
	to represent the neighbors (in day $t-1$) of nodes in $\mathcal{O}(t)$ excluding node $i$. Then,
	\begin{equation}\label{eq: Theta Phi}
		\begin{aligned}
			&\Pr\big(\{\theta_j(t)\}_{j\in\mathcal{O}(t)}|\zeta_i(t-1) = x\big)=\!\!\!\!\sum_{\zeta_l(t\!-\!1),\  l\in\Theta_i(t)}\!\!\!\!\!\!\!\!\!\!\Pr\big(\{\theta_j(t)\}_{j\in\mathcal{O}(t)},\{\zeta_l(t\!-\!1)\}_{l\in\Theta_i(t)}|\zeta_i(t-1) = x\big).
		\end{aligned}
	\end{equation}
	By the chain rule of conditional probability,
	\begin{equation*}
		\begin{aligned}
			&\Pr\big(\{\theta_j(t)\}_{j\in\mathcal{O}(t)}|\zeta_i(t-1) = x\big)\\
			=&\!\!\!\!\sum_{\zeta_l(t-1),\  l\in\Theta_i(t)}\!\!\!\!\!\!\!\!\!\!\Pr\big(\{\theta_j(t)\}_{j\in\mathcal{O}(t)}|\{\zeta_l(t\!-\!1)\}_{l\in\Theta_i(t)},\zeta_i(t-1) = x\big)\times\Pr\big(\{\zeta_l(t\!-\!1)\}_{l\in\Theta_i(t)}|\zeta_i(t-1) = x\big).
		\end{aligned}
	\end{equation*}
	Given $\{\zeta_l(t-1)\}_{l\in\Theta_i(t)}\cup\{\zeta_i(t-1)\}$, $\{\theta_j(t)\}_{j\in\mathcal{O}(t)}$ are independent. We thus have
	\begin{align*}
		&\Pr\big(\{\theta_j(t)\}_{j\in\mathcal{O}(t)}|\{\zeta_l(t-1)\}_{l\in\Theta_i(t)},\zeta_i(t-1) = x\big)\\
		=&\prod_{j\in\mathcal{O}(t)}\Pr\big(\theta_j(t)|\{\zeta_l(t-1)\}_{l\in\Theta_i(t)},\zeta_i(t-1) = x\big)\\
		=&\prod_{j\in\mathcal{O}(t)}\Pr\big(\theta_j(t)|\{\zeta_l(t\!-\!1)\}_{l\in\partial^+_j(t-1)\backslash\{i\}},\zeta_i(t\!-\!1) = x\big).
	\end{align*}

	Based on Assumption~\ref{assu: independence in terminate},
	\begin{align*}
		\Pr\big(\{\zeta_l(t-1)\}_{l\in\Theta_i}|\zeta_i(t-1) = x\big)=\!\!\!\!\prod_{l\in\{\Theta_i(t)\}}\!\!\!\!\Pr\big(\zeta_l(t-1)\big).
	\end{align*}
	Therefore, 
	\begin{equation}\label{eq: fenzi1}
		\begin{aligned}
			&\Pr\big(\underline{Y}(t)|\zeta_i(t-1) = x\big)\\
			=&\sum_{\theta_j(t),\ j\in\mathcal{O}(t)}\delta({\{Y_j(t), \theta_j(t)\}_{j\in\mathcal{O}(t)}})\\
			\times&\sum_{\zeta_l(t-1)}\prod_{j\in\mathcal{O}(t)}\Pr\big(\theta_j(t)|\{\zeta_l(t\!-\!1)\}_{l\in\partial^+_j(t\!-\!1)}\backslash\{i\},\zeta_i(t-1) = x\big)\times\prod_{l\in\{\Theta_i(t)\}}\Pr\big(\zeta_l(t-1)\big).
		\end{aligned}
	\end{equation}
	Denote $\{x_j\}_{j\in\mathcal{O}(t)}$ as a realization of $\{\theta_j(t)\}_{j\in\mathcal{O}(t)}$ and  $\{y_l\}_{l\in\Theta_i(t)}$ as a realization of $\{\zeta_l(t-1)\}_{l\in\Theta_i(t)}$. Then,
	\begin{equation*}
		\begin{aligned}
			&\Pr\big(\underline{Y}(t)|\zeta_i(t-1) = x\big)\\
			=&\sum_{\{x_j\}_{j\in\mathcal{O}(t)}}\delta({\{Y_j(t), x_j\}_{j\in\mathcal{O}(t)}})\\
			&\times\sum_{\{y_l\}_{l\in\Theta_i(t)}}\prod_{j\in\mathcal{O}(t)}\Pr\big(x_j|\{y_l\}_{l\in\partial^+_j(t-1)\backslash\{i\}},\zeta_i(t-1) = x\big)\times\prod_{l\in\{\Theta_i(t)\}}\Pr\big(\zeta_l(t-1) = y_l\big).
		\end{aligned}
	\end{equation*}
	Denote 
	\begin{equation}\label{eq: rho}
		\begin{aligned}
			&\rho\big(\{x_j\}_{j\in\mathcal{O}(t)}, x\big) \\
			=&\sum_{\{y_l\}_{l\in\Theta_i(t)}}\prod_{j\in\mathcal{O}(t)}\Pr\big(x_j|\{y_l\}_{l\in\partial^+_j(t-1)\backslash\{i\}},\zeta_i(t-1) = x\big)\times\prod_{l\in\Theta_i(t)}\Pr\big(\zeta_l(t-1)=y_l\big).
		\end{aligned}
	\end{equation}
	Then,
	\begin{equation}\label{eq: fenzi3}
		\begin{aligned}
			&\Pr\big(\underline{Y}(t)|\zeta_i(t-1) = x\big) = \sum_{x_j\in\mathcal{X},j\in\mathcal{O}(t)}\delta({\{Y_j(t), x_j\}_{j\in\mathcal{O}(t)}})\rho\big(\{x_j\}_{j\in\mathcal{O}(t)}, x\big).
		\end{aligned}
	\end{equation}

	Based on Assumption~\ref{assu: independence in terminate}, we can further simplify (\ref{eq: fenzi3}). Consider node $i$,
	$\underline{Y}(t)$ can be split into $\underline{Y}_{i,1}(t)$ and $\underline{Y}_{i,2}(t)$, where $\underline{Y}_{i, 1}(t)$ is the observations of the set $\mathcal{O}(t)\cap\partial_i^+(t-1)$, and $\underline{Y}_{i, 2}(t)$ is the observations of the rest of the nodes. Note that $ \underline{Y}_{i, 1}(t)\cup \underline{Y}_{i, 2}(t) = \underline{Y}(t)$ and $ \underline{Y}_{i, 1}(t)\cap \underline{Y}_{i, 2}(t) = \varnothing$. 
	\begin{lemma}\label{lem: local independence}
		Conditioned on $\underline{Y}_{i, 1}(t)$, $\zeta_i(t-1)$ is independent of $\underline{Y}_{i, 2}(t)$.
	\end{lemma}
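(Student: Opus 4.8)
\emph{Plan.} The statement is equivalent, via Bayes' rule as in \eqref{eq: chain rule of e4}, to the claim that the likelihood $\Pr\big(\underline{Y}(t)\mid\zeta_i(t-1)=x\big)$ depends on $x$ only through the observations $\underline{Y}_{i,1}(t)=\{Y_j(t)\}_{j\in\Psi_i(t)}$; that is, $\Pr\big(\underline{Y}(t)\mid\zeta_i(t-1)=x\big)=c\cdot\Pr\big(\underline{Y}_{i,1}(t)\mid\zeta_i(t-1)=x\big)$ with $c=\Pr\big(\underline{Y}_{i,2}(t)\big)$ free of $x$. The plan is to write the joint law of the truncation-day states $\{\zeta_l(t-1)\}_l$ and the day-$t$ observations, and then track exactly which factors carry the variable $x$ and show the remaining factors decouple. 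Concretely, I would start from \eqref{eq: fenzi3}, where under Assumption~\ref{assu: independence in terminate} the likelihood is already a sum over realizations $\{x_j\}_{j\in\mathcal{O}(t)}$ of $\{\theta_j(t)\}$ and $\{y_l\}_{l\in\Theta_i(t)}$ of the (independent) parent states, weighted by one-step transition factors $\Pr\big(x_j\mid\{y_l\}_{l\in\partial_j^+(t-1)\backslash\{i\}},x\big)$ and the marginals $\prod_{l}\Pr\big(\zeta_l(t-1)=y_l\big)$.

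The crux is a \emph{measurement-locality} property of the test model: for every observed node $j$, the result $Y_j(t)$ is conditionally independent of all other day-$(t-1)$ states given $\zeta_j(t-1)$, i.e. $\Pr\big(Y_j(t)\mid\{\zeta_m(t-1)\}_{m\in\partial_j^+(t-1)}\big)=\Pr\big(Y_j(t)\mid\zeta_j(t-1)\big)$. I would prove this by checking the four values of $\zeta_j(t-1)$: the only neighbor-dependent transition is $S\to L$ (governed by the neighbors' infectious status), but both $S$ and $L$ test negative, so from $\zeta_j(t-1)=S$ one has $Y_j(t)=0$ with probability one regardless of neighbors; from $L$ one has $Y_j(t)=1$ with probability $\lambda$; from $I$ with probability $1-\gamma$; and from $R$ never. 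In each case the law of $Y_j(t)$ is a function of $\zeta_j(t-1)$ and node $j$'s own latency/recovery coins alone, so $Y_j(t)$ does not ``see'' the states of $j$'s neighbors.

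With this in hand I would assemble the factorization in two steps. First, by the symmetry of the (undirected) contact network, $j\notin\partial_i^+(t-1)$ implies $i\notin\partial_j^+(t-1)$; hence for every $\underline{Y}_{i,2}(t)$-node the transition factor $\Pr\big(x_j\mid\{y_l\}_{l\in\partial_j^+(t-1)\backslash\{i\}},x\big)$ does not contain the variable $x$, so $x$ enters \eqref{eq: fenzi3} only through the $\Psi_i(t)$-factors. Second, I must rule out the indirect coupling whereby a node $l$ shared as a day-$(t-1)$ neighbor of a $\Psi_i(t)$-node $j$ and of a $\underline{Y}_{i,2}(t)$-node $j'$ could, after conditioning on $Y_j(t)$, correlate $\zeta_i(t-1)$ with $\underline{Y}_{i,2}(t)$ (an ``explaining-away''/collider effect). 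Here the measurement-locality property is decisive: since each $Y_j(t)$ is conditionally independent of its neighbors given $\zeta_j(t-1)$, conditioning on $\{Y_j(t)\}_{j\in\Psi_i(t)}$ never couples two distinct parents of a common observed node, so Assumption~\ref{assu: independence in terminate} keeps $\zeta_i(t-1)$ independent of $\{\zeta_l(t-1)\}_{l\ne i}$ even after conditioning. Summing out the shared parents $\{y_l\}$ then separates \eqref{eq: fenzi3} into an $x$-dependent $\Psi_i(t)$-product and an $x$-free $\underline{Y}_{i,2}(t)$-product, yielding the claimed conditional independence.

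I expect the collider/shared-neighbor coupling to be the main obstacle: on the face of it, conditioning on $\underline{Y}_{i,1}(t)$ could open a dependence path from $\zeta_i(t-1)$ into $\underline{Y}_{i,2}(t)$ through a common neighbor, and in a generic probabilistic graphical model this coupling would indeed occur. The whole argument hinges on the measurement-locality property above, which is a consequence of the modelling choice that tests detect only the infectious state while the neighbor-driven infection event ($S\to L$) stays test-negative. Were latent nodes to test positive (or were we to condition on exact states rather than on test outcomes), this decoupling would break and the lemma would fail; I would therefore be careful to invoke the test structure explicitly rather than treat $Y_j(t)$ as a generic function of $\theta_j(t)$.
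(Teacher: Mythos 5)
Your proof is correct, but it takes a genuinely different route from the paper's. The paper proves Lemma~\ref{lem: local independence} by casting the system as a structured belief network and running the Bayes-ball algorithm, splitting on whether an observation $Y_j(t)$ with $j\notin\partial_i^+(t-1)$ equals $1$ (so that conditioning on $Y_j(t)$ amounts to conditioning on $\zeta_j(t)$, which blocks the ball) or $0$ (in which case it appeals to Assumption~\ref{assu: independence in terminate} to block paths between $\zeta_i(t-1)$ and $\zeta_j(t-1)$). You instead factorize the likelihood $\Pr\big(\underline{Y}(t)\mid\zeta_i(t-1)=x\big)$ directly and isolate the model feature that makes the factorization work: the measurement-locality property $\Pr\big(Y_j(t)\mid\{\zeta_m(t-1)\}_{m\in\partial_j^+(t-1)}\big)=\Pr\big(Y_j(t)\mid\zeta_j(t-1)\big)$, which indeed holds because the only neighbour-driven transition is $S\to L$ and both $S$ and $L$ test negative; checking against \eqref{eq: transition probability matrix} gives $\Pr(\sigma_j(t)=I\mid\cdot)=\lambda 1_{\{\sigma_j(t-1)=L\}}+(1-\gamma)1_{\{\sigma_j(t-1)=I\}}$, free of the neighbours. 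Your route buys two things. First, it squarely addresses the real danger, the explaining-away path $\zeta_i(t-1)\to\theta_m(t)\leftarrow\zeta_l(t-1)\to\theta_j(t)\to Y_j(t)$ through a day-$(t-1)$ node $l$ shared between an observed node $m\in\Psi_i(t)$ and an observed node $j\notin\partial_i^+(t-1)$: conditioning on $Y_m(t)$ activates the collider at $\theta_m(t)$, and at the level of the coarse belief network (where $Y_m(t)$ is a generic function of $\theta_m(t)$ and $\theta_m(t)$ a generic function of its parents) this path is \emph{not} d-separated, so the paper's case analysis---which only blocks paths terminating at $\zeta_j(t)$ or $\zeta_j(t-1)$---does not by itself close the argument; your measurement-locality observation is exactly what severs it. Second, your argument in fact yields the stronger conclusion that, under Assumption~\ref{assu: independence in terminate}, $\zeta_i(t-1)$ is independent of every observation in $\underline{Y}(t)$ other than $Y_i(t)$ itself, which trivially implies the lemma. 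Your closing caveat is also substantively right and worth keeping: in the degenerate SI reading of $\lambda=0$ used in Examples~\ref{ex: Necessity of Backward Updating} and~\ref{ex: effect of prob} (susceptible nodes becoming test-positive in one step), measurement locality fails, one can build explicit four-node path counterexamples to the claimed conditional independence, and so the lemma depends on the test model and not merely on the graph and Assumption~\ref{assu: independence in terminate}.
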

	\begin{proof}
		To show Lemma~\ref{lem: local independence}, we use the structured belief network as defined in	\cite{RDS2013}. $\zeta_j(t)$ is the random variable associated with node~$j$. Note that $Y_j(t)$ is the test result of 	$\zeta_j(t)$ on day $t$. Now, we consider $j\in\big(\mathcal{O}(t)\backslash (\mathcal{O}(t)\cap\partial_i^+(t-1)) \big)$. By \cite[Theorems~1]{RDS2013} and Bayes ball algorithm defined in \cite[Section~$2$]{MJBall}, we investigate the following two cases.  
		\begin{itemize}
			\item [(i)] For any $j\in \big(\mathcal{O}(t)\backslash (\mathcal{O}(t)\cap\partial_i^+(t-1)) \big)$ with $Y_j(t)=1$, the corresponding state  $\zeta_j(t)$ is determined (which is $I$). Then, probabilities conditioning on $Y_j(t)$ is equivalent to (equal to) probabilities conditioning on $\zeta_j(t)$. By Bayes ball algorithm \cite{RDS2013, MJBall}, the information (the ball) is blocked at $\zeta_j(t)$ when the information (the ball) reaches $\zeta_j(t)$, which implies  the information (the ball) can not reach $\zeta_i(t-1)$. 
			\item [(ii)] For any $j\in \big(\mathcal{O}(t)\backslash (\mathcal{O}(t)\cap\partial_i^+(t-1)) \big)$ with $Y_j(t)=0$, $\zeta_j(t)$ is not determined. By Bayes ball algorithm \cite{RDS2013, MJBall}, when the information (the ball) reaches $\zeta_j(t)$, it can traverse $Y_j(t)$ when blocking $Y_j(t)$ (conditioning on $Y_j(t)$). However, by Assumption~\ref{assu: independence in terminate}, $\zeta_i(t-1)$ and $\zeta_j(t-1)$ are independent, so any path between $\zeta_i(t-1)$ and $\zeta_j(t-1)$ is blocked, including the path $\zeta_j(t-1)\leftrightarrow\zeta_j(t)\leftrightarrow Y_j(t)\leftrightarrow\zeta_j(t)\leftrightarrow\zeta_i(t-1)$. Thus, the information (the ball) can not reach $\zeta_i(t-1)$.
		\end{itemize}
		A simple example is given in Figure~\ref{Fig3}: Let $Y_1(t)=0$ and $Y_2(t)=1$. Given $Y_1(t)$ and $Y_2(t)$, $Y_3(t)$ is independent of $\zeta_1(t-1)$. 
		\begin{figure}[ht]
			\centering
			\includegraphics[width=0.6\textwidth]{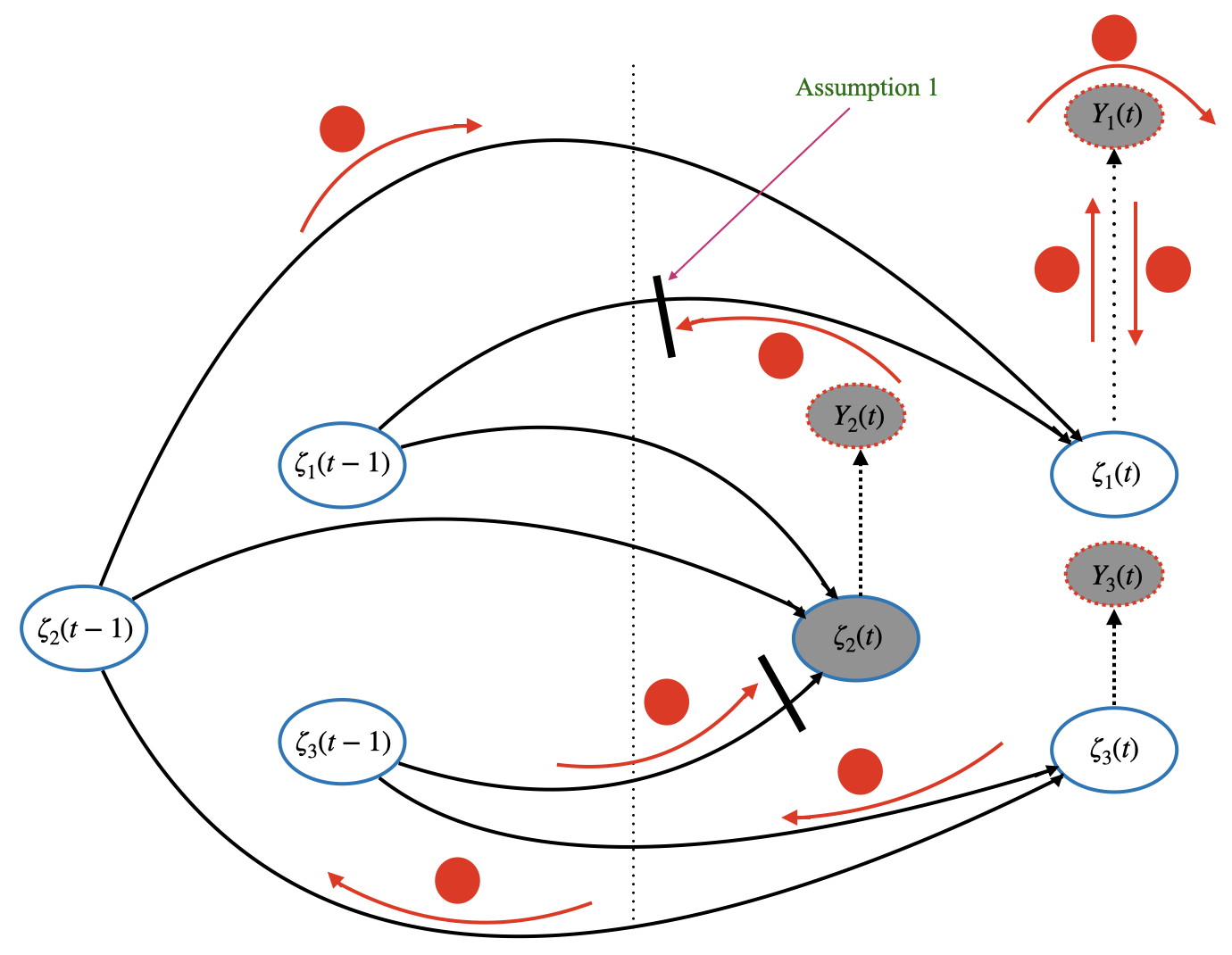} 
			\caption{Bayes ball algorithm in the network of $3$ nodes. The terms on which we have conditioning are shaded gray and are equivalently blocked.} 
			\label{Fig3} 
		\end{figure} 
	\end{proof}
	
	From Lemma~\ref{lem: local independence},
	\begin{equation}\label{eq: local independence}
		\begin{aligned}
			e_x^{(i)}(t-1) & = \Pr\big(\zeta_i(t-1) = x|\underline{Y}(t)\big)=\Pr\big(\zeta_i(t-1) = x|\underline{Y}_{i,1}(t)\big).
		\end{aligned}
	\end{equation}
	
	We simplify (\ref{eq: fenzi3}) based on Lemma~\ref{lem: local independence} or (\ref{eq: local independence}). From (\ref{eq: local independence}), denote the observations of nodes in $\partial_i^+(t-1)$ as  $\underline{Y}_{\partial^+_i}(t)$, $\underline{Y}_{\partial^+_i}(t)$ is independent of $\zeta_i(t-1)$. Denote $\Psi_i(t) = \mathcal{O}(t)\cap\partial^+_i(t-1)$. Then, We can replace $\mathcal{O}(t)$ by $\Psi_i(t)$ in (\ref{eq: chain rule of e2}). Subsequently, denote $\Phi_i(t) = \{j|j\in\partial^+_k(t-1), k\in\Psi_i(t)\}\backslash\{i\}$, and we can replace $\Theta_i(t)$ by $\Phi_i(t)$ in (\ref{eq: Theta Phi}). Thus, from (\ref{eq: rho}) and (\ref{eq: fenzi3}), we respectively have 
	\begin{equation}\label{eq: rho1}
		\begin{aligned}
		\rho\big(\{x_j\}_{j\in\Psi_i(t)}, x\big) =  \sum_{\{y_l\}_{l\in\Phi_i(t)}}\prod_{j\in\Psi_i(t)}\Pr\big(x_j|\{y_l\}_{l\in\partial^+_j(t-1)\backslash\{i\}},\zeta_i(t-1) = x\big)\times\prod_{l\in\Phi_i(t)}\Pr\big(\zeta_l(t-1)=y_l\big)
		\end{aligned}
	\end{equation}
	and
	\begin{equation}\label{eq: fenzi31}
		\begin{aligned}
			&\Pr\big(\{Y_j(t)\}_{j\in\Psi_i(t)}|\zeta_i(t-1) = x\big) = \sum_{x_j\in\mathcal{X},j\in\Psi_i(t)}\delta({\{Y_j(t), x_j\}_{j\in\Psi_i(t)}})\rho\big(\{x_j\}_{j\in\Psi_i(t)}, x\big)
		\end{aligned}
	\end{equation}
	which give the desired result (\ref{eq: fenzi34}).

	\section{A Simple Example for Algorithm~\ref{alg: Backward Update}}\label{App: simpleexample}
	
	In this section, we give a simple example to illustrate the ideas and steps of Algorithm~\ref{alg: Backward Update}. Besides, we compare our proposed algorithm (Algorithm~\ref{alg: Backward Update}) with the Naive approach discussed in Remark~\ref{remark1}. Consider a simple network with three nodes. Node $1$ has an edge with node $2$, node $2$ has an edge with node $3$ (see Fig~\ref{fig:graphical model}).  Nodes~$1$ and $3$ are symmetric and statistically identical, and node~$2$ has higher degree. 
	
	\begin{figure}[t!]
		\centering
		\includegraphics[width=.2\textwidth]{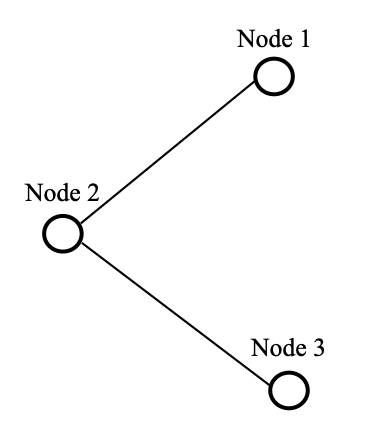}\quad\quad\quad
		\includegraphics[width=.45\textwidth]{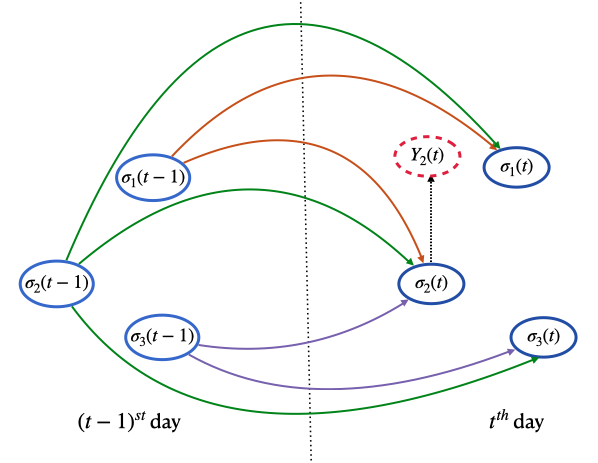}
		\caption{The original graph (left). The graphical model of  states and observations (right)}
		\label{fig:graphical model}
	\end{figure}
	
	Consider the following situation: on the initial day (day $0$), assume that nodes $1$ and $3$ are susceptible, and node $2$ is infectious. On day $1$, let node $2$ be tested.  Recall that we define the posterior probability vectors at the end of every day, and the prior probability vectors at the beginning of every day. Nodes' states change in the beginning of every day and testing is also done in the beginning of every day. Let the initial belief, i.e., the posterior probability $\underline{w}_i(0)$ and the prior probability $\underline{u}_i(0)$ on day $0$, of the nodes be
	\begin{align*}
		&\underline{w}_i(0) = [1/3, 0, 0, 2/3],\quad i = 1,2,3\\
		&\underline{u}_i(0) = [1/3, 0, 0, 2/3],\quad i=1,2,3.
	\end{align*}

	{\bf Day $0$}: No tests on day~$0$, the prior probabilities are updated by the forward step ({\bf Step~2} in Algorithm~\ref{alg: Backward Update}). 
	
	{\bf Day $1$}: By {\bf Step~2} in Algorithm~\ref{alg: Backward Update}, we have
	\begin{align*}
		&\underline{u}_1(1) = [0.3144, 0.0373, 0.0633, 0.5849]\\
		&\underline{u}_2(1) = [0.3559, 0.0676, 0.0633, 0.5131]\\
		&\underline{u}_3(1) = [0.3144, 0.0373, 0.0633, 0.5849].
	\end{align*}
	After testing node 2, we know that node 2 is positive. We use the test result to infer about the state of the nodes in prior times. In particular, we update the posterior probability on day $0$ ($\underline{w}_i(0)$). Denoting the updated posterior probability as $\underline{e}_i(0)$, by Step~1 in Algorithm~\ref{alg: Backward Update}, we find
	\begin{align*}
		&\underline{e}_1(0) = [0.3144, 0.0373, 0.0633, 0.5849]\\
		&\underline{e}_2(0) = [0.9615, 0.0385, 0.0, 0.0]\\
		&\underline{e}_3(0) = [0.3144, 0.0373, 0.0633, 0.5849].
	\end{align*}
	We can now say that at the end of day $0$, node 2 was infectious with probability $0.9615$  and it was in the latent state with probability $0.0385$. Moreover, we see that the (posterior) infection probabilities of nodes $1$ and $3$ on day $0$ have increased since they may have infected node $2$ on day $0$, i.e., $0.3517=e_I^{(i)}(0)+e_L^{(i)}(0)>1/3=w_I^{(i)}(0)+w_L^{(i)}(0)$ with $i=1, 3$. Next, we obtain the posterior probability on day $1$. Recall that $\underline{w}_i(t)$ describes the posterior probability vector of node $i$ { at the end of} day $t$. By Step~1 in Algorithm~\ref{alg: Backward Update},
	\begin{align*}
		&\underline{w}_1(1) = [0.4008, 0.0, 0.1268, 0.4724]\\
		&\underline{w}_2(1) = [0.90, 0.0, 0.10, 0.0]\\
		&\underline{w}_3(1) = [0.4008, 0.0, 0.1268, 0.4724].
	\end{align*}
	One may wonder why the posterior probability is $[0.9, 0, 0.1, 0]$ rather than $[1, 0, 0, 0]$. This is because testing is done in the beginning of time $t$ and the posterior probabilities are defined at the end of time slots $t$. The infected node may have recovered by the end of time $t=1$ and this is reflected in the posterior probabilities computed.
	
	{\bf Day $2$}: We can get the prior probability vectors on day $2$ by our  forward update (making predictions):
	\begin{align*}
		&\underline{u}_1(2) = [0.2883, 0.0, 0.1268, 0.5849]\\
		&\underline{u}_2(2) = [0.8135, 0.0, 0.1865, 0.0]\\
		&\underline{u}_3(2) = [0.2883, 0.0, 0.1268, 0.5849].
	\end{align*}
	
	On the other hand, if we apply the naive updating rule defined in Remark~\ref{remark1}, on day $2$, we find
	\begin{align*}
		&\underline{u}'_1(2) = [0.4074, 0.0896, 0.0948, 0.4082]\\
		&\underline{u}'_2(2) = [0.81, 0.0, 0.19, 0.0]\\
		&\underline{u}'_3(2) = [0.4074, 0.0896, 0.0948, 0.4082].
	\end{align*}
	
	Recall that we use Assumption~\ref{assu: independence in terminate} in the proposed algorithm (Algorithm~\ref{alg: Backward Update}), and the Naive approach in Remark~\ref{remark1} does not have the backward step, so both approaches do not capture the correlations among nodes. By Monte Carlo simulations, the correlations among nodes are captured, and the nodes' probability vectors are approximated on day $2$ as follows:
	\begin{align*}
		&\underline{v}_1(2) = [0.3235,0.0976, 0.0196, 0.5593]\\
		&\underline{v}_2(2) = [0.7244, 0,  0.2756, 0]\\
		&\underline{v}_3(2) = [0.3158, 0.1072, 0.019,  0.558]
	\end{align*}
	
	which yields the following comparison for the incurred estimation errors:
	\begin{align*}
0.4342=\sum_{i=1}^{3}||\underline{u}_i(2)-\underline{v}_i(2)|| < \sum_{i=1}^{3}||\underline{u}'_i(2)-\underline{v}_i(2)|| = 0.5018.
	\end{align*}
	The left hand side shows the estimation error under our proposed backward-forward update and the right hand side shows the estimation error under the naive approach.

	\section{Delay of Testing Results}\label{App: Delay of Testing Results}
	One can extend the framework to a more realistic case where testing results are not able to be obtained on the same day, but will be obtained after a delay $a$. In other words, if nodes are tested on day $t-a$, the test results are provided on day $t$. The extended framework is summarized as follows.

	On day $t$, before getting the test results of day $t-a$, the algorithm knows the following information: (i) the network topology from day $t-a-1$ to day $t$, i.e., $\mathcal{G}(t-a-1), \cdots, \mathcal{G}(t)$ (it is affected by the past actions); (ii) the posterior probability of nodes on day $t-a-1$, $\big\{\underline{w}_i(t-a-1)\big\}_{i\in\mathcal{G}(t-a-1)}$; and (iii) and the prior probability vectors of nodes from day $t-a$ to day $t$, i.e., $\big\{\underline{u}_i(t-a)\big\}_{i\in\mathcal{G}(t-a)},\cdots,\big\{\underline{u}_i(t)\big\}_{i\in\mathcal{G}(t)}$. 
	
	After getting the test results on day $t-a$, we can obtain the updated posterior probability vectors on day $t-a-1$, and the posterior probability vectors on day $t-a$, i.e., $\big\{\underline{e}_i(t-a-1)\big\}_{i\in\mathcal{G}(t-a-1)}$, and $\big\{\underline{w}_i(t-a)\big\}_{i\in\mathcal{G}(t-a)}$, by {\bf Step~1} in Algorithm~\ref{alg: Backward Update}.
	
	Based on $\{\underline{w}(t-a)\}_{i\in\mathcal{G}(t-a)}$,  by {\bf Step~2} in Algorithm~\ref{alg: Backward Update}, we update the prior probability from day $t-a+1$ to day $t$, and obtain the prior probability on day $t+1$, i.e., $\{\underline{u}_i(t+1)\}_{i\in\mathcal{G}(t+1)}$. 
	
	Repeating the process, we can compute the estimated probability vectors of nodes and apply the exploration and exploitation policies.

	\section{Proof of Theorem~\ref{theorem:backward}} \label{example:backward}

	\noindent{\bf Step~1}: Preliminaries.

	We divide the distributions of initial infectious nodes into two complementary events:
	\begin{align*}
		\mathcal{I}_1 =& \{\text{No node is infectious}\}\\
		\mathcal{I}_2 =& \mathcal{I}_1^c.
	\end{align*}
	Let $N$ be sufficiently large, 
	\begin{align*}
		&\Pr\{\mathcal{I}_1\} = (1-1/N)^{N} \approx 1/e\\
		&\Pr\{\mathcal{I}_2\}  \approx 1-1/e.
	\end{align*}
	
	In event $\mathcal{I}_1$, since there is no infection on the initial day, then no node is infectious in the future, i.e., the true probability of nodes $v_{I}^{(i)}(t) = 0$ for all $i\in\mathcal{V}(t)$ and $t\geq 1$.

	Note that in Example~\ref{ex: Necessity of Backward Updating}, each node can be in one of two states, $S$ and $I$. The transmission probability $\beta=1$. So, on day $t$, the probability of node $i$ being in state $I$ includes the infection of node $i$ on day $t-1$, and the infection from its neighbors. Then, based on (\ref{eq: estimate matrix form1}), we have
	\begin{equation}\label{eq: example-u_I}
		\begin{aligned}
		u_{I}^{(i)}(t) = & w_I^{(i)}(t-1) + \{1-w_I^{(i)}(t-1)\}\Big\{1-\big(1-w_I^{(i-1)}(t-1)\big)\big(1-w_I^{(i+1)}(t-1)\big)\Big\}\\
		=& 1 - \{1 - w_{I}^{(i-1)}(t-1)\}\{1 - w_{I}^{(i)}(t-1)\}\{1 - w_{I}^{(i+1)}(t-1)\}.
		\end{aligned}
	\end{equation}	
	For convention, we assume that nodes $0$ and $N+1$ are two virtual nodes with no probability of infection, i.e., $u_I^{(0)}(t) = u_I^{(N+1)}(t)=0$ for all $t$, and no tests are applied to these two nodes all the time. 
	
	Since $w_I^{(i)}(t+1), w_I^{(i-1)}(t+1)$, $w_I^{(i+1)}(t+1)\in[0, 1]$, then from (\ref{eq: example-u_I}),
	\begin{align}\label{eq: inequality}
		u_I^{(i)}(t) \geq 1 - 1\times (1 - w_{I}^{(i)}(t-1)) \times 1 =  w_{I}^{(i)}(t-1).
	\end{align}
	Thus, by symmetry over $w_{I}^{(i-1)}(t-1)$, $w_{I}^{(i)}(t-1)$, and $w_{I}^{(i+1)}(t-1)$ we get the inequality
	\begin{align}\label{eq: u www}
		u_{I}^{(i)}(t)\geq \max\{w_{I}^{(i-1)}(t-1), w_{I}^{(i)}(t-1), w_{I}^{(i+1)}(t-1)\}.
	\end{align}

	\noindent{\bf Step~2}: Consider the computation of $\{\underline{u}_i(t)\}_{i}$  based on~(\ref{eq: estimate matrix form1}) \big(equivalently (\ref{eq: example-u_I})\big) under event~$\mathcal{I}_1$.

	Recall that $B(t)=1$ for all $t$. On any day $t$, if node $i_0$ is tested, then the result is negative, and $w_{I}^{(i_0)}(t) = 0$, and 
	\begin{align}\label{eq: no test w=u}
		w_I^{(i)}(t) = u_I^{(i)}(t)\,\,\text{for all}\,\,i\neq i_0.
	\end{align}
	In (\ref{eq: u www}), at most one of $w_I^{(i-1)}(t-1)$, $w_I^{(i)}(t-1)$, and $w_I^{(i+1)}(t-1)$ is updated to $0$. We first prove the following facts. 
	
	{\bf Fact~1}. $u_I^{(i)}(t)\geq \frac{1}{N}$ for all $t$. On any day $t$, $w_I^{(i)}(t)\geq\frac{1}{N}$ with $i\neq i_0$, where $i_0$ is the index of node tested on day $t$.
	
	{\it Proof}. We prove {\bf Fact~1} by mathematical induction. On the initial day, by model assumption in Example~\ref{ex: Necessity of Backward Updating}, $u_I^{(i)}(0)=\frac{1}{N}$ for all $i$. Then, if node $i_0$ is tested, then as mentioned above, $w_{I}^{(i_0)}(0) = 0$, and by (\ref{eq: no test w=u}) $w_I^{(i)}(0) = u_I^{(i)}(0)=1/N$ for all $i\neq i_0$.
	
	Suppose {\bf Fact~1} holds for all $\tau\leq t-1$. Now, we consider $\tau=t$. From (\ref{eq: u www}), we have $u_I^{(i)}(t) \geq \max\{w_{I}^{(i-1)}(t-1), w_{I}^{(i)}(t-1), w_{I}^{(i+1)}(t-1)\}\geq 1/N$. Then, if node $i_0$ is tested, we have $w_{I}^{(i_0)}(t) = 0$, and then by (\ref{eq: no test w=u}), $w_I^{(i)}(t) = u_I^{(i)}(t)\geq 1/N$ for all $i\neq i_0$. \qed

	{\bf Fact~2}. If node~$i$ has not been tested up to day $t$, then $u_{I}^{(i)}(t)$ tends to $1$ as $t \rightarrow \infty$.
	
	{\it Proof}. 
	Since node $i$ is not tested from the initial day to day $t$, then 
	\begin{align}\label{eq: fact2proof w=u}
		w_I^{(i)}(\tau) = u_I^{(i)}(\tau),\,\,\tau\leq t.
	\end{align}
	
	Note that at most one of its neighbors is tested on day $t$. By (\ref{eq: example-u_I}) and {\bf Fact~1},
	\begin{align*}
		u_I^{(i)}(t) \geq 1 - (1-1/N)(1-w_I^{(i)}(t-1))=1 - (1-1/N)(1-u_I^{(i)}(t-1)),,
	\end{align*}
	which implies
	\begin{align*}
		(1-1/N)(1-u_I^{(i)}(t-1))\geq 1 - u_I^{(i)}(t),
	\end{align*}
	which implies
	\begin{align*}
		1 - u_I^{(i)}(t)\leq (1-1/N)^{t}(1-u_I^{(i)}(0)) = (1-1/N)^{t+1}.
	\end{align*}
	Letting $t\to\infty$ completes the proof.\qed
	
	{\bf Fact~3}. If node $i$ is not tested on day $t-1$, then
	\begin{align*}
		u_I^{(i)}(t)\geq w_I^{(i)}(t-1) + \frac{1}{N}(1-w_{I}^{(i)}(t-1))w_I^{(i)}(t-1).
	\end{align*}
	
	{\it Proof}. 
	By {\bf Fact~3}, if node $i$ is not tested on day $t-1$, then $w_I^{(i)}(t-1)>0$.
	From (\ref{eq: example-u_I}), by some algebra,	\begin{equation*}
		\begin{aligned}
			u_{I}^{(i)}(t) =& w_{I}^{(i)}(t-1) +  (1 -  w_{I}^{(i)}(t-1))(w_{I}^{(i-1)}(t-1)+w_{I}^{(i+1)}(t-1)-w_{I}^{(i-1)}(t-1)w_{I}^{(i+1)}(t-1))\\
			=&(1+\epsilon)w_{I}^{(i)}(t-1)
		\end{aligned}
	\end{equation*}
	where 
	\begin{equation*}
		\begin{aligned}
			&\epsilon = \frac{1-w_{I}^{(i)}(t-1)}{w_{I}^{(i)}(t-1)}\times(w_{I}^{(i-1)}(t-1) + w_{I}^{(i+1)}(t-1) - w_{I}^{(i-1)}(t-1)w_{I}^{(i+1)}(t-1)).
		\end{aligned}
	\end{equation*}
	Note that at most one of the neighbors of node $i$ is tested on day $t-1$, then
	\begin{align*}
		\frac{1-w_{I}^{(i)}(t-1)}{w_{I}^{(i)}(t-1)}\geq& 1-w_{I}^{(i)}(t-1)\\
		w_{I}^{(i-1)}(t-1) + w_{I}^{(i+1)}(t-1) - w_{I}^{(i-1)}(t-1)w_{I}^{(i+1)}(t-1)\geq& \max\{w_{I}^{(i-1)}(t-1), w_{I}^{(i+1)}(t-1)\}.
	\end{align*}
	From {\bf Fact~1}, $\max\{w_{I}^{(i-1)}(t-1), w_{I}^{(i+1)}(t-1)\}\geq 1/N$.
	Thus, $\epsilon \geq (1-w_{I}^{(i)}(t-1)) \times 1/N $. Hence, $u_I^{(i)}(t)\geq w_I^{(i)}(t-1) + \frac{1}{N}(1-w_{I}^{(i)}(t-1))w_I^{(i)}(t-1)$.\qed

	Since we consider all possible sequential testing policies, then we divide all nodes into two sets
	\begin{align*}
		\mathcal{S}_1(t) =& \{\text{nodes that have not been tested up to day $t$}\}\\
		\mathcal{S}_2(t) = &\mathcal{S}_1^c(t).
	\end{align*}
	In the following proof, let $t\to\infty$. By {\bf Fact~2}, $u_I^{(i)}(t)\to1$ if $i\in\mathcal{S}_1(t)$.
	Next, we focus on the set $\mathcal{S}_2(t)$. Denote the index of node which is tested on day $t-1$ as $i_0(t)$. 
	By {\bf Fact~1}, $w_I^{(i)}(t-1)\geq 1/N$ for all $i\neq i_0(t)$. Then, we define 
	\begin{align*}
		\mathcal{S}_{21}(t) =& \{i|1/N\leq w_I^{(i)}(t-1)< 1-1/N\}\\
		\mathcal{S}_{22}(t) = &\{i|1-1/N \leq w_I^{(i)}(t-1)\}.
	\end{align*}
	Thus, we have $\mathcal{S}_2(t) = \mathcal{S}_{21}(t) \cup\mathcal{S}_{22}(t)\cup\{i_0(t)\}$.
	Due to the equivalence of norms, without loss of generality, we consider $L_1$ norm in the rest of the proof.  
	\begin{itemize}
		\item [(i)] If $i\in\mathcal{S}_1(t)$, then $u_I^{(i)}(t) \to 1$. Thus $||\underline{u}_i(t) - \underline{v}_i(t)||_1\to2$.
		\item [(ii)] If $i\in\mathcal{S}_{21}(t)$, then $||\underline{u}_i(t) - \underline{v}_i(t)||_1\geq ||\underline{u}_i(t-1) - \underline{v}_i(t-1)||_1 + \frac{2(N-1)}{N^3}$.
		In fact, since $i\in\mathcal{S}_{21}(t)$, then $i\neq i_0(t)$, thus by (\ref{eq: no test w=u}) and {\bf Fact~3}, 
		\begin{align*}
			u_I^{(i)}(t) \geq u_I^{(i)}(t-1) + \frac{1}{N}(1-w_{I}^{(i)}(t-1))w_I^{(i)}(t-1).
		\end{align*}
		Note that $N$ is sufficiently large, so $1/N<1/2<1-1/N$. If $x\in[1/N, 1-1/N)$, then the fuction $f(x)=x(1-x)$ has the minimum value $\frac{N-1}{N^2}$ when $x=1/N$. Thus,
		\begin{align}\label{eq: wuN-1N2}
			u_I^{(i)}(t) \geq u_I^{(i)}(t-1) + \frac{N-1}{N^3}.
		\end{align}
		
		Recall that $v_I^{(i)}(t)=0$ and $v_S^{(i)}(t)=1$ for all $t$, and $u_I^{(i)}(t) + u_S^{(i)}(t)=1$, then
		\begin{align}\label{eq: norm12}
			||\underline{u}_i(t) - \underline{v}_i(t)||_1 = |u_I^{(i)}(t)-v_I^{(i)}(t)| + |u_S^{(i)}(t)-v_S^{(i)}(t)| =2|u_I^{(i)}(t)-v_I^{(i)}(t)|.
		\end{align}
		From (\ref{eq: wuN-1N2}), 
		\begin{align*}
			&||\underline{u}_i(t) - \underline{v}_i(t)||_1 = 2|u_I^{(i)}(t)-v_I^{(i)}(t)|\geq 2|u_I^{(i)}(t-1) + \frac{N-1}{N^3} - v_I^{(i)}(t-1)| \\
			&\geq 2|u_I^{(i)}(t-1)-v_I^{(i)}(t-1)| + \frac{2(N-1)}{N^3} = ||\underline{u}_i(t-1) - \underline{v}_i(t-1)||_1 + \frac{2(N-1)}{N^3}.
		\end{align*}
		\item [(iii)] If $i\in\mathcal{S}_{22}(t)$, then node $i$ is not tested on day $t$, thus from (\ref{eq: u www}), $u_I^{(i)}(t)\geq w_I^{(i)}(t-1)=1-1/N$. Thus, by (\ref{eq: norm12}), $||\underline{u}_i(t) - \underline{v}_i(t)||_1\geq \frac{2(N-1)}{N}$.
	\end{itemize}
	Since we consider $N$ sufficiently large, then we can prove the following lemma.
	\begin{lemma}\label{lem: emptyS}
		$\lim_{t\to\infty}\mathcal{S}_{21}(t)=\varnothing$.
	\end{lemma}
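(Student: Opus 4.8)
The plan is to show that the ``intermediate band'' $\mathcal{S}_{21}(t)$ cannot retain any node in the limit, because under the forward-only update of Remark~\ref{remark1} every estimate is driven either to $0$ (by a test) or upward toward $1$ (by the spread term in (\ref{eq: example-u_I})), and a single daily test cannot hold more than one node away from the value $1$. First I would record a \emph{flooding} property implicit in (\ref{eq: example-u_I}): if node $i$ is not tested on day $t$ and some neighbour $j$ satisfies $w_I^{(j)}(t-1)\geq 1-1/N$, then $u_I^{(i)}(t)=1-\prod_{m\in\partial_i(t-1)}(1-w_I^{(m)}(t-1))\geq 1-(1-w_I^{(j)}(t-1))\geq 1-1/N$, so $w_I^{(i)}(t)=u_I^{(i)}(t)\geq 1-1/N$, i.e.\ $i\in\mathcal{S}_{22}(t+1)$. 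Thus a value in the top band $[1-1/N,1]$ is contagious to every untested neighbour in a single step.

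Second, I would use Fact~2 to produce a permanent source of top-band nodes. Since $B(t)=1$, on every day all but one node are untested; hence any node tested only finitely often satisfies $u_I^{(i)}(t)\to 1$, by applying the recursion in the proof of Fact~2 from its last test onward. In particular such a node eventually lies in $\mathcal{S}_{22}(t)$ and therefore not in $\mathcal{S}_{21}(t)$. Combining this with the flooding property, the top band spreads along the line to every neighbour that the single daily test fails to reset.

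Third, I would close with the budget bottleneck together with forced climbing. On the line each node has at most two neighbours, whereas only one node is reset per day, so for any node at least one neighbour is untested on any given day; by flooding a node can avoid the top band only on a day on which it is itself tested, where $w_I^{(i)}=0<1/N$, so it is the excluded node $i_0$ rather than a member of $\mathcal{S}_{21}$. Meanwhile the bound (\ref{eq: wuN-1N2}) (via Fact~3) shows that a node sitting in $\mathcal{S}_{21}$ and left untested strictly increases its estimate by at least $\tfrac{N-1}{N^3}$ per day, hence crosses $1-1/N$ into $\mathcal{S}_{22}$ within a finite (in $N$) number of days unless it is tested again. Putting these together, for $t$ large every node is either in $\mathcal{S}_{22}(t)$ or is the unique node tested on day $t-1$, whence $\mathcal{S}_{21}(t)=\varnothing$.

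The main obstacle is ruling out adversarial policies that test a cluster of nodes infinitely often in order to keep those nodes, together with the neighbours that shield them, permanently in the middle band. The resolution is precisely the interaction of the three ingredients above: forced climbing means keeping a node low requires testing it repeatedly, while flooding from the source guaranteed by Fact~2 means each such node also needs \emph{all} its neighbours held low; on a connected line this demand propagates outward and would require strictly more than one reset per day, contradicting $B(t)=1$. I would make ``propagates outward'' precise by writing the complement of the top band as a union of intervals and showing that with a single daily test at least one interval endpoint floods each day, so that the complement (apart from the current test) shrinks to the empty set.
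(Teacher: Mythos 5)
Your two working ingredients --- the flooding property (the paper's Claim~1) and the forced climbing from Fact~3 / (\ref{eq: wuN-1N2}) (the paper's Claim~2) --- are exactly the ones the paper's proof uses, and your closing picture of a spreading top band that a single daily test cannot contain is in the right spirit. The genuine gap is in how you \emph{seed} the top band. You obtain a permanent source only for nodes tested finitely often, via Fact~2; but an arbitrary policy (e.g., round-robin) tests every node infinitely often, in which case no such node exists and your flooding argument has nothing to flood from. Your assertion that ``a node can avoid the top band only on a day on which it is itself tested'' is false when none of its neighbours is yet in the top band, so as written the argument is circular: it presupposes a nonempty, spreading top band in order to get started. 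To close this you need a policy-independent reason why some node reaches $w_I^{(i)}(t)\ge 1-1/N$ and the top band stays nonempty thereafter. One way: by (\ref{eq: u www}), $\max_i w_I^{(i)}(t)$ is non-decreasing (if the maximizer is the tested node, its untested neighbour inherits at least its value the next day), and by Fact~3 it increases by at least $(N-1)/N^3$ per day while it remains below $1-1/N$. The paper sidesteps the need for a separate source by running the contradiction locally on the node $j$ assumed to lie in $\mathcal{S}_{21}(t_i)$ for arbitrarily large $t_i$: Claim~2 forces $j$ itself to leave the middle band repeatedly, and the proof then analyses the only ways it can re-enter.

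A secondary soft spot is your plan to show that ``at least one interval endpoint floods each day.'' This fails when the complement of the top band is a single interval touching an end of the line: it then has only one endpoint adjacent to the top band, and the single daily test can guard that endpoint forever. In that regime the interval must be emptied from the inside (the guarding configuration leaves the interior untested forever, so Fact~2 does apply there), and that case needs to be stitched into the interval bookkeeping explicitly. Note also that the tested node itself drops out of the top band for a day (its posterior is reset to $0$), so the complement need not shrink monotonically; the ``apart from the current test'' caveat has to be made precise rather than left as an aside.
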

	{\it Proof}. 
	We first prove the following Claims.
	
	{\bf Claim~1}. If (i) $u_I^{(i-1)}(t)\geq 1-1/N$ and node $i-1$ is not tested on day $t$, or (ii) $u_I^{(i+1)}(t)\geq 1-1/N$ and node $i+1$ is not tested on day $t$, or (iii) $u_I^{(i-1)}(t)\geq 1-1/N$ and $u_I^{(i+1)}(t)\geq 1-1/N$, then $u_I^{(i)}(t+1)\geq 1-1/N$.
	
	{\bf Proof}. By (\ref{eq: example-u_I}) and (\ref{eq: no test w=u}), we can derive $u_I^{(i)}(t+1)\geq 1-1/N$ directly.\qed
	
	{\bf Claim~2}.  No node can stay in $\mathcal{S}_{21}(t)$ for successive $\left \lceil{N^3/(N-1)}\right \rceil$ days.
	
	{\bf Proof}. if node $i$ stays in $\mathcal{S}_{21}(t)$ for successive $\left \lceil{N^3/(N-1)}\right \rceil$ days, i.e., from day $\tau$ to day $\tau+\left \lceil{N^3/(N-1)}\right \rceil$, then by  (\ref{eq: wuN-1N2}), $u_I^{(i)}(\tau+\left \lceil{N^3/(N-1)}\right \rceil)>1$, which contradicts with $u_I^{(i)}(t)\leq1$ for all $t$.\qed
	
	Now, we prove the lemma by contradiction. Based on {\bf Claim~2}, assume  there exists at least one $j$ and an increasing sequence $\{t_i\}_{i=0}^{\infty}$ with $\lim_{n\to\infty}t_n=\infty$, such that  $j\in\mathcal{S}_{21}(t_i)$ for all $\{t_i\}_{i=0}^{\infty}$. 
	
	For some $i$,  node $j$ is in $\mathcal{S}_{22}(t_i-1)$ on day $t_i-1$, and  node $j$ is in $\mathcal{S}_{21}(t_i)$ on day $t_i$. In other words, $u_I^{(j)}(t_i)<1-1/N\leq u_I^{(j)}(t_i-1)$.
	From (\ref{eq: example-u_I}) and  {\bf Calim~1}, $u_I^{(j)}(t_i)<1-1/N\leq u_I^{(j)}(t_i-1)$ holds only because node $j$ is tested on day $t_i-1$, and all of its neighbors (i.e., nodes $j-1$, $j+1$) have $u_I^{(j-1)}(t_i-1)<1-1/N$ and $u_I^{(j+1)}(t_i-1)<1-1/N$. However, since $u_I^{(j)}(t_i-1)\geq 1-1/N$ and node $j$ is tested on day $t_i-1$, then  by {\bf Claim~1}, $u_I^{(j-1)}(t_i)\geq 1-1/N$ and $u_I^{(j+1)}(t_i)\geq 1-1/N$. Subsequently, by {\bf Claim~1}, we have $u_I^{(j)}(t_i+1)\geq1 - 1/N$. Thus, on day $t_i+1$, at least one of its neighbors, say $j-1$, has $u_I^{(j-1)}(t_i+1)\geq1 - 1/N$. By {\bf Claim~1}, node $j$ never fall into $\mathcal{S}_{21}(t)$ for $t\in\{t_{i+1}, t_{i+2}, \cdots\}$, which contradicts with the assumption.\qed

	From Lemma~\ref{lem: emptyS}, when $t\to\infty$, we have $|\mathcal{S}_1(t)| = \Theta(N)$ or $|\mathcal{S}_{22}(t)| = \Theta(N)$. Thus, 
	$\sum_{i=1}^{N}||\underline{u}_i(t) - \underline{v}_i(t)||_1 = \Theta(N)$.
	
	\noindent{\bf Step~3}: Consider the computation of $\{\underline{u}_i(t)\}_{i}$  based on Algorithm~\ref{alg: Backward Update}.
	
	In this step, we consider a specific testing policy: We test node~$i$ on day~$k$, where $k \equiv i-1 (mod\,\,M)$ for all $1\leq i\leq M$.

	In event $\mathcal{I}_2$, since the transmission probability $\beta=1$, then all nodes are infected at most $N$ days because there is no recovery. Thus, no node with positive testing result is repeatedly tested. So in at most $2N$ days, all nodes are infectious, and the algorithm finds all infected nodes, so $\underline{u}_i(t) = \underline{v}_i(t)$, $t \geq 2N$.

	In event $\mathcal{I}_1$,  whenever a node is tested, it is negative.
	Node~$1$ is tested on day $0$, the result is negative. On day $1$, node $2$ is tested and the result is negative. By backward updating, since $\beta=1$ and no recovery, then nodes $1\&3$ are inferred to be in state $S$ on day $0$. Since node $2$ is in state $S$ on day $1$. Then, node $1$ is inferred in state $S$ on days $0$ and $1$. 
	
	Assume that nodes $1,2,\cdots, k-2$ are inferred to be in state $S$ by day $k-1$. Now, we day $k$, where $k\leq N$. On day $k$, node $k-1$ is tested negative, hence by backward updating, nodes $k-2$ and $k$ are inferred to be in state $S$ on day $k-1$. By the testing result of node $k-1$ on day $k$,  nodes $1,2,\cdots,k-1$ are inferred in state $S$ by day $k$. By induction, after $N$ days, it clears every node, so $\underline{u}_i(t) = \underline{v}_i(t)$, $t \geq N$.

	From {\bf Steps 1$\sim$3}, we complete the proof.

	\section{$\alpha$-linking Backward Updating}\label{App: Reduction of Complexity}
	
	\subsection{Complexity Reduction}
	Let  $\{x_j\}_{j\in\mathcal{O}(t)}$ be a realization of $\{\theta_j(t)\}_{j\in\mathcal{O}(t)}$ and  $\{y_l\}_{l\in\Theta_i(t)}$  be a realization of $\{\zeta_l(t-1)\}_{l\in\Theta_i(t)}$. Let node~$i$ have state $x$ in day $t-1$. Consider one node $k\in\partial^+_j(t-1)\backslash\{i\}$ and the probability $$\Pr\big(x_j|\{y_l\}_{l\in\partial^+_j(t-1)\backslash\{i\}}, x\big), j\in\Psi_i(t).$$
	Since node $k$ is not infectious if $y_k=L$, $y_k=R$ or $y_k=S$, then the probability above remians the same no matter whether $y_k=L$, $y_k=R$ or $y_k=S$.
	
	Thus, we introduce a new state, denoted by $E$, to be a replacement of $\{L, R, S\}$, and 
	\begin{align*}
		\Pr\big(y_k=E\big) = \sum_{x\in \{L, R, S\}}\Pr\big(y_k=x\big).
	\end{align*}
	Next, denote $\mathcal{X}' = \{I, E\}$. Equation (\ref{eq: fenzi34}) can be re-written as follows:
	\begin{equation}\label{eq: fenzi35}
		\begin{aligned}
			&\Pr\big(\{Y_j(t)\}_{j\in\Psi_i(t)}|\zeta_i(t-1) = x\big)\\
			&=\sum_{\{x_j\}_{j\in\Psi_i(t)}}\prod_{j\in \Psi_i(t)}\Pr\big(Y_j(t)|\theta_j(t)\big) \times\sum_{\{y_l\}_{l\in\Theta_i(t)}}\prod_{j\in\Psi_i(t)}{\tt P}_j\big(x_j|\{y_l\}_{l\in\partial^+_j(t-1)\backslash\{i\}},x\big)\\
			&\times\prod_{z_l\in\mathcal{X}',l\in\Theta_i(t)}\Pr\big(\zeta_l(t-1)=z_l\big),
		\end{aligned}
	\end{equation}
	with reduces  the computation complexity. 
	Subsequently, $\underline{e}_i(t-1)$ in (\ref{eq: chain rule of e2}) can be calculated by (\ref{eq: fenzi35}) directly.
	
	\subsection{$\alpha$-linking Backward Updating}
	In the backward step, the computation complexity is large even in (\ref{eq: fenzi35}). To further reduce the complexity in (\ref{eq: fenzi35}), one way is to update the posterior probability $\underline{e}_i(t)$ in a sparser network. Now, we define {\it $\alpha$-linking Backward Updating} as follows:
	\begin{itemize}
		\item [(i)] We generate a subgraph $\mathcal{G}_\alpha(t)$ based on the pre-determined graph $\mathcal{G}(t)$: Suppose that each edge (in $\mathcal{G}(t)$) exists with probability $\alpha$, $0\leq\alpha\leq1$. If $\alpha=1$, then $\mathcal{G}_\alpha(t) = \mathcal{G}(t)$; if $\alpha=0$, then $\mathcal{G}_\alpha(t)$ is a graph with no edges.   
		\item [(ii)] Backward updating in $\mathcal{G}_\alpha(t)$: Similar with $\partial_{i}(t)$, $\Psi_{i}(t)$, $\Phi_i(t)$ and $\Theta_{i}(t)$, we define $\partial_{i,\alpha}(t)$, $\Psi_{i,\alpha}(t)$, $\Phi_{i,\alpha}(t)$ and $\Theta_{i,\alpha}(t)$ on graph $\mathcal{G}_\alpha(t)$, respectively. Subsequently, replace $\partial_{i}(t)$, $\Psi_{i}$, $\Phi_i(t)$ and $\Theta_{i}(t)$ by $\partial_{i,\alpha}(t)$, $\Psi_{i,\alpha}(t)$, $\Phi_{i,\alpha}(t)$ and $\Theta_{i,\alpha}(t)$ in (\ref{eq: fenzi35}), respectively.
	\end{itemize}

	\section{Proof of Theorem ~\ref{theorem:explor}}\label{example:explor}
	
	\noindent{\bf Step~1}. Preliminaries.

	In  Example~\ref{ex: effect of prob}, $\beta=1$, $\lambda=0$, and $\gamma=0$, there is no recovery and we assume no latent state. 	 Based on (\ref{eq: reward r}), the expression of rewards $\hat{r}_i(t)$ for every node is given as follows. If node $i$ has two neighbors (without quarantine)
	\begin{equation}\label{eq: two neighbors}
		\begin{aligned}
			\hat{r}_i(t) =& u_S^{(i-1)}(t)(1-u_I^{(i-2)}(t))u_I^{(i)}(t)+u_S^{(i+1)}(t)(1-u_I^{(i+2)}(t))u_I^{(i)}(t).
		\end{aligned}
	\end{equation}
	If node $i$ only has one neighbor, then
	\begin{equation}\label{eq: one neighbor}
		\begin{aligned}
			\hat{r}_i(t) =& u_S^{(i+d)}(t)\big(1-u_I^{(i+2d)}(t)\big)u_I^{(i)}(t),\, d\in\{-1,1\}.
		\end{aligned}
	\end{equation} 
	For simplicity, we introduce artificial nodes $-1, 0, N+1, N+2$ with $u_I^{(-1)}(t) = u_I^{(0)}(t) = u_I^{(N+1)}(t) =u_I^{(N+2)}(t)=0$ for all $t$, and these $4$ nodes are never tested.

	\noindent{\bf Step~2}. The RbEx policy.
	
	Under the RbEx policy, the algorithm always tests the nodes with maximum rewards. Let an infectious node be found, for the first time, on day $aN$, where $a$ is a positive real number. Note that until the first infected node is found, in any application of the RbEx policy, $u_I^{(i)}(t)$ is the same for any given $i$, and hence $\hat{r}_i(t)$ is also the same. So, $a$ is the same for any application of the RbEx policy. Recall that in Example~\ref{ex: effect of prob}, nodes that are tested positive will be isolated. The cumulative infections is at least $\min\{aN, N\}$ in the end.

	\noindent{\bf Step~3}. Consider the exploration process of the specific exploration policy.
	
	Recall that  from {\bf Step~2}, an infectious node is found, for the first time, by the RbEx policy with budget $10$ tests on day $aN$. Under the specific defined exploration policy, we can choose a specific $b'$ with $b'<a$,  such that no infectious node is tested by the RbEx policy with budget $9$ tests before and including day $t=b'N$. 
	
	We know that on day $\tau$, nodes $1,2,\cdots, \tau$ are infectious since $\beta=1$.  Note that one test is applied to exploration (randomly choice) on every day, so with probability
	\begin{align}\label{eq: no infectious}
		\prod_{\tau'=1}^{\tau}(1-\frac{\tau'}{N}),
	\end{align}
	no infectious node is explored from the initial day to day $\tau$. 
	Then,
	with probability
	\begin{align*}
		\prod_{\tau'=1}^{\tau-1}(1-\frac{\tau'}{N})\cdot\frac{\tau}{N},
	\end{align*}
	one infectious node is detected on day $\tau$. Thus, with probability
	\begin{align}\label{eq: no infectious prob}
		\sum_{\tau=1}^{t}\prod_{\tau'=1}^{\tau-1}(1-\frac{\tau'}{N})\cdot\frac{\tau-1}{N},
	\end{align}
	one infectious node is tested by exploration process on day $\tau$ ($\tau\leq t$), and this node is not the new infectious one on day $\tau$, i.e., has index $\tau$.
	The probability defined in (\ref{eq: no infectious prob}) increases with $t$ when $N$ is fixed, and it can be close to $1$ when $t$ close to $N$.
	Therefore,
	We can choose proper parameters $b'$ and $N$ such that the probability defined in (\ref{eq: no infectious prob}) is larger than or equal to $p_0$. In particular, if $N$ is large, we can choose a relatively small $b'$. In  Theorem~\ref{theorem:explor}, we set $p_0\geq 99/100$.

	Let the infectious node detected (for the first time) by the exploration process have index $j$ on day $t'$, where $t' \leq t$. As discussed above, node $j$ is not the new infectious node on day $t'$, so we have $j<t'$. In other words, node $j+1$ must be infecitous on day $t'$ with a positive test result, i.e., $Y_j(t') = 1$. By Step~1 in Algorithm~\ref{alg: Backward Update}, the updated posterior probability of node $j$ 
	\begin{align}\label{eq: e j}
		e_{I}^{(j)}(t'-1) = 1,\quad e_S^{(j)}(t'-1) = 0.
	\end{align}
	Again, by Step~1 in Algorithm~\ref{alg: Backward Update},
	\begin{align}\label{eq: w j}
		w_{I}^{(j-1)}(t') =  w_I^{(j+1)}(t') = 1.
	\end{align}
	Then, by Step~2 in Algorithm~\ref{alg: Backward Update},
	\begin{align}\label{eq: u j}
		u_{I}^{(j-2)}(t'+1) = u_{I}^{(j-1)}(t'+1) =u_I^{(j)}(t'+1)=u_{I}^{(j+1)}(t'+1) =u_{I}^{(j+2)}(t'+1) = 1.
	\end{align}
	Since $j$ is detected and isolated on day $t'$, then,
	\begin{equation}\label{eq: rewards j1}
		\begin{aligned}
			\hat{r}_j(t'+1) = 0.
		\end{aligned}
	\end{equation}
	By (\ref{eq: one neighbor}) and (\ref{eq: u j}), 
	\begin{equation}\label{eq: rewards j2}
		\begin{aligned}
			&\hat{r}_{j-1}(t'+1) =u_S^{(j-2)}(t'+1)\big(1-u_I^{(j-3)}(t'+1)\big)=0\\
			&\hat{r}_{j+1}(t'+1) = u_S^{(j+2)}(t'+1)\big(1-u_I^{(j+3)}(t'+1)\big)=0.
		\end{aligned}
	\end{equation}
	By (\ref{eq: two neighbors}) and (\ref{eq: u j}), 
	\begin{equation}\label{eq: rewards j3}
		\begin{aligned}
			&\hat{r}_{j-2}(t'+1) =u_S^{(j-3)}(t'+1)\big(1-u_I^{(j-4)}(t'+1)\big)\\
			&\hat{r}_{j+2}(t'+1) = u_S^{(j+3)}(t'+1)\big(1-u_I^{(j+4)}(t'+1)\big).
		\end{aligned}
	\end{equation}

	\noindent{\bf Step~4}.The exploitation 
	process of the specific exploration policy.
	
	We first study an extreme case where no tests are applied. In this case, denote the prior probability of node $i$ on day $\tau$ as $U_I^{(i)}(\tau)$, which can be calculated by the following recursion:
	\begin{align}\label{eq: max recursion}
		U_I^{(i)}(\tau+1) =  U_I^{(i)}(\tau) + U_S^{(i)}(\tau)\big(1 - (1-U_I^{(i-1)}(\tau))(1-U_I^{(i+1)}(\tau))\big).
	\end{align}
	Based on (\ref{eq: max recursion}), recall that $U_I^{(i)}(0)=0$ if $i\leq \frac{9N}{10}$, and $U_I^{(i)}(0)=\frac{10\epsilon}{N}$ if $\frac{9N}{10}< i\leq N$, then $U_I^{(i)}(\tau)$ increases over $\tau$ and is a function of $\epsilon$.  Then, given $b'$, $N$ and $t=b'N$, we can choose a small enough $\epsilon$, denoted by $\epsilon(b', N)$, such that $U_I^{(i)}(2t)<\frac{1}{2}$ for all $i$. Since $U_I^{(i)}(\tau)$ increases over $\tau$, then
	$U_I^{(i)}(\tau)<\frac{1}{2},\,\tau\leq 2t$.

	Now, we introduce the exploitation process. Let $t=b'N<\min\{\frac{9}{40}, a\}N$. There are at most $2t$ infectious nodes on day $2t$, i.e., nodes~$1,2,\cdots,2t$. Since $t<\min\{\frac{9}{40}, a\}N$, then nodes with index from  $9N/10-2t$ to $N$ are in state $S$, which implies nodes with index from  $9N/10-2t$ to $N$ can never be tested positive before day $2t$. Thus, on any day $\tau\leq 2t$, for $9N/10-2t\leq i\leq N$, if node $i$ is tested, and the testing result is negative. Recall that $U_I^{(i)}(\tau)$ in (\ref{eq: max recursion}) is calculated without any negative testing results. Hence, $u_I^{(i)}(\tau)\leq U_I^{(i)}(\tau)$. Furthermore, with the condition $t=b'N<\min\{\frac{9}{40}, a\}N$, we can find a small enough $\epsilon(b', N)$, such that under the specific exploration policy,
	\begin{align}\label{eq: increase over t r}
		u_I^{(i)}(\tau)<\frac{1}{2},\quad \tau\leq 2t,\,\, 9N/10-2t\leq i\leq N.
	\end{align}

	In the rest, we divide the nodes in to $3$ sets: $\mathcal{Q}_1 = \{i| i\leq 2t\}$, $\mathcal{Q}_2 = \{i| 2t<i<9N/10-2t\}$, and $\mathcal{Q}_3 = \{i| 9N/10-2t\leq i\leq N\}$.

	{\bf Fact~1}. For $i\in\mathcal{Q}_1$ and $\tau\leq 2t$,  $u_I^{(i)}(\tau)=1$ or $u_I^{(i)}(\tau)=0$.
	
	{\it Proof}.  If no test is applied to $\mathcal{Q}_1$, then $u_I^{(i)}(\tau)=0$ for all $i\in\mathcal{Q}_1$. 
	
	On some day $\tau\leq 2t$, if one node with index $j\in \mathcal{Q}_1$ is tested positive on day $\tau-1$, then by  (\ref{eq: u j}), $u_{I}^{(j-2)}(\tau) = u_{I}^{(j-1)}(\tau) =u_I^{(j)}(\tau)=u_{I}^{(j+1)}(\tau) =u_{I}^{(j+2)}(\tau) = 1$. In other words, if node $j$ is tested positive on day $\tau-1$, then node $j$, its neighbors and neighbors of neighbors have probability of infection equal to $1$ on day $\tau$.
	
	On some day $\tau$, if node $j$ is not tested positive on day $\tau-1$, and neither of its neighbors and neighbors of neighbors are is not tested positive, then $u_I^{(j)}(\tau)=1$ only when $u_I^{(j)}(\tau-1)=1$, or $u_I^{(j-1)}(\tau-1)=1$ or $u_I^{(j+1)}(\tau-1)=1$ since $\beta=1$. Otherwise 
	$u_I^{(j)}(\tau)=0$.\qed

	{\bf Fact~2}. For $i\in\mathcal{Q}_1$ and $\tau\leq 2t$, 
	$\hat{r}_i(\tau)=1$ or $\hat{r}_i(\tau)=0$. 
	
	{\it Proof}. If $u_I^{(i)}(\tau)=0$, then $\hat{r}_i(\tau)=0$ by (\ref{eq: two neighbors}) and (\ref{eq: one neighbor}). 
	
	Now, we consider $u_I^{(i)}(\tau)=1$ in the following cases:
	(i) If both neighbors of node $i$ are isolated, then $\hat{r}_i(\tau)=0$. (ii) If one of neighbors of node $i$ (for example, node $i-1$) is isolated, then by (\ref{eq: one neighbor}), $\hat{r}_i(\tau)=0$ when $u_I^{(i+1)}(\tau)=1$, and $\hat{r}_i(\tau)=1$ when $u_I^{(i+1)}(\tau)=0$. (iii) If both neighbors are not isolated, then $u_I^{(i-1)}(\tau-1)=1$ or $u_I^{(i+1)}(\tau-1)=1$, otherwise, $u_I^{(i)}(\tau)=0$. Since there is no recovery, then $u_I^{(i-1)}(\tau)=1$ or $u_I^{(i+1)}(\tau)=1$.
	By {\bf Fact~1}, $u_I^{(j)}(\tau)=1$ or $u_I^{(j)}(\tau)=0$ when $j\in\mathcal{Q}_1$. If $u_I^{(i+1)}(\tau)=u_I^{(i-1)}(\tau)=1$, then 
	$\hat{r}_i(\tau)=0$. If $u_I^{(i+1)}(\tau)=1$, then by (\ref{eq: two neighbors}), $\hat{r}_i(\tau)=0$ when $u_I^{(i-2)}(\tau)=1$, $\hat{r}_i(\tau)=1$ when $u_I^{(i-2)}(\tau)=0$. If $u_I^{(i-1)}(\tau)=1$, then by (\ref{eq: two neighbors}), $\hat{r}_i(\tau)=0$ when $u_I^{(i+2)}(\tau)=1$, $\hat{r}_i(\tau)=1$ when $u_I^{(i+2)}(\tau)=0$. \qed

	From (\ref{eq: increase over t r}), for all $\tau\leq 2t$ and $i\in\mathcal{Q}_3$, we have 
	\begin{align}\label{eq: smallerthan1}
		\hat{r}_i(\tau)\leq 2u_I^{(i)}(\tau)<1.
	\end{align}
	Note that only nodes in $\mathcal{Q}_1$ and $\mathcal{Q}_3$ may have positive probability of infection. For all $\tau\leq 2t$ and $i\in\mathcal{Q}_2$,  since $u_I^{(i)}(\tau)=0$, then $\hat{r}_i(\tau)=0$. Therefore, a node with reward equal to $1$ has the largest reward.

	Recall that on day $t'$, node $j$ is tested positive. From (\ref{eq: rewards j3}), nodes $j-2$ and $j+2$ have largest rewards ($=1$) on day $t'+1$, which are exploited on day $t'+1$, and all other nodes in $\mathcal{Q}_1$ have rewards $0$. This is because  $t^\prime$ is the first day when a positive node is found. Since node $j$ is tested positive and isolated on day $t'$, then all infectious nodes with index less than $j$ can no longer infect other nodes in the line network. Now, we consider the nodes with index larger than $j$. Recall that $j<t'$, so node $j+1$ must be infectious on day $t'$, and node $j+2$ must be infectious on day $t'+1$ since $\beta=1$. Thus, node $j+2$ is tested positive and is isolated. Since the network is a line,  both nodes $j+1$ and $j+2$ can no longer infect other nodes once node $j+2$ is isolated. Note that nodes in $\mathcal{Q}_3$ have positive rewards. When $N$ is sufficiently large, in the rest of the exploitation process, nodes in $\mathcal{Q}_3$ are tested. Recall that we have one test for exploration, and we can isolate at least $2$ infectious nodes with index larger than $j$. 
	
	Repeat the process, we exploit nodes $j+4, j+6, \cdots$ on day $t'+2, t'+3,\cdots$, respectively. Consider the direction from node $1$ to node $N$. On every day, there is at most one new infectious node, but at least two infectious nodes can be isolated. On some day, denote as day $t'+x$, 
	the exploitation process can progress beyond the infections (exceeding by one node) for the first time. In other words, node $j+2x$ is tested negative on day $t'+x$. By Step~2 in Algorithm~\ref{alg: Backward Update}, $e_I^{(j+2x)}(t'+x-1)=0$. However, since $w_I^{(j+2x-1)}(t'+x)=1$ becuase node $j+2x-2$ is tested positive on day $t'+x-1$. By Step~1 in Algorithm~\ref{alg: Backward Update}, $u_I^{(j+2x)}(t'+x+1)=1$, hence by (\ref{eq: rewards j3}), $\hat{r}_{j+2x}(t'+x+1)=1$, which implies node $j+2x$ has the largest reward and is exploited on day $t'+x+1$, and it will be tested positive. On day $t'+x+1$, all infectious nodes are isolated. 
	
	Finally, we can calculate the total number of infections to be
	\begin{align*}
		j + 2(t'-j) = 2t'-j \leq 2t'\leq 2t = 2b'N.
	\end{align*}
	Let $b=2b'$. This is an improvement by a factor of at least $\frac{a}{b}$ in comparison to the RbEx strategy, where $\frac{a}{b}$ can be as large as desired by increasing the value of $N$ or decreasing $p_0$.

	\section{Construction of Networks and Further Results}\label{sec: Further Results and Real-Data Networks}
	
	\subsection{Constructions of SBM and V-SBM}\label{App: simulations SBM V-SBM}
	
	In this section, we construct SBMs and its variants.

	\paragraph{SBM}
	
	The SBM is a generative model for random graphs. The graph is divided into several communities, and subsets of nodes are characterized by being connected with one another with particular edge densities.\footnote{Here, we assume that $M$ is an exact divisor of $N$.}  The intra-connection probability is $p_1$, and inter-connection probability is $p_2$. We denote the SBM as SBM$(N, M, p_1, p_2)$.  Note that the (expected) number of edges, denoted by $|\mathcal{E}|$, is
	\begin{align}\label{eq: number of edge}
		|\mathcal{E}| = \frac{p_1}{2}N(\frac{N}{M}-1) + \frac{p_2}{2}\frac{N^2}{M}(M-1).
	\end{align}
	Now, we fix $|\mathcal{E}|$, and choose the pair $(p_1, p_2)$ under a fixed $|\mathcal{E}|$ in (\ref{eq: number of edge}). The aim of fixing $|\mathcal{E}|$ is to guarantee that the transmission of the disease would not be affected by edges.

	\paragraph{V-SBM}
	
	Now, we consider a variant of SBM, denoted by V-SBM.  Different from SBM, we only allow nodes in cluster~$i$ to connect to nodes in successive clusters (the neighbor clusters).  Denote the V-SBM as V-SBM$(N, M, p_1, p_2)$. Similarly, the expected number of edges, denoted by $|\mathcal{E}|$, is
	\begin{align}\label{eq: number of edge1}
		|\mathcal{E}| = \frac{p_1}{2}N(\frac{N}{M}-1) + p_2\frac{N^2}{M}.
	\end{align}
	Now, we fix $|\mathcal{E}|$, and choose the pair $(p_1, p_2)$ under a fixed $|\mathcal{E}|$ in (\ref{eq: number of edge1}). The aim of fixing $|\mathcal{E}|$ is to guarantee that the transmission of the disease would not be affected by edges.

	\subsection{The impact of $\gamma_c$ and $L_p$ individually}\label{App: The impact of parameters individually}
	In this subsection, we investigate the role of $\gamma_c$ and $L_p$ individually, not through the common factor $\delta$. We consider different WS networks with degrees $d=4,6$, and then adjust the rewiring probability $\delta$, such that one of $(\gamma_c, L_p)$ is almost constant, and the other is varying. We can see that the trend is similar to what we observed by varying $\delta$ in Table~\ref{tabla: WS clustering coefficient}.

	\begin{table}
		\centering
		\begin{tabular}{|c|c|c|c|c|c|}
			\hline  
			WS, $(d, \delta)$&  $\gamma_c$ &  $L_p$ & ${\tt Ratio}$\\
			\hline
			$(6, 0.05)$& {\bf 0.504}&$4.952$ &$.0003$ \\
			\hline
			$(4,0)$& {\bf 0.500}& $62.876$& $0.191$ \\
			\hline  		
			$(6,0.1)$ &{\bf 0.456} & $5.718$ & $-0.027$\\
			\hline
			$(4,0.03)$ & {\bf 0.456} & $10.810$ & $0.097$ \\
			\hline
		\end{tabular}
		\caption{Clustering coefficients of WS networks}\label{tabla: fixed clustering coefficient}
	\end{table}

	\begin{table}
		\centering
		\begin{tabular}{|c|c|c|c|c|}
			\hline  
			WS, $(d, \delta)$&  $\gamma_c$ &  $L_p$ &${\tt Ratio}$ \\
			\hline
			$(6,.001)$& $0.599$&{\bf 21.188} & 0.209\\
			\hline
			$(4,.0075)$& $0.489$& {\bf 21.264}& $0.182$\\
			\hline  		
			$(6,.005)$ &$0.592$ & {\bf 14.310}& $0.211$\\
			\hline
			$(4,.015)$ & $0.473$ & {\bf 14.253} & $0.174$\\
			\hline		
			$(6,.009)$ &$0.585$ & {\bf 12.081}& $0.137$\\
			\hline
			$(4,.0225)$ & $0.467$ & {\bf 12.171} & $0.125$\\
			\hline
		\end{tabular}
		\caption{Clustering coefficients of WS networks}\label{tabla: fixed path length}
	\end{table}

\end{document}